\crefname{lemma}{Lemma}{Lemmas}
\crefname{definition}{Definition}{Definitions}
\renewcommand\cite[1]{\citep{#1}}
\begin{document}

\title{\papertitle}
\author{
    Alexander Soen\textsuperscript{\dag, $\diamond$} \quad
    Hisham Husain\textsuperscript{$\diamond$} \quad
    Richard Nock\textsuperscript{\ddag, \dag} \\
    \\[1pt]
    \small{
    Australian National University\textsuperscript{\dag} \quad 
    Amazon\textsuperscript{$\diamond$} \quad
    Google Research\textsuperscript{\ddag}
    }
}

\date{}
\maketitle

\begin{abstract}
    We introduce a boosting algorithm to pre-process data for fairness. 
    Starting from an initial fair but inaccurate distribution, our approach shifts towards better data fitting while still ensuring a minimal fairness guarantee. To do so, it learns the sufficient statistics of an exponential family with boosting-compliant convergence.
    Importantly,
    we are able to theoretically prove that the learned distribution will have a \emph{representation rate} and \emph{statistical rate} data fairness guarantee. Unlike recent optimization based pre-processing methods, our approach can be easily adapted for continuous domain features. Furthermore, when the weak learners are specified to be decision trees, the sufficient statistics of the learned distribution can be examined to provide clues on sources of (un)fairness. Empirical results are present to display the quality of result on real-world data.
\end{abstract}
\vspace{3pt}
\section{Introduction}
\label{sec:introduction}

It is hard to exaggerate the importance that fairness has now taken within Machine Learning (ML) \citep{cwvrvOP,ckvDP,wmFR} (and references therein). ML being a data processing field, a common upstream source of biases leading to downstream discrimination is the data itself \citep{czWU,kmmUR,de2019bias}. Targeting data is especially important when the downstream use-case is unknown. As such, pre-processing algorithm have been introduced to debias data and mitigate unfairness in potential downstream tasks. These approaches commonly target two sources of (un)fairness \citep{ckvDP}: \ding{172} the balance of different social groups across the dataset; and \ding{173} the balance of positive outcomes between different social groups. 

Recently, optimization based approaches have been proposed to directly learn debias distributions~\citep{cwvrvOP,ckvDP}. 
Such approaches share two drawbacks: tuning parameters to guarantee realizable solutions or fairness guarantees can be tedious, and the distribution learned has the same \textit{discrete} support as the training sample. This latter drawback inherently prevents extending the distribution (support) and its guarantees (fairness) "beyond the training sample's".
On the other hand, Generative Adversarial Networks (GANs) based approaches have been leveraged to learn fair distributions \citep{cbmdssGM,xyzwFF,rgTF}. These can naturally be adapted to continuous spaces. Unfortunately, in discrete spaces rounding or domain transformations need to be used.
Furthermore, these approaches often lack any theoretical fairness guarantees; or requires casual structures to be known \citep{vkbvDG}. In addition, the fairness of GAN based approaches are difficult to estimate a prior to training, where fairness is achieved via a regularization penalty or separate training rounds.
A shared downside of optimization and GAN approaches have is the lack of interpretability of how the fair distributions are learned.

\begin{figure}
    \centering
    \includegraphics[width=0.8\columnwidth, trim={37pt 55pt 38pt 38pt},clip]{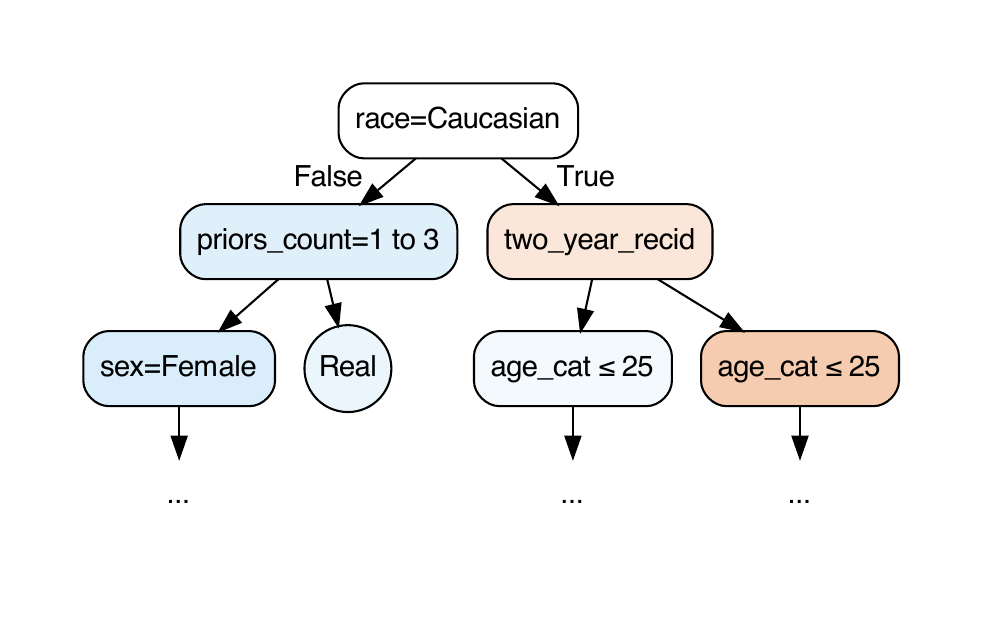}%
    \negativespace
    \caption{%
    Our approach learns weak learners at each boosting iteration of the algorithm. The weak learner attempts to differentiate between
    the `generated' current distribution (more blue) versus the `real' data (more red) --- whose predicted `realness' are used to update the current distribution.
    The following shows, the first 3 levels of a decision tree weak learner produced in the first round of boosting \mollifiera (with the \compas dataset and sensitive attribute = race). The decision tree unveils a race and sex-dependent segregation. The circular node depicts a leaf node.}%
    \label{fig:wl_teaser}%
    \negativespace
    \negativespace
\end{figure}%

\paragraph{Our Contributions} We propose a novel boosting algorithm to pre-process data: Fair Boosted Density Estimation (\fbde). Intuitively, we start with a `fair' initial distribution that is updated to move closer to the true data with each boosting step.
The learned distribution consists of an exponential family distribution with sufficient statistics given by the weak learners (WLs).
Notably, the computational complexity of our approach is proportional to the number of boosting steps and, thus, is clearly readable.
Additionally, our approach can naturally be used in both discrete and continuous spaces by choosing an appropriate initial distribution and WLs. In the case when we are examining a discrete domain (a common setting for social data benchmarks \citep{fmssAF}), interpretable WLs can be used to examine how \fbde is learning its distribution, see \cref{fig:wl_teaser}. Theoretically, we provide two styles of fairness guarantees (\cref{thm:exact_fairness,thm:relative_fairness}) which balance between fairness and `aggressiveness' of boosting updates. In addition, we provide a boosting-style analysis of convergence of \fbde (\cref{thm:kl_drop,thm:statistical_difference}). We also extend the theory of \emph{mollifiers} previously used in privacy \citep{hbcnLD} to characterize sets of fair distributions we are boosting within (\cref{lem:relative_fairness_guarantee,lem:relative_fair_distribution}). Finally, we empirically evaluate our approach with respect to data, prediction, and clustering fairness; and further present an empirical test of \fbde used for continuous domains.

%

\paragraph{Related Work} 
Pre-processing provides a flexible approach to algorithmic fairness in ML. Although there other points of intervention in the fairness pipeline (\eg, in-processing or post-processing fairness \citep[Section 6.2]{zvggFC}) and entirely different notation of fairness (\eg, individual fairness \citep{dhprzFT}), we limit our work to subgroup fairness for pre-processing data. That is, instead of approaches which targets fairness at a model prediction level, we instead target the data itself. 
We also note that boosting methods have seen success in post-processing methods~\citep{kgzMB,sakmmnsxFW}.

Other pre-processing methods not already discussed include re-labeling or re-weighting~\citep
{kzLR,ckpBC,kkzDT,kcDP}. Although many of these approaches are computationally efficient, they often do not consider elements in the domain which do not appear in the dataset and lack fairness guarantees. Similarly, repair methods aim to change the input data to break the dependence on sensitive attributes for trained classifiers~\citep{jlAA,ffmsCA}. To achieve distributional repair, one can employ frameworks of optimal transport~\citep{gbflOF} or counterfactual distributions~\citep{wucRW}.

Limitations of pre-processing methods in the presence of distribution shift has also been examined: under all possible distribution shifts, there does not exist a generic non-trivial fair representation~\citep{lechner2021impossibility}. However, we note that fairness guarantees breaking due distribution shift is not a problem exclusive to pre-processing algorithms~\citep{schrouff2022diagnosing}.
Furthermore, this limitation is pessimistic and some data shifts may not significantly harm fairness. Nevertheless, from a practical standpoint one should look out for fairness degradation from distribution shift.
\section{Setting and Motivation}
\label{sec:setting}

Let \( \mathcal{X} \) be a domain of inputs, \( \mathcal{Y} \) be labels, and \( \mathcal{S} \) be a set of sensitive attributes (separate from \( \mathcal{X} \)).
We assume that both \( \mathcal{S}, \mathcal{Y} \) are finite. Unlike prior works (\ie, \citet{cwvrvOP,ckvDP}) we \emph{do not} assume that \( \mathcal{X} \) is discrete.
Denote \( \mathcal{Z} = \mathcal{X} \times \mathcal{Y} \times \mathcal{S} \).
We short hand the tuple \( (x, y, s) \) as \( z \), \ie, for a function \( f \) we have \( f(z) = f(x, y, s) \).
We further let \( \mathcal{D}(\mathcal{Z}) \) denote the set of distributions with common support \( \mathcal{Z} \). 
With slight abuse of notation, we distinguish between marginals, conditional, and joint distributions of \( \meas{P} \in \mathcal{D}(\mathcal{Z}) \) by its arguments, \ie, \( \meas{P}(x \mid y, s) \) vs \( \meas{P}(y, s) \).
All proofs are deferred to the Appendix.

The goal of pre-processing is to correct an input distribution \( \meas{P} \in \mathcal{D}(\mathcal{Z}) \) to an output distribution \( \meas{Q} \in \mathcal{D}(\mathcal{Z}) \) which adheres to or improves a fairness criteria on the distribution.
We focus on two common data fairness criteria~\citep{ckvDP}: representation rate and statistical rate.

The first of these criteria simply measures the balance of representation of the different sensitive subgroups.

\begin{definition}\label{def:representation_rate}
    A density \( \meas{P} \in \mathcal{D}(\mathcal{Z}) \) has \( \rho \)-\textbf{representation rate} if
    \snegativespace
    \begin{equation}\label{eq:pairwise_representation_rate}
        \RR(\meas{P}, s, s') \defeq \frac{\meas{P}(s)}{\meas{P}(s')} \geq \rho, \quad \forall s, s' \in \mathcal{S}
    \end{equation}
    and define the \textbf{representation rate of a distribution} \( \meas{P} \) as \( \RR(\meas{P}) \defeq \min_{s,s' \in \mathcal{S}} \RR(\meas{P}, s, s') \in [0, 1] \).
\end{definition}

The second of these criteria measure the balance of a prediction outcome across different subgroups.

\begin{definition}\label{def:statistical_rate}
    A density \( \meas{P} \in \mathcal{D}(\mathcal{Z}) \) has \( \tau \)-\textbf{statistical rate} (w.r.t. fixed label \( y \in \mathcal{Y} \)) if
    \snegativespace
    \begin{equation}\label{eq:pairwise_statistical_rate}
        \SR(\meas{P}, s, s'; y) \defeq \frac{\meas{P}(y \mid s)}{\meas{P}(y \mid s')} \geq \tau, \quad \forall s, s' \in \mathcal{S}
    \end{equation}
    and define the \textbf{statistical rate of a distribution} \( \meas{P} \) (w.r.t. \( y\)) as \( \SR(\meas{P}) \defeq \min_{s,s' \in \mathcal{S}} \SR(\meas{P}, s, s'; y) \in [0, 1] \).
\end{definition}

The selection of label \( y \in \mathcal{Y} \) used to specify statistical rate typically corresponds to that corresponding to a positive outcome. As such, the balance being measured corresponds to the rate of positive outcomes across different subgroups. We take this convention and represent the advantaged group as \( \Y = +1 \). It should be noted that statistical rate can also be interpreted as a form of \emph{discrimination control}, inspired by the ``80\% rule'' \citep{cwvrvOP}.

Intuitively for both notions of fairness, a distribution is maximally fair when the fairness parameter is equal to 1 (\ie, \( \tau = 1 \) or \( \rho = 1\)). With such a rate requirement, the constraint requires probability of representation or the positive label rate to be equal for all subgroups.  


We present an algorithm which provides an estimate of an input distribution \( \meas{P} \) which satisfies a statistical rate \( \tau \). Although we are targeting statistical rate, the learned distribution also has strong guarantees for representation rate.
%

\subsection{Why Data Fairness?}

A reasonable question one might make is why target the data and not just make models `fair' directly. We highlight a few examples of why one might want, or even require, fairness within the data itself. 

Firstly, one might want to make data fair to allow for models trained further down in the ML pipeline to be fair. A tantalizing question data fairness faces is how it influences model fairness downstream in ML pipelines. It has previously been shown that downstream classifier compound representation injustices within the data \citep[Theorem 1]{de2019bias}. Furthermore, it has been previously shown experimentally that pre-processing approaches can improve prediction fairness for various metrics \citep{cwvrvOP,ckvDP}. As such, providing `fair' representations can be a critical components of a ML pipeline. 

Also, providing data fairness guarantees can also potentially provide improvement for non-classification notions of fairness, \ie, clustering, see experiments in \cref{sec:experiments}.

Lastly, we would also like to highlight that finding fair distributions has recently become of independent interest to certify the fairness of models \citep{kang2022certifying}. The goal is to provide a certificate of a models performance under distribution shift restricted to distribution which are statistical rate fair. As a side-effect, the evaluated model can be proven to be `fair'~\citep[Proposition 1]{kang2022certifying}.
\section{Fair Boosted Density Estimator}
\label{sec:fbde}

We propose \emph{Fair Boosted Density Estimation} (\fbde), a boosting algorithm which iteratively degrades a fair but `inaccurate' initial distribution \(\meas{Q}_{0}\) to become closer to an input data distribution, we denote as \(\meas{P}\). In particular, the user specifies a statistical rate target / budget \( \tau \in (0, 1] \) which controls the fairness degradation size of each iterative boosting update. Pseudo-code is given in \cref{algo:fbde}.

\subsection{Distribution Estimator}

The learned fair distribution \( \meas{Q}_{T} \) is an exponential family distribution constructed by iteratively aggregating classifiers.
The distribution consists of two main components: \ding{172} the initially fair distribution; and \ding{173} the boosting updates.

\paragraph{For the initial distribution} we require it to be more statistical rate fair than the target \( \tau \).
We also require the initial distribution to be representation rate fair to get a guarantee for representation rate.
As such, we define an initial distribution \( \meas{Q}_{\init} \) hierarchically. First, we specify the label-sensitive marginal given an initial budget \( \tau < \sr_{0} \leq 1 \):
\negativespace
\negativespace
\begin{align}
    \meas{Q}_{\init}(s) &\defeq {1} / {\vert \mathcal{S} \vert}; \label{eq:init_sensitive} \\
    \meas{Q}_{\init}(\Y = 1 | s) &\defeq \max\{ \meas{P}(\Y = 1 | s), \sr_{0} \cdot p_{\max} \},
    \label{eq:init_conditional}%
\negativespace
\end{align}
where \( p_{\max} = \max_{s \in \mathcal{S}} \meas{P}(\Y = 1 | s) \)\footnote{The initialization specified  increases fairness by increasing positive outcomes. Depending on the application, one may want to take the opposite strategy: replacing \cref{eq:init_conditional} by \( \min\{ \sr_{0} \cdot \meas{P}(\Y = 1 | s), p_{\min}) \} \) where \( p_{\min} = \min_{s \in \mathcal{S}} \meas{P}(\Y = 1 | s)\). }. Finally, we specify the conditional \( \meas{Q}_{\init}(x | y, s) \) for each \( (y, s) \in \mathcal{Y} \times \mathcal{S} \) by fitting an empirical distribution if \( \X \) is discrete, or a normal distribution if \( \X \) is continuous.
One can verify that \( \SR(\meas{Q}_{\init}) \geq \sr_{0}\) and \( \RR(\meas{Q}_{\init}) = 1 \). Alternatively, \cref{eq:init_sensitive} can also be altered to satisfy a weaker representation rate budget \( \rr_{0} \).
As suggested by the notation, we denote \( \sr_{0} \) and \( \rr_{0} \) as the statistical rate and representation of the initial distribution, respectively.

One additional consideration we should make is in the circumstance when finding an adequate \( \meas{Q}_{\init}\) is difficult. For instance, in low data regimes, approximating \( \meas{Q}_{\init}(x | y, s) \) for each \( (y, s) \) could be difficult (especially for under represented demographic subgroups). This can be particularly catastrophic when we are estimating the conditional distribution in a way such that by having no examples of inputs occurring in the training data, the estimated distribution has zero support for those inputs, \ie, an empirical distribution. A remedy for such as situation comes from taking a mixture distribution of our proposed initial distribution (as per \cref{eq:init_sensitive,eq:init_conditional}) and a `prior' distribution. \citet[Section 3.1 ``Prior distributions'']{ckvDP} discusses a similar approach for interpolating between distributions. \cref{sec:mixing_prior} presents additional discussion and an experimental example using real world datasets.

\paragraph{In the boosting step} of our algorithm, binary classifiers \( c_{t} : \mathcal{Z} \to \mathbb{R} \) are used to make an exponential reweighting of the previous iteration's distribution \( \meas{Q}_{t-1} \).
In particular, the classifiers \( c_{t} \) acts as a \emph{discriminator} to distinguish between real \( \meas{P} \) and fake samples \( \meas{Q}_{t-1} \) (via the \( \sign(c_{t}({z})) \)). We assume that more positive outputs of \( c_{t} \) indicate greater `realness'.
We take the common technical assumption that each \( c_{t} \) have bounded output: \( c_{t}(z) \in [-C, C] \) for some \( C > 0 \) \citep{ssIBj}.

\begin{algorithm}[t]
\caption{\fbde(\( \WL, T, \tau, \meas{Q}_{\init}, \leverage_{t} \))}
\begin{algorithmic}[1] \label{algo:fbde}
\STATE \textbf{input}: Weak learner \( \WL \), \(\#\) iter. \(T\), \( \SR \) \( \tau \),
\\\hspace{0.4cm}init. \(\meas{Q}_{\init} \), input dist. \( \meas{P} \), leverage \( \leverage_{t} \);
\STATE \( \meas{Q}_{0} \gets \meas{Q}_{\init} \)   \hspace{3mm} (with \( \sr_{0} > \tau \))
\FOR{$t = 1,\ldots,T$}
    \STATE \( c_t \gets \WL (\meas{P},\meas{Q}_{t-1}) \)
    \STATE \( \meas{Q}_t \propto \meas{Q}_{t-1} \cdot \exp(\leverage_{t} c_t) \)
\ENDFOR
\STATE \textbf{return}: $\meas{Q}_T$  \hspace{3mm} (for fairness \( \leverage_{t} \in \{\leverage_{t}^{\rm E}, \leverage_{t}^{\rm R} \} \))
\end{algorithmic}
\end{algorithm}

We assume we have a weak learner \( \WL(\meas{P}, \meas{Q}_{t-1}) \) which is used to produce such discriminators \( c_{t} \) at each step of the algorithm. Thus, after \( t \) iterations the learned distribution have the following exponential family functional form:
\begin{equation}\label{eq:joint_fbde}
    \meas{Q}_{t}({x}, {y}, {s}) = \frac{1}{Z_{t}} \exp(\leverage_{t} c_{t}({x, y, s})) \meas{Q}_{t-1}({x}, {y}, {s}),
\end{equation}
where \( \leverage_{t} \geq 0\) are \emph{leveraging coefficients} and \( Z_{t} \) is the normalizer of \( \meas{Q}_{t}({x}, {y}, {s}) \). The sufficient statistics of this exponential family distribution is exactly the learned \( c_{1} \ldots c_{t} \). Intuitively, the less (more) `real' a \( c_{t} \) deems an input to be, the lower (higher) the reweighting \( \exp( \leverage_{t} c_{t}(\cdot)) \) makes.

\subsection{Fairness Guarantees}

To establish fairness guarantees for statistical rate fairness, we require multiplicative parity of positive subgroup rates \( \meas{Q}_{t}(\Y = 1 | s) \). To calculate these rates, we define the following normalization terms:
\negativespace
\begin{align*}
    Z_{t}({y, s}) &= \int_{\mathcal{X}} \exp(\leverage_{t} c_{t}(z)) \, \dmeas{Q}_{t-1}(x | y, s); \\
    Z_{t}({s}) &= \int_{\mathcal{X} \times \mathcal{Y}} \exp(\leverage_{t} c_{t}(z)) \, \dmeas{Q}_{t-1}(x, y | s).
\negativespace
\end{align*}
We verify that \( \meas{Q}_{t}(y, s) = \meas{Q}_{t-1}(y, s) \cdot (Z_{t}(y, s) / Z_{t}) \) and \( \meas{Q}_{t}(s) = \meas{Q}_{t-1}(s) \cdot (Z_{t}(s) / Z_{t}) \).

This recursive definition of these marginal distributions provides a convenient lower bound of the statistical rate.

\begin{lemma}\label{lem:bounded_boosted_sr}%
    Suppose that \( \meas{Q}_{0} \) has statistical rate \( \sr_{0} \). Then for all \( s, s' \in \mathcal{S} \) we have
    \negativespace
    \begin{equation}
        \SR(\meas{Q}_{t}, s, s'; y) \geq \sr_{0} \cdot \prod_{i=1}^{t} \left( \frac{Z_{i}(s')}{Z_{i}(s)} \cdot \frac{Z_{i}(y, s)}{Z_{i}(y, s')} \right).
    \negativespace
    \end{equation}
\end{lemma}
It should be noted that for each of the normalizers there is a hidden dependence on \( \meas{Q}_{0} \) and \( \leverage_{i} \)'s.
As such, the pairwise statistical rate of the boosted distributions are determined by two factors: \ding{172} the initial \( \sr_{0} \); and \ding{173} the leveraging coefficients \( \leverage_{i} \). We note that similar argumentation can be made for representation rate fairness. In our approach, the coefficients \( \leverage_{i} \) are taken to be a function of the iteration number \( i \), initial fairness \( \sr_{0} \), and fairness budget \( \tau \).

Thus given the initial distribution specified above, we propose two leveraging schemes \( \leverage_{t} \) which can be used to accommodate different fairness guarantees.

\paragraph{Exact fairness} guarantees that fairness holds irrespective of other parameters of \( \meas{Q}_{T} \) (\ie boosting steps \( T \)). This type of fairness guarantee can be established by setting the leveraging coefficient as \( \eleverage_{t} := - ({C2^{t+1}})^{-1} \log (\tau / \sr_{0}) \).
\begin{theorem}
    \label{thm:exact_fairness}
    Suppose that \( \leverage_{t} = \eleverage_{t} \), then \( \SR(\meas{Q}_{T}) > \tau \) and \( \RR(\meas{Q}_{T}) > \rr_{0} \sqrt{\tau / \sr_{0}} \) for \( T \geq 1 \).
\end{theorem}
This setting is not just appealing for its absolute fairness it provides, but also the exponentially decreasing leverage \( \eleverage_{t} \) --- which fits the setting where only a few classifiers are required to provide a good estimate of \( \meas{P} \) (or are enough to break the WLA, \cref{defWLA}).
For representation rate, the \( \sqrt{\tau / \sr_{0}} \) term only degrades the initial representation rate \( \rr_{0} \) slightly when the statistical rate budget \( \tau \) is high (which is assumed as we want to achieve a fair \( \meas{Q}_{T} \)). For instance, when \( \tau = 0.8 \) and we use the initial distribution proposed (\( \rr_{0} = 1\)), then \( \RR(\meas{Q}_{T}) > \sqrt{\tau} \approx 0.894 \).

Given that the exact fairness guarantees hold regardless of \( T \), one could theoretically keep adding classifiers \( c_{t} \) forever. In practice this never happens, and thus we explore a notion of `relative' fairness. Instead of exact fairness, the guarantee on fairness gradually becomes weaker `relative' to an initial fairness constraint over update iterations.

\paragraph{Relative fairness} proves a fairness guarantee which degrades gracefully with the number of boosting iterations. To do so, we define \( \rleverage_{t} := - ({4Ct})^{-1} \log (\tau / \sr_{0}) \).

\begin{theorem}
    \label{thm:relative_fairness}
    Suppose that \( \leverage_{t} = \rleverage_{t} \), then \( \SR(\meas{Q}_{T}) > \tau^{1 + \log T} \) and \( \RR(\meas{Q}_{T}) > \rr_{0} (\sqrt{\tau / \sr_{0}})^{1 + \log T} \) for \( T \geq 1 \).
\end{theorem}

Notice the key boosting difference with \cref{thm:exact_fairness}: the sum of the series of leveraging coefficients diverges, so relative fairness accommodates for more aggressive boosting schemes. \cref{tab:boosting_rates} summarizes the implications of the two theorems. It is not surprising that both leveraging schemes display $\leverage_{t}^{.} \rightarrow 0$ as $\tau / \sr_{0} \rightarrow 1$, as maximal fairness forces the distribution to stick to $\meas{Q}_0$ and is therefore data oblivious. Differences are apparent when we consider the number of boosting iterations $T$: should we boost for $T=5$, we still get \( \SR(\meas{Q}_{T}) > \tau^{2.3} \) and \( \RR(\meas{Q}_{T}) > \tau^{1.15} \) with relative fairness (taking \( \sr_{0} = \rr_{0} = 1 \)), which can still be reasonable depending on the problem. In real-world data scenarios (see \cref{sec:experiments}), we find that large numbers of boosting iterations can be taken without major degradation of fairness.

\begin{table}[t]%
    \negativespace
\caption{Summary of of different leveraging schemes \( \leverage^{.}_{t} \) in \fbde. }
\label{tab:boosting_rates}
\centering
\begin{small}%
\begin{sc}%
\begin{tabularx}{\columnwidth}{MMMD}
\toprule
Fair  & \( \leverage_{t} \) & \( \SR(\meas{Q}_{t}) \) & Size \( \varepsilon_{t} \) (\( \sr_{0} = 1 \)) \\
\midrule
Exact & \( O(2^{-t}) \) & \( \tau \) & \( - \log \tau \) \\
Rel.  & \( O(t^{-1}) \) & \( \tau^{\Omega(\log t)} \) & \( - (1 + \log t) \log \tau \) \\
\bottomrule
\end{tabularx}
\end{sc}
\end{small}
\negativespace
\negativespace
\end{table}%

\subsection{Sampling for \texorpdfstring{\( \meas{Q}_{T} \)}{Q\_T}}

A desirable property for the learned \( \meas{Q}_{T} \) would be the ability to sample efficiently from it. In practice, to utilize the weak learner \( \WL \) to create classifiers \( c_{t} \), we require samples from \( \meas{Q}_{t-1} \). In the case when \( \X \) consists of a discrete finite domain, one can simply enumerate the possible inputs and sample from a large multinomial distribution. 
However, when \( \X \) consists of continuous random variables, we cannot use the same strategy.
%
Instead, we utilize Langevin Monte Carlo (LMC) and a sampling trick to efficiently calculate samples. It should be noted that LMC can be directly applied to sampling \( \meas{Q}_{t} \) if \( \Y, \SSS \) are approximated to be continuous, \ie via \citet{gshdmOI} or \citet{cbmdssGM,xyzwFF,rgTF}.

We first note that the conditional distribution \( \meas{Q}_{t}(x | y, s) \) is of a similar functional form given by \cref{eq:joint_fbde}. As such, a natural way to sample from these conditionals is via LMC sampling algorithms using the distribution's \emph{score function}:%
\negativespace
\begin{equation}\label{eq:boosting_cond_score}%
    \grad \log \meas{Q}_{t}(x | y, s) = \sum_{i=1}^{t} \leverage_{i} \grad c_{i}(z) + \grad \log \meas{Q}_{0}(x | y, s),
\snegativespace
\end{equation}%
where \( z = (x, y, s) \), as noted in \cref{sec:setting}.
Of course, we require the classifiers \( c_{t} \) and log-likelihood of the initial distribution \( \meas{Q}_{0} \) to be differentiable. The former can be achieved by taking \( c_{t} \)'s to be simple neural networks. The latter can be achieved using our proposed initial distribution (taking \( \meas{Q}_{0}(x | y, s) \) to be normal distributions).

Secondly, the marginal distribution \( \meas{Q}_{t}(y, s) \) can be calculated by \emph{sampling from \( \meas{Q}_{0} \)}:
\negativespace
\begin{equation}\label{eq:marginal_by_sampling_init}
    \hspace{-1em}\meas{Q}_{t}(y, s) \propto \meas{Q}_{0}(y, s)
    \mathop{\expect}_{x | y, s \sim \meas{Q}_{0}} \left[
    \exp\left(\sum_{k=1}^{t} \leverage_{k} c_{k}(z)\right) \right],
\snegativespace
\end{equation}
where the expectation can be approximated by sampling from \( \meas{Q}_{0}(x | y, s) \).

With these ingredients, we can now state our sampling routine. For any \( t \), we first calculate the current marginal \( \meas{Q}_{t}(y, s) \) via \cref{eq:marginal_by_sampling_init}. We can now hierarchically sample the joint distribution by 
first sampling from \( \meas{Q}_{t}(y, s) \), and then sampling from \( \meas{Q}_{t}(x | y, s) \) using LMC. 

\paragraph{\fbde as Boosting the Score}
An interesting perspective of \fbde is to examine the overdamped Langevin diffusion process which can be used to sample \( \meas{Q}_{t} \). Assuming the necessary assumptions of continuity and differentiability (including \( \Y \) and \( \SSS \) to simplify the narrative), the (joint) diffusion process of \( \meas{Q}_{t} \), as per \eqref{eq:boosting_cond_score}, is given by
\negativespace
\begin{equation}\label{eq:boosting_diffusion}
   \dot{\Z}_{t} = \sum_{i=1}^{t} \leverage_{i} \grad c_{i}(\Z) + \grad \log \meas{Q}_{0}(\Z) + \sqrt{2} \dot{\W},
\negativespace
\end{equation}
where \( \W \) is standard Brownian motion and \( \Z = (\X, \Y, \SSS)\). In the limit (w.r.t. the time derivative), the distribution of \( \Z \) approaches \( \meas{Q}_{t} \)~\citep{rtEC}.

Removing the initial summation, \cref{eq:boosting_diffusion} simplifies to the Langevin diffusion process for the initial diffusion process --- the score function of \( \meas{Q}_{0} \) `pushes' the process to areas of high likelihood. Reintroducing the first summation term, as the goal of \( c_{i}(.) \)'s is to predict the `realness' of samples, the term defines a vector fields which `points' towards the most `real' direction in the distribution. Thus, the WLs \( c_{i}(.) \) can be interpreted as correction terms which `pushes' the diffusion process closer to \( \meas{P} \).

\subsection{Convergence}

To discuss properties of convergence \fbde has, we first introduce a variant of \emph{weak learning assumption} of boosting~\citep{hbcnLD,cnBD}.

\begin{definition}[WLA]\label{defWLA}
    A learner \( \WL(\cdot, \cdot) \) satisfies the {\bf weak learning assumption} (WLA) for \( \gamma_{\meas{P}}, \gamma_{\meas{Q}} \in (0, 1] \) iff for all \( \meas{P}, \meas{Q} \in \mathcal{D}(\mathcal{Z}) \), \( \WL(\meas{P}, \meas{Q}) \) produces a \emph{discriminator} \( c : \mathcal{Z} \to \mathbb{R} \) satisfying \( \expect_{\meas{P}}[c] \geq C \cdot \gamma_{\meas{P}} \) and \( \expect_{\meas{Q}}[-c] \geq C \cdot \gamma_{\meas{Q}} \).
\end{definition}

Intuitively, the WLA constants \( \gamma_{\meas{P}}, \gamma_{\meas{Q}} \) measures a degree of separability between the two distributions \( \meas{P} \) and \( \meas{Q} \) --- with \( \gamma_{\meas{P}} = \gamma_{\meas{Q}} = 1 \) giving maximal separability. 
 As such, in our algorithm, as \( \meas{Q}_{t} \rightarrow \meas{P} \) finding higher boosting constants \( \gamma^{t}_{\cdot} \) becomes harder. Although our WLA may appear dissimilar to the typical WLAs which rely on a single inequality \citep{ssIBj}, it has been shown that \cref{defWLA} is equivalent \citep[Appendix Lemma 7]{cnBD}. When applying the WLA to \fbde, we refer to constants \( \gamma_{\meas{P}}^{t}, \gamma_{\meas{Q}}^{t}\) when referring to the WL learned via \( \WL({\meas{P}}, \meas{Q}_{t-1}) \), \ie, learning the \( t \)\textsuperscript{th} classifier \( c_{t} \).

 Using the WLA, we examine the progress we can make per boosting step. In particular, we examine the drop in KL between input \( \meas{P} \) and successive boosted densities \( \meas{Q}_{t-1}, \meas{Q}_{t} \).

\begin{theorem}\label{thm:kl_drop}
    Let \( \leverage_{t} \) be any leverage such that \( \leverage_{t} \leq 1 \) for all \( t \), \( \tau / \sr_{0} > \exp(-4C) \), and \( C > 0 \). If WL satisfies WLA (for \( \meas{P}, \meas{Q}_{t-1}\)) with \( \gamma_{\meas{P}}^t, \gamma_{\meas{Q}}^t \), then:
    \snegativespace
    \begin{equation}\label{eq:kl_drop}
        \kl(\meas{P}, \meas{Q}_{t-1}) - \kl(\meas{P}, \meas{Q}_{t}) \geq \leverage_{t} \cdot \Lambda_{t},
    \end{equation}
    where \( \Lambda_{t} =  \gamma_{\meas{P}}^{t} \cdot C + \Gamma(\var_{\meas{Q}_{t-1}}[c_{t}], \gamma_{\meas{Q}}^t) \); \( \var_{\meas{Q}_{t}}[c_{t}] \) denotes the variance of \( c_{t} \); and \( \Gamma(., .) \) is decreasing \wrt the first argument and increasing \wrt the second argument.
\end{theorem}

The full definition of \( \Gamma(., .) \) can be found in the Appendix. Intuitively, more accurate WLs (which gives higher \( \gamma_{\cdot}^{\cdot}\)'s) leads to a higher KL drop.
The above is a strict improvement upon \citet[Theorem 5]{hbcnLD} and, therefore, extends the theoretical guarantees for boosted density estimation (not just \fbde) at large. In particular, we find that a constraint on the variance of the classifier allows us to remove the two boosting regimes analysis, prevalent in \citet{hbcnLD} --- the conditions for \( > 0 \) KL drop only depends on the variance and WLA constants now.

Specifically, Theorem~\ref{thm:kl_drop} allows for smaller values of \( \gamma_{\meas{Q}}^{t} \) to result in a \( > 0 \) KL drop than \citet[Theorem 5]{hbcnLD} (which requires \( \gamma_{\meas{Q}}^{t} > 1/3 \) for \( C = \log 2 \)).
In particular, taking \( C = \log 2 \) then having \( \var_{\meas{Q}_{t-1}}[c_{t}] < C^2 \cdot 2/3 \approx 0.32 \) allows us to have a \( > 0 \) KL drop with smaller \( \gamma_{\meas{Q}}^{t} \) constants (with an extended discussion in \cref{sec:analyzing_kl_drop_bound}).
Notably, given the bounded nature of the classifier, the variance is already bounded with \( \var_{\meas{Q}_{t-1}}[c_t] \leq C^2 \approx 0.48 \) for \( C = \log 2 \) --- thus the condition itself is reasonable.
In addition, we should note that convergence, unlike in the privacy case~\citep{hbcnLD}, depends on how the fairness \( \tau \) parameter interacts with the update's leveraging coefficient \( \leverage_{t} \).

In addition to consider the per iteration KL drop, we consider ``how far from \( \meas{Q}_{0} \)'' we progressively get, in an information-theoretic sense. For this, we define \( \Delta(\meas{Q}) \defeq \kl(\meas{P}, \meas{Q}_{0}) - \kl(\meas{P}, \meas{Q}) \). To simplify the analysis, we assume that constants \( \gamma_{\meas{P}}, \gamma_{\meas{Q}} \) are fixed throughout the boosting process (or simply taking worse-case constants).

\begin{theorem} \label{thm:statistical_difference}
    Suppose that \( \lambda = -\log( \tau / \sr_{0}) \), \( \alpha(\gamma) \:= \min_{t} \Gamma( \var_{\meas{Q}_{t-1}}[c_t], \gamma) / (\gamma C) \), and \( C, T > 0 \).
    \\
    \( \bullet \) If \( \vartheta_{t} := \vartheta_{t}^{\rm E} \), then%
    \negativespace
    \snegativespace
    \begin{equation*}
        \Delta(\meas{Q}_{T}) \in \lambda \left( 1 - 2^{-T} \right) \left[ \frac{\gamma_{\meas{P}} + \gamma_{\meas{Q}} \cdot \alpha(\gamma_{\meas{Q}})}{2}, 1  \right];%
    \negativespace
    \end{equation*}
    \( \bullet \) If \( \vartheta_{t} := \vartheta_{t}^{\rm R} \), then 
    \snegativespace
    \begin{equation*}
        \Delta(\meas{Q}_{T}) \in \frac{\lambda}{2} \left[ \frac{\gamma_{\meas{P}} + \gamma_{\meas{Q}} \cdot \alpha(\gamma_{\meas{Q}})}{2} \left( \frac{1}{T} + \log T \right), (1 + \log T)  \right],
    \negativespace
    \end{equation*}
    where \( a[b, c] = [ab, ac] \).
\end{theorem}

In addition to the worse-case convergence rates (lower bounds) typically analyzed in boosting algorithms, \cref{thm:statistical_difference} also characterizes the best-case scenario (upper bounds) achievable by our algorithm.
Notably, the gap, computed as the ratio best-to-worst, solely depends on the parameters of the WLA, and thus is guaranteed to be reduced as the WL becomes better.
As \( T \rightarrow \infty \), then with the exact leverage \( \leverage_{t}^{\rm E} \) we have that \( \Delta(\meas{Q}_{T}) = \Theta(-\log (\tau / \sr_{0})) \). Additionally assuming \( \gamma_{\meas{P}}, \gamma_{\meas{Q}} \rightarrow 1 \) and setting \( \var_{\meas{Q}_{t-1}}[c_{t}] \rightarrow 0 \) uniformly, the upper and lower bound exactly matches at \( -\log(\tau / \sr_{0})\) --- our bound is tight. Despite this, with the relative leverage \( \leverage_{t}^{\rm R} \), there will always be a \( \Omega(- \log(\tau / \sr_{0})) \) gap between the upper and lower bound of \( \Delta(\meas{Q}_{T})\). Furthermore, in contrast to the geometric convergence of the bounds in the exact fairness case, in relative fairness the bounds grow logarithmically with \( T \). However, we do note that for relative leverage we can achieve a tight bound if we do not attempt to simplify the series \( \sum_{k}^{T} \leverage_{k} \), see \cref{thm:delta_kl_gen_bounds}.

On particularly nice interpretation of \cref{thm:statistical_difference} is to utilize the lower bounds of \( \Delta(\meas{Q}_{T})\) to determine sufficient conditions for small KL. In particular, we find the sufficient number of boosting steps \( T \) to make \( \kl(\meas{P}, \meas{Q}_{T}) \) small.
\begin{corollary}
    \label{cor:suff_boost_steps}
    Suppose that the same condition in \cref{thm:statistical_difference} hold and \( \varepsilon < \kl(\meas{P}, \meas{Q}_{0}) \). Then \( \kl(\meas{P}, \meas{Q}_{T}) < \varepsilon \), for:
    \\
    \( \bullet \) \( \vartheta_{t} := \vartheta_{t}^{\rm E} \) if
    \begin{equation*}
        T > \frac{1}{\log 2} \cdot \log\left( \left(1 - 2 \cdot \frac{\kl(\meas{P}, \meas{Q}_{0}) - \varepsilon}{\lambda \cdot (\gamma_{\meas{P}} + \gamma_{\meas{Q}} \cdot \alpha(\gamma_{\meas{Q}}))} \right)^{-1} \right).
    \end{equation*}
    \( \bullet \) \( \vartheta_{t} := \vartheta_{t}^{\rm R} \) if
    \begin{equation*}
        T > \exp\left( 4 \cdot \frac{\kl(\meas{P}, \meas{Q}_{0}) - \varepsilon}{\lambda \cdot (\gamma_{\meas{P}} + \gamma_{\meas{Q}} \cdot \alpha(\gamma_{\meas{Q}}))} \right).
    \end{equation*}
\end{corollary}

One should note, that when comparing the exact leverage versus the relative versus in \cref{cor:suff_boost_steps}, we are comparing the growth of two functions: \( x \mapsto \log((1 - 2x)^{-1}) \) for exact leverage; and \( x \mapsto \exp(4x) \) for relative leverage. The former dominates asymptotic dominates the latter. That is, using the exact leverage will require more boosting updates to achieve same drop in KL; which follows our intuition that relative leveraging has better boosting convergence at the cost of decaying fairness. 
\snegativespace
\section{Mollifier Interpretation}
\label{sec:mollifier}

Previously it has been shown that boosted density estimation for privacy can be interpreted as boosting within a set of a set of private densities --- dubbed a mollifier \citep{hbcnLD}. To study the set of distributions \fbde boosts within, we adapt the \emph{relative mollifier} construction for fairness.

\begin{definition}\label{def:relmol}
    Let \( \meas{Q}_{0} \) be a reference distribution. Then the \( \varepsilon \)-fair {\bf relative mollifier} \( \mathcal{M}_{\varepsilon, \meas{Q}_{0}} \) is the set of distributions \( \meas{Q} \) satisfying \( \forall s, s^{\prime} \in \mathcal{S} \)
    \snegativespace
    \begin{equation} 
        \max \left\{\frac{\SR(\meas{Q}, s, s^{\prime}; y)}{\SR(\meas{Q}_{0}, s, s^{\prime}; y)}, \frac{\SR(\meas{Q}_{0}, s, s^{\prime}; y)}{\SR(\meas{Q}, s, s^{\prime}; y)} \right\}  \leq e^{\varepsilon}.\label{eq:relative_mollifier}
    \negativespace
    \end{equation}
\end{definition}

The reference distribution \( \meas{Q}_{0} \) can be interpreted as the initial distribution chosen in \fbde.
Notably, one can change the constraint in \cref{eq:relative_mollifier} to accommodate for different notions of data fairness, \ie, replacing ``\( \SR \)'' to ``\( \RR \)''.
The following \cref{lem:relative_fairness_guarantee,lem:relative_fair_distribution} holds identically for representation rate style mollifiers --- with the first lemma providing a fairness guarantees for \emph{all} elements in \( \mathcal{M}_{\varepsilon, \meas{Q}_{0}} \). 

\begin{lemma}\label{lem:relative_fairness_guarantee}
    If \( \meas{Q} \in \mathcal{M}_{\varepsilon, \meas{Q}_{0}} \), then \( \SR(\meas{Q}) \geq \sr_{0} \cdot \exp(- \varepsilon) \).
\end{lemma}

\begin{figure*}[t]
    \centering
    \hfill
    \includegraphics[width=0.46\textwidth,trim={0pt, 10pt, 0pt, 0pt},clip]{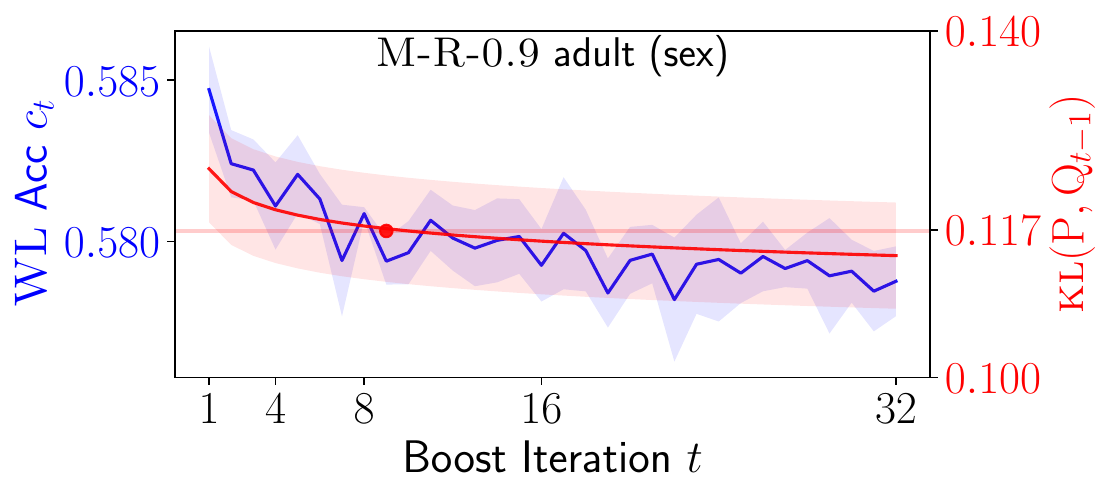}%
    \hfill
    \includegraphics[width=0.46\textwidth,trim={0pt, 10pt, 0pt, 0pt},clip]{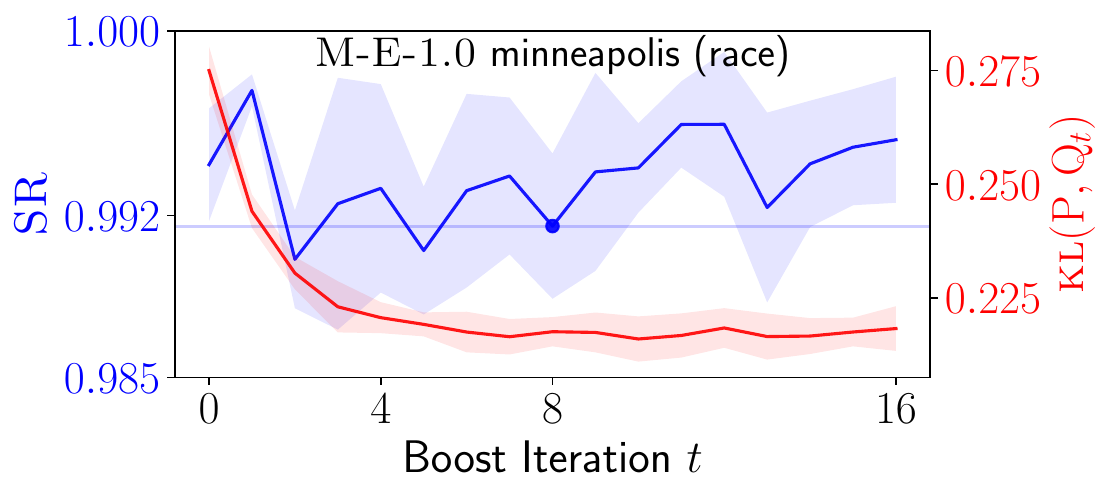}%
    \hfill
    \negativespace
    \caption{Evaluations of \fbde over boosting iterations. Left: WL accuracy vs KL of \mollifierd over boosting iterations on the \adult dataset. 
    Right: \(\SR\) vs KL of \mollifiera over boosting iterations on the \textsc{Minneapolis} dataset ---  original data's \( \SR \) is \( 0.684 \pm 0.005 \). 
    The shaded region depicts the 1 std. range. The horizontal line / point indicates the \( T = 8 \) (left) \( \kl(\meas{P}, \meas{Q}_{8})\) value and (right) \( \SR(\meas{Q}_{8}) \) value.%
    }
    \label{fig:boost_iters_joint}%
    \negativespace
\end{figure*}

\cref{lem:relative_fairness_guarantee} shows a significant difference from the privacy case: the distribution within a carefully constructed mollifier are fair; not just the sampler~\citep{hbcnLD}. This difference is significant in fairness as knowledge of selected elements in the mollifier can elucidate sources of (un)fairness learned by a model. Furthermore, this detail is crucial in applications where we need a set of fair distributions~\citep{kang2022certifying}.
Given \cref{lem:relative_fairness_guarantee}, a natural question to ask is, \emph{what kinds of fair distributions are included in the relative mollifier?} The following lemma answers this question.

\begin{lemma} \label{lem:relative_fair_distribution}
    Suppose that \( \mathcal{M}_{\varepsilon, \meas{Q}_{0}} \) is a relative mollifier
    and \( \meas{Q} \in \mathcal{D}(\mathcal{Z}) \) has \( \SR(\meas{Q}) \geq \exp(-\varepsilon) \). If \( \forall s, s^{\prime} \in \mathcal{S} \) we have either \( \SR(\meas{Q}, s, s^{\prime}; y) > 1 \) or \( \SR(\meas{Q}, s, s^{\prime}; y) \leq \SR(\meas{Q}_{0}, s, s^{\prime}; y) \leq 1 \), then \( \meas{Q} \in \mathcal{M}_{\varepsilon, \meas{Q}_{0}} \).
\end{lemma}

Intuitively, the condition of membership in \cref{lem:relative_fair_distribution} can be interpreted as \( \meas{Q} \) and \( \meas{Q}_{0} \) having a shared `type' of (un)fairness: letting \( \mathcal{S} = \{ \rm{male}, \rm{female} \} \), if the probability of positive outcomes in both \( \meas{Q}, \meas{Q}_{0} \) is higher given \( \SSS = \rm{male} \) than \( \SSS = \rm{female} \) but \( \meas{Q}_{0} \) is more fair, then \( \meas{Q} \in \mathcal{M}_{\varepsilon, \meas{Q}_{0}} \).
A significant instantiation of \cref{lem:relative_fair_distribution} is when \( \sr_{0} = 1 \), which gives us that \( \SR(\meas{Q}) \geq \exp(-\varepsilon) \implies \meas{Q} \in \mathcal{M}_{\varepsilon, \meas{Q}_{0}}\). 
In other-words, we have a \emph{complete} mollifier: by combining \cref{lem:relative_fairness_guarantee}, \( \meas{Q} \in \mathcal{M}_{\varepsilon, \meas{Q}_{0}} \iff \SR(\meas{Q}) \geq \exp(\varepsilon) \) which notably including all perfectly fair distributions.

We can now express \fbde as the process of mollification \( \argmin_{\meas{Q} \in \mathcal{M}} \kl(\meas{P}, \meas{Q}) \)~\citep{hbcnLD}, \ie, finding the closest element of a mollifier \( \mathcal{M}_{\varepsilon, \meas{Q}_{0}} \) \wrt to an input \( \meas{P} \). In particular, by taking \( \varepsilon = - \log (\tau / \sr_{0} ) \) the statistical rate conditions in \cref{lem:relative_fairness_guarantee,lem:relative_fair_distribution} correspond exactly to the fairness budget \( \tau \) of \fbde. 
Taking \( \sr_{0} = 1 \), the minimization of KL (as per \cref{thm:kl_drop}) can be interpreted as an iterative boosting procedure for mollification. Even in the case when \( \mathcal{M}_{\varepsilon, \meas{Q}_{0}} \) is \emph{incomplete} (\( \sr \neq 1 \)), it is still useful as careful selection of \( \meas{Q}_{0} \) can practically yield all distributions we are interested in, \ie, those close to \( \meas{P} \).
Furthermore, it is fine to further focus on a subset of a mollifier.
Especially so in our case, where we only consider a `boost'-able exponential family subset of \( \mathcal{M}_{\varepsilon, \meas{Q}_{0}} \); we note the strong approximation capabilities of exponential families~\citep[Section 1.5.4]{ngSE}.

One interesting perspective of \fbde and its different fairness settings is to examine the mollifiers they are boosting within. In particular, we can consider the \( \varepsilon_{t} \) parameter in \cref{def:relmol}, or the `\emph{size}', of the relative mollifiers \wrt iteration value \( t \) --- summarized in \cref{tab:boosting_rates}. Assuming \( \sr_{0} = 1 \), using the exact leverage \( \leverage_{t}^{\rm E} \) implies that we are boosting within a constant sized mollifier as per \cref{thm:exact_fairness}, with \( \varepsilon_{t} = -\log (\tau / \sr_{0})\). For the relative leverage \( \leverage_{t}^{\rm R} \), as the fairness degrades in this case, the corresponding mollifier grows, for which \( \varepsilon_{t} = - (1 + \log t) \log (\tau / \sr_{0}) \). Notice that size of the mollifier also coincides (up to constant multiplication) with the upper bounds of \( \Delta(\meas{Q}_{T}) \) as per \cref{thm:statistical_difference}.
\begin{table*}[th]
    \negativespace
    \caption{\fbde~(\(T = 32\)) and baselines evaluated for \compas and \adult datasets. The table reports the mean and s.t.d.}
    \label{tab:summary_experiments_table}
    \centering
    \scriptsize
    \begin{sc}%
    \begin{tabularx}{\textwidth}{LLPDDDDDDDD} 
        \toprule
        && & Data & \mollifiera & \mollifierb & \mollifierc & \mollifierd & \celisa & \tabfairgan & \fairkmeans \\
        \midrule
{\multirow{9}{*}{\rotatebox[origin=c]{90}{\compas (\(\mathcal{S}\) = race)}}} &
{\multirow{3}{*}{\rotatebox[origin=c]{90}{\underline{data}}}} 
& $ \textrm{RR} $                 & \(.662 \pm .007\) & \(.966 \pm .008\) & \(.977 \pm .008\) & \(.944 \pm .010\) & \(.964 \pm .009\) & \(.992 \pm .004\) & \(.632 \pm .052\) &  \( - \) \\
&& $ \textrm{SR} $                & \(.747 \pm .013\) & \(.988 \pm .006\) & \(.899 \pm .011\) & \(.978 \pm .011\) & \(.896 \pm .011\) & \(.992 \pm .006\) & \(.727 \pm .100\) &  \( - \) \\
&& $ \textrm{KL} $                & \( - \) & \(.135 \pm .020\) & \(.129 \pm .020\) & \(.132 \pm .020\) & \(.127 \pm .020\) & \(.164 \pm .018\) & \(2.71 \pm .735\) &  \( - \) \\
        \cmidrule(l{5pt}){2-11}

&{\multirow{3}{*}{\rotatebox[origin=c]{90}{\underline{pred}}}} 
& $ \textrm{SR}_{\textrm{c}}$     & \(.747 \pm .020\) & \(.959 \pm .025\) & \(.875 \pm .025\) & \(.945 \pm .027\) & \(.872 \pm .024\) & \(.939 \pm .018\) & \(.802 \pm .074\) &  \( - \) \\
&& $ \textrm{EO} $                & \(.781 \pm .034\) & \(.960 \pm .026\) & \(.900 \pm .041\) & \(.950 \pm .028\) & \(.895 \pm .039\) & \(.944 \pm .020\) & \(.815 \pm .079\) &  \( - \) \\
&& $ \textrm{Acc} $               & \(.660 \pm .004\) & \(.641 \pm .014\) & \(.653 \pm .012\) & \(.642 \pm .010\) & \(.656 \pm .012\) & \(.648 \pm .015\) & \(.591 \pm .043\) &  \( - \) \\
        \cmidrule(l{5pt}){2-11}
        
&{\multirow{3}{*}{\rotatebox[origin=c]{90}{\underline{clus}}}} 
& $ \cPR $            & \(.334 \pm .019\) & \(.332 \pm .022\) & \(.334 \pm .019\) & \(.334 \pm .019\) & \(.319 \pm .036\) & \(.333 \pm .020\) & \(.286 \pm .027\) & \(.259 \pm .013\) \\
&& $ \cSR $           & \(.288 \pm .055\) & \(.235 \pm .082\) & \(.284 \pm .055\) & \(.284 \pm .055\) & \(.256 \pm .084\) & \(.288 \pm .059\) & \(.268 \pm .063\) & \(.284 \pm .036\) \\
&& $ \textrm{dist} $  & \(.065 \pm .000\) & \(.078 \pm .000\) & \(.078 \pm .000\) & \(.078 \pm .000\) & \(.078 \pm .000\) & \(.078 \pm .000\) & \(.078 \pm .001\) & \(.069 \pm .001\) \\
        \midrule

{\multirow{9}{*}{\rotatebox[origin=c]{90}{\adult (\(\mathcal{S}\) = sex)}}} &
{\multirow{3}{*}{\rotatebox[origin=c]{90}{\underline{data}}}} 
& $ \textrm{RR} $                 & \(.496 \pm .002\) & \(.958 \pm .003\) & \(.979 \pm .003\) & \(.919 \pm .004\) & \(.957 \pm .003\) & \(.995 \pm .002\) & \(.516 \pm .023\) &  \( - \) \\
&& $ \textrm{SR} $                & \(.360 \pm .005\) & \(.961 \pm .005\) & \(.883 \pm .006\) & \(.924 \pm .006\) & \(.865 \pm .006\) & \(.979 \pm .004\) & \(.862 \pm .101\) &  \( - \) \\
&& $ \textrm{KL} $                & \( - \) & \(.122 \pm .006\) & \(.119 \pm .006\) & \(.113 \pm .006\) & \(.114 \pm .006\) & \(.182 \pm .005\) & \(1.68 \pm .538\) &  \( - \) \\

        \cmidrule(l{5pt}){2-11}
&{\multirow{3}{*}{\rotatebox[origin=c]{90}{\underline{pred}}}} 
& $ \textrm{SR}_{\textrm{c}}$     & \(.360 \pm .003\) & \(.818 \pm .010\) & \(.766 \pm .010\) & \(.793 \pm .011\) & \(.753 \pm .008\) & \(.919 \pm .011\) & \(.823 \pm .118\) &  \( - \) \\
&& $ \textrm{EO} $                & \(.471 \pm .008\) & \(.959 \pm .016\) & \(.908 \pm .018\) & \(.935 \pm .016\) & \(.895 \pm .016\) & \(.981 \pm .010\) & \(.867 \pm .105\) &  \( - \) \\
&& $ \textrm{Acc} $               & \(.803 \pm .003\) & \(.785 \pm .002\) & \(.788 \pm .002\) & \(.787 \pm .002\) & \(.788 \pm .002\) & \(.773 \pm .005\) & \(.781 \pm .006\) &  \( - \) \\
        \cmidrule(l{5pt}){2-11}
        
&{\multirow{3}{*}{\rotatebox[origin=c]{90}{\underline{clus}}}} 
& $ \cPR $           & \(.125 \pm .067\) & \(.093 \pm .019\) & \(.101 \pm .025\) & \(.116 \pm .027\) & \(.111 \pm .033\) & \(.130 \pm .061\) & \(.137 \pm .038\) & \(.117 \pm .039\) \\
&& $ \cSR $           & \(.426 \pm .232\) & \(.252 \pm .082\) & \(.302 \pm .076\) & \(.190 \pm .061\) & \(.254 \pm .086\) & \(.468 \pm .237\) & \(.437 \pm .179\) & \(.296 \pm .151\) \\
&& $ \textrm{dist} $  & \(.034 \pm .000\) & \(.037 \pm .000\) & \(.037 \pm .000\) & \(.037 \pm .000\) & \(.037 \pm .000\) & \(.037 \pm .000\) & \(.037 \pm .000\) & \(.035 \pm .000\) \\
        \bottomrule
    \end{tabularx}
    \end{sc}
    \negativespace
    \negativespace
\end{table*}
\snegativespace
\section{Experiments}
\label{sec:experiments}
In this section, we (1) verify that \fbde debiased densities and ad-hears to specified data fairness measures; (2) inspect the implications of utilizing samples produced by a \fbde debiased density in downstream tasks, specifically, prediction and clustering tasks; (3) explore the interpretability of \fbde when utilizing decision tree (DT) weak learners (WLs); and (4) present an experimental on a dataset with continuous \( \mathcal{X} \).

To analyze these points, we evaluate \fbde over pre-processed \compas (binary \(\mathcal{S} = \rm{race}\)) and \adult (binary \( \mathcal{S} = \rm{sex}\)) datasets provided by \texttt{AIF360}\footnote{Public at: \url{www.github.com/Trusted-AI/AIF360}} \citep{aif360}. This consists of a discrete binary domain.

We consider 4 configurations of \fbde, boosted for \( T = 32 \) iterations. We consider a fixed fairness budget \( \tau = 0.8\) throughout and take a combination of exact vs relative leverage; and \( \sr_{0} = 1 \) vs \( \sr_{0} = 0.9\). We designate each configuration by \config, where \configleverage encodes the leverage and \configbase encodes the base rate, \ie, \mollifiera uses exact leverage with \( \sr_{0} = 1 \). DT WLs are calibrated using Platt's method \citep{platt1999probabilistic}.
For baselines, we consider two different data pre-processing approaches. Firstly, we consider the max entropy approach proposed by \citet{ckvDP} (\celisa) with default parameters. Secondly, we compare against TabFairGAN proposed by \citet{rgTF} (\tabfairgan), which includes a separate training phase for fairness.
%
In clustering, we consider a R{\'e}nyi Fair K-means (\( K = 4 \)) approach as per \citet{bnbrRF} (\fairkmeans).

In evaluating all approaches, we utilize 5-fold cross validation and evaluate all measurements using the test set (whenever appropriate). All training was on a MacBook Pro (16 GB memory, M1, 2020).

Additional experiments and discussion is presented in the Appendix, including, additional dataset comparisons, in-processing algorithm comparison, and the runtime of approaches.
Code for \fbde is available at \url{www.github.com/alexandersoen/fbde}.


\paragraph{Data Fairness}
\cref{tab:summary_experiments_table} summarizes all results for \compas and \adult.
To evaluate the data fairness of our approach and baselines \celisa and \tabfairgan we compare the representation rate \( \RR \) and statistical rate \( \SR \). We first note that \tabfairgan does not target \( \RR \) directly.
To evaluate the information maintained after debiasing, we measure the KL between the original dataset and debiased samples.

All approaches apart from \tabfairgan provides an increase in fairness (for both \( \RR \) and \( \SR \)). Across \fbde variants, we notice a trade-off between fairness and KL, where exact leveraging provides stronger fairness than relative leveraging at the cost of higher KL.
\fbde performs better for both fairness and KL when compared to \tabfairgan.
We notice that \tabfairgan struggles in the smaller \compas dataset and can even harm fairness (notice the s.t.d.).
Notably, the KL is significantly worse than other approaches as a result of miss-matching of input distribution \( \meas{P} \) support (with small \( \delta = 10^{-9} \) added for unsupported regions) --- possibly as a result of the transformation from discrete to continuous domains \citep{rgTF}; in addition to possible mode collapse in training \citep{ttvIG}.
\celisa has the best \( \SR \), although it comes with a slight cost in KL when compared to \fbde.

It should be noted that \fbde's fairness target / budget is set to \( \tau = 0.8 \). So if a higher \( \SR \) is required, \( \tau \) can be increased. Although we take \( T = 32 \), as the leveraging coefficients decrease rapidly, we would expect a majority of the learning to occur in the initial iterations of the algorithm. This is indeed the case, as shown in \cref{fig:boost_iters_joint} (left). 
We also note that the decrease in KL is proportional to the accuracy of WLs. This follows the intuition of \cref{thm:kl_drop} --- more accurate weak learners implies larger boosting constants \( \gamma_{\cdot} \), which leads to larger drops in KL.

\paragraph{Prediction Fairness}
To evaluate the prediction fairness, we evaluate a decision tree classifier (\clf) (from \sklearn with max depth of 32) trained on debiased samples. \clf~is evaluated on \clf's statistical rate (with \( \hat{Y} = 1 \)) (\( \SRclf \)) and equality of opportunity ratio / true positive rate ratio (\EO).
Accuracy (\Acc) is evaluated to measure the degradation from utilizing debiased samples.

When comparing for prediction, all approaches compared to data provide an increase for both \( \SR_{\textrm{c}} \) and \( \EO \) with a slight decrease in \( \Acc \). For \compas, the \fbde variants and \mollifiera are comparable across fairness and utility; with \tabfairgan slightly lower in fairness and accuracy scores. In \adult, the \fbde variants and \tabfairgan are similar in performance, with \celisa having the best  \( \SR_{\textrm{c}} \) and \( \EO \) at the cost of the worst \( \Acc \).

\paragraph{Clustering Fairness}
To evaluate the clustering fairness, a K(\(=4\))-Means classifier (from \sklearn) is trained using the debiased samples.
The fairness considerations in clustering is two fold.
First we measure the difference in fairness across clusters (lower is better): we take the difference between the min and max ratio of privileged data points (\( \SSS=1 \)) in each cluster (as per \citet{cklvFC}) (\( \cPR \)); and between ratios of statistical rate (\( \cSR \)).
Secondly, to evaluate the quality, we measure the average Manholobis distance to designated cluster centers (\textsc{dist}).

\fairkmeans has the best utility \textsc{dist} across approaches. Interestingly, pre-processing approaches can still be competitive to \fairkmeans across the fairness measures. In particular, \( \cSR \) of the pre-processing approaches, particularly \fbde algorithms, can actually beat \fairkmeans --- which is perhaps expected as data \( \SR \) is specifically targeted. In \adult, \( \cPR \) was also significantly improved using pre-processing.

%

\paragraph{Interpretability}
Uniquely, \fbde allows for the learned distribution to be examined by analyzing the WLs learned. For instance, \cref{fig:wl_teaser} depicts a DT WL learned for \compas by \mollifiera in one of its folds. Importantly, the `realness' and `fakeness' equate to modifications of the initial perfectly fair distribution \( \meas{Q}_{0} \) to be come closer to \( \meas{P} \): by \cref{eq:joint_fbde} real predictions are up-weighted while fake predictions are down-weighted. This can be used to examine the underlying bias within input \( \meas{P}\). For example, taking `True' (twice) on the ``race''-``two\_year\_recid'' path ends on a majority ``Fake'' node, which indicates that to get closer to \( \meas{P} \), the probability of non-Caucasians re-offenders was increased.

From a proactive point of view, one can instead restrict the hypothesis class in which the WL outputs. For instance, one can restrict the input domain to not include sensitive attributes --- however, this may a cause harm in KL utility, and may still potentially learn spurious correlations in the form of proxies~\citep{dfkmsUP}. Alternatively, the interpretability of the WLs allows for a human-in-the-loop algorithm: an auditor can become the stopping criteria of \fbde by comparing the gain in utility of an additional WL against the possible unfairness its reweighting could causes.

\paragraph{Continuous Domain}
In addition to the discrete datasets of \compas and \adult, we consider the \textsc{Minneapolis} police stop dataset\footnote{Public at: \url{www.opendata.minneapolismn.gov/datasets/police-stop-data}} (binary \( \mathcal{S} = \rm{race} \))
which contains numeric / continuous features. In particular, we consider only a subset of features, taking position, race, and `person searched'. We evaluate \mollifiera
%
with 1 hidden layer (20 neuron) neural networks as WLs, with \( T = 16 \) iterations.

\cref{fig:boost_iters_joint} (right) plots the change in \( \SR \) versus KL across boosted updates. Here, a small \( \delta = 10^{-9} \) probability value is added to a binned domain to estimate the KL. We verify that the \( \SR \) stays above the budget in this continuous domain setting, where \( \SR(\meas{P}) = 0.684 \pm 0.005 \). Similarly, the \( \RR \) is improved (see Appendix). However, we note that the learned \( \RR \) is lower than what theory may suggest, \ie, \mollifiera gives \( \RR(\meas{Q}_{T}) = 0.847 \pm 0.002 \) which is slightly lower than the expected \( \sqrt{\tau} \approx 0.894 \) given by \cref{thm:exact_fairness} (original \( \RR(\meas{P}) = 0.183 \pm 0.001 \)). We attribute this miss-match in theory due to numerical approximation error in the estimation of samples via LMC and the MCMC estimation of \cref{eq:marginal_by_sampling_init}, where better estimates could improve performance. For instance, in our experiments we only use the Unadjusted Langevin algorithm for LMC; more sophisticated methods can be used for better samples \citep{rtEC}.
One differing aspect readers may notice is the non-monotonicity of the KL curve. In addition to numerical approximation error, the non-monotonicity may be occurring due to the binned (\( \delta \)) estimation of the KL.

\snegativespace
\section{Limitations and Conclusion}\label{sec-conc}

In this paper, we introduce a new boosting algorithm, \fbde, which learns exponential family distributions which are both representation rate and statistical rate fair.
To conclude, we highlight a few limitations of \fbde.

Firstly, \fbde is a boosting algorithm and thus relies on the WLA holding, \ie, our fairness and convergence guarantees rely on this. Thus the performance of the WLs should be interrogated to ensure that \fbde does not harm fairness and cause subsequent social harm, \eg, \cref{fig:boost_iters_joint} (left).

Secondly, we reiterate that \fbde is not an unilateral replacement of other (types of) fairness algorithms, where specialized algorithms can provide stronger guarantees for specific criteria. Instead, \fbde targets an upstream source of unfairness in the ML pipeline, which can eventually allow for other forms of fairness.
Nevertheless, when attempting to achieve these downstream fairness metrics one should be careful by examining \fbde's performance per boosting iterations. Indeed, some data settings can cause \fbde to have critical failure in, \eg, \( \SRclf \), see \cref{sec:extra_discrete_experiments}.

Lastly, we note that the fairness guarantees (\cref{thm:exact_fairness,thm:relative_fairness}) can break in practice as a result of numerical approximation error --- especially in continuous domain datasets. We leave improvements of, \eg, sampling for \fbde in continuous domains for future work.
\section*{Acknowledgements}

AS thanks members of the ANU Humanising Machine Intelligence program for discussions on fairness and ethical concerns in AI, and the
NeCTAR Research Cloud for providing computational resources, an Australian research platform
supported by the National Collaborative Research Infrastructure Strategy.
We thank Lydia Lucchesi for discussion regarding experiment datasets.

\bibliography{main}
\bibliographystyle{cust}

\newpage
\appendix
\onecolumn

\counterwithin{theorem}{section}
\counterwithin{figure}{section}
\counterwithin{table}{section}

\renewcommand\thesection{\Alph{section}}
\renewcommand\thesubsection{\thesection.\Roman{subsection}}
\renewcommand\thesubsubsection{\thesection.\Roman{subsection}.\arabic{subsubsection}}

\renewcommand{\thetable}{\Roman{table}}
\renewcommand{\thefigure}{\Roman{figure}}

\begin{center}
\Huge{Appendix}
\end{center}

\vspace{20pt}

\begin{abstract}
This is the Appendix to Paper "\papertitle". To
differentiate with the numbering in the main file, the sectioning is letter-based (A, B, ..., \etc). Thus, Theorems in the Appendix will be displayed as (A.1, B.3, ..., \etc). Additionally, figure and tables in the Appendix are numbered with Roman numerals (I, II, ..., \etc).
\end{abstract}

\vspace{10pt}

\section*{Table of contents}

\newcommand{\tocrow}[2]{%
\noindent $\hookrightarrow$ \cref{#1}: #2\hrulefill Pg \pageref{#1}\\
}

\noindent \textbf{Proofs} \\
\tocrow{sec:pf_bounded_boosted_sr}{Proof of \cref{lem:bounded_boosted_sr}}
\tocrow{sec:pf_exact_fairness}{Proof of \cref{thm:exact_fairness}}
\tocrow{sec:pf_relative_fairness}{Proof of \cref{thm:relative_fairness}}
\tocrow{sec:pf_kl_drop}{Proof of \cref{thm:kl_drop}}
\tocrow{sec:pf_statistical_difference}{Proof of \cref{thm:statistical_difference}}
\tocrow{sec:pf_suff_boost_steps}{Proof of \cref{cor:suff_boost_steps}}
\tocrow{sec:pf_relative_fairness_guarantee}{Proof of \cref{lem:relative_fairness_guarantee}}
\tocrow{sec:pf_relative_fair_distribution}{Proof of \cref{lem:relative_fair_distribution}}

\noindent \textbf{Additional Technical Discussion} \\
\tocrow{sec:analyzing_kl_drop_bound}{Analyzing KL Drop Bound}
\tocrow{sec:general_fair_mollifiers}{General Fair Mollifiers}
\tocrow{sec:mixing_prior}{Improvement in the Low / Sparse Data Regime via Prior Mixing}

\noindent \textbf{Additional Experiments} \\
\tocrow{sec:additional_dataset_descriptions}{Additional Dataset Descriptions}
\tocrow{sec:additional_discrete_experiments}{Extended Discrete Experiments}
\tocrow{sec:additional_continuous_experiments}{Additional Continuous Experiments}
\tocrow{sec:extra_discrete_experiments}{Extra Discrete Experiments}
\tocrow{sec:dutch_german_without_mixing}{\dutch and \german Without Mixing Priors}
\tocrow{sec:in_processing_experiments}{In-Processing Experiments}

\newpage

\section{Proof of \texorpdfstring{\cref{lem:bounded_boosted_sr}}{3.1}} \label{sec:pf_bounded_boosted_sr}

\begin{myproof}
    To lower bound the pairwise statistical rate, we utilize the `unrolled' computation of the required marginal measures. For arbitrary \( y, s \) we have
    \begin{align*}
        \meas{Q}_{t}(y | s)
        &= \frac{\meas{Q}_{t}(y, s)}{\meas{Q}_{t}(s)} 
        = \frac{\meas{Q}_{t-1}(y, s)}{\meas{Q}_{t-1}(s)} \cdot \frac{Z_{t}(y, s)}{Z_{t}(s)} 
         \cdots 
        =  \frac{\meas{Q}_{0}(y, s)}{\meas{Q}_{0}(s)} \cdot \prod_{i=1}^{t}  \frac{Z_{i}(y, s)}{Z_{i}(s)} \\
        &=  \meas{Q}_{0}(y | s) \cdot \prod_{i=1}^{t}  \frac{Z_{i}(y, s)}{Z_{i}(s)}.
    \end{align*}
    Thus, the pairwise statistical rate for \( s, s' \) is recovered immediately by taking a lower bound on the statistical rate of \( \meas{Q}_{0} \).
\end{myproof}
\section{Proof of \texorpdfstring{\cref{thm:exact_fairness}}{3.2}}
\label{sec:pf_exact_fairness}

To prove \cref{thm:exact_fairness}, we split the proof into the statistical rate component of the theorem and the representation rate component of the theorem.

\subsection{Statistical Rate}

To prove the exact fairness guarantee for statistical rate, we first use the following lemmas.

\begin{lemma}\label{lem:bounded_general_sr}
    For \( T \geq 1 \), we have
    \begin{equation}
        \SR(\meas{Q}_{T}) \geq \sr_{0} \cdot \exp\left( -4C \sum_{i=1}^{T} \leverage_{t}\right).
    \end{equation}
\end{lemma}
\begin{myproof}
    We first consider bounds on the difference between log-normalizer (log-partition) functions.
    First by taking the smallest and largest values of the classifier \( c_{t}(.) \) (which assumed to be bounded in \( [-C, C] \)), we have
    \begin{align*}
        - C \cdot \leverage_{t} \leq \leverage_{t} c_{t}(x, y, s) \leq C \cdot \leverage_{t}
    \end{align*}
    Then by taking the exponential, integrand (w.r.t., measure \( \dmeas{Q}_{t-1}(x | y, s) \)), and logarithm, we get
    \begin{eqnarray}
        - C \leverage_{t} \leq &\log \left( \int_{\mathcal{X}} \exp(\leverage_{t} c_{t}(x, y, s)) \dmeas{Q}_{t-1}(x | y, s) \right) & \leq C \leverage_{t} \nonumber \\
        - C \leverage_{t} \leq & \log Z_{t}(y, s) & \leq C \leverage_{t}. \label{eq:log_partition_bound}
    \end{eqnarray}
    Then by taking the largest difference between \( \log Z_{t}(y, s) \) and \( \log Z_{t}(y, s') \), we have that for any \( s, s' \)
    \begin{equation}\label{eq:joint_normalizer_bound}
        - 2C \leverage_{t} \leq \log Z_{t}(y, s) - \log Z_{t}(y, s') \leq 2C \leverage_{t}.
    \end{equation}
    Identically, we can bound the difference for \( \log Z_{t}(s) \) by replacing the measure in the integrand step with \( \dmeas{Q}_{t-1}(x,y | s) \), giving:
    \begin{equation}\label{eq:marginal_normalizer_bound}
        - 2C \leverage_{t} \leq \log Z_{t}(s) - \log Z_{t}(s') \leq 2C \leverage_{t}.
    \end{equation}

    \begin{remark}\label{remark:log_partition_bound}
        We can also bound \( \log Z_{t} \) (\ie, \cref{eq:log_partition_bound}) using the same method by, again, replacing the measure to be \( \dmeas{Q}_{t-1} \)
    \end{remark}
    
    Together, \eqref{eq:joint_normalizer_bound} and \eqref{eq:marginal_normalizer_bound} allows us to simplify \cref{lem:bounded_boosted_sr}:
    \begin{align*}
        \SR(\meas{Q}_{t}, s, s'; y) 
        &\geq \sr_{0} \cdot \prod_{i=1}^{t} \left( \frac{Z_{i}(s')}{Z_{i}(s)} \cdot \frac{Z_{i}(y, s)}{Z_{i}(y, s')} \right) 
        = \sr_{0} \cdot \exp\left(\sum_{i=1}^{t} \log\left( \frac{Z_{i}(s')}{Z_{i}(s)} \cdot \frac{Z_{i}(y, s)}{Z_{i}(y, s')} \right)\right) \\
        &\geq \sr_{0} \cdot \exp\left(-4C \sum_{i=1}^{t} \leverage_i \right).
    \end{align*}
    As this holds for any \( t, s, s' \), the bound holds for \( \SR(\meas{Q}_{T}) \) as required.
\end{myproof}

We can now prove the statistical rate part of \cref{thm:exact_fairness}.

\begin{myproof}
    Taking \( \leverage_{t} = - ({C2^{t+1}})^{-1} \log (\tau / \sr_{0}) \), from \cref{lem:bounded_general_sr} we get:
    \begin{align*}
        \SR(\meas{Q}_{t})
        &\geq \sr_{0} \cdot \exp\left(-4C \sum_{i=1}^{t} \leverage_i \right)
        = \sr_{0} \cdot \exp\left(2 \log\left(\frac{\tau}{\sr_{0}}\right) \sum_{i=1}^{t} \frac{1}{2^{i}} \right) \\
        &= \sr_{0} \cdot \exp\left(2 \log\left(\frac{\tau}{\sr_{0}}\right) \left( 1 - \frac{1}{2^t} \right) \right) \\
        &\geq \sr_{0} \cdot \exp\left(\log\left(\frac{\tau}{\sr_{0}}\right)\right)
        = \tau.
    \end{align*}
    The last inequality follows that we are only considering \( t \geq 1 \).
    Thus \( \SR(\meas{Q}_{T}) > \tau \) as required.
\end{myproof}

We now move to proving the representation rate component of \cref{thm:exact_fairness}.

\subsection{Representation Rate}


To lower bound the corresponding representation rate, we present a similar lemma to \cref{lem:bounded_general_sr}.

\begin{lemma} \label{lem:bounded_general_rr}
    For \( T \geq 1 \), we have
    \begin{equation}
        \RR(\meas{Q}_{T}) \geq \rr_{0} \cdot \exp\left( -2C \sum_{i=1}^{T} \leverage_{t} \right).
    \end{equation}
\end{lemma}
\begin{myproof}
    Proving \cref{lem:bounded_general_rr} is similar to the proof of \cref{lem:bounded_boosted_sr}. Specifically, we use \cref{eq:marginal_normalizer_bound} to bound the representation rate.
    
    First, we calculate the marginal distribution via a recursive relation similar to \cref{lem:bounded_boosted_sr}:
    \begin{align*}
        \meas{Q}_{t}(s) = \meas{Q}_{t-1}(s) \cdot \frac{Z_{t}(s)}{Z_{t}} = \ldots = \meas{Q}_{0}(s) \cdot \prod_{i=1} \frac{Z_{i}(s)}{Z_{t}}.
    \end{align*}

    Now, we \cref{eq:marginal_normalizer_bound} to bound the resulting representation rate:
    \begin{align*}
        \RR(\meas{Q}_{t}, s, s^{\prime})
        &\geq \rr_{0} \cdot \prod_{i=1}^{t} \frac{Z_{i}(s)}{Z_{i}(s^{\prime})}
        = \rr_{0} \cdot \exp\left( \sum_{i=1}^{t} \log \left(\frac{Z_{i}(s)}{Z_{i}(s^{\prime})}\right) \right) \\
        &\geq \rr_{0} \cdot \exp\left( -2C \sum_{i=1}^{t} \leverage_{t} \right).
    \end{align*}
    As the bound holds for all \( s, s^{\prime} \), the bound holds for \( \RR(\meas{Q}_{t})\) required.
\end{myproof}

\begin{remark}
    Notice that \cref{lem:bounded_general_rr} holds even when we restrict \( \meas{Q}_{t} \) to only have non-sensitive features \( x \) and sensitive features \( s \) (\ie, no labels being separately considered). This allows for leveraging schemes \( \leverage_{t} \) to be designed for representation rate specifically. For instance, for a target representation rate \( \tau_{\rr} \) we can establish exact and relative representation rate fairness guarantees by replacing ``\( \tau / \sr_{0} \)'' to ``\( \tau_{\rr} / \rr_{0} \)''.
\end{remark}

We can now prove the representation rate component of \cref{thm:exact_fairness}.

\begin{myproof}
    Taking \( \leverage_{t} = - ({C2^{t+1}})^{-1} \log (\tau / \sr_{0}) \), from \cref{lem:bounded_general_rr} we get:
    \begin{align*}
        \RR(\meas{Q}_{t})
        &\geq \rr_{0} \cdot \exp\left( -2C \sum_{i=1}^{t} \leverage_{t} \right)
        = \rr_{0} \cdot \exp\left( \log \left(\frac{\tau}{\sr_{0}}\right) \sum_{i=1}^{t} \frac{1}{2^{i}} \right) \\
        &= \rr_{0} \cdot \exp\left( \log \left(\frac{\tau}{\sr_{0}}\right) \left( 1 - \frac{1}{2^t} \right) \right) \\
        &\leq  \rr_{0} \cdot \exp\left(\frac{1}{2} \log \left(\frac{\tau}{\sr_{0}}\right) \right)
        = \rr_{0} \cdot \sqrt{\frac{\tau}{\sr_{0}}}.
    \end{align*}
    As required.
\end{myproof}
\section{Proof of \texorpdfstring{\cref{thm:relative_fairness}}{3.3}}
\label{sec:pf_relative_fairness}

The relative fairness guarantee is proven similarly to the exact fairness guarantee in \cref{sec:pf_exact_fairness}. We also split the proof into statistical rate and representation rate components of the theorem.

\subsection{Statistical Rate}

\begin{myproof}
    Taking \( \leverage_{t} = - ({4Ct})^{-1} \log (\tau / \sr_{0}) \), from \cref{lem:bounded_general_sr} we get:
    \begin{align*}
        \SR(\meas{Q}_{t})
        &\geq \sr_{0} \cdot \exp\left(-4C \sum_{i=1}^{t} \leverage_i \right) \\
        &= \sr_{0} \cdot \exp\left(\log\left(\frac{\tau}{\sr_{0}}\right) \sum_{i=1}^{t} \frac{1}{i} \right) \\
        &> \sr_{0} \cdot \exp\left(\log\left(\frac{\tau}{\sr_{0}}\right) \cdot \left( 1 + \int_{1}^{t} \frac{1}{i} \dmeas{}i \right) \right) \\
        &= \sr_{0} \cdot \exp\left(\log\left(\frac{\tau}{\sr_{0}}\right) \cdot \left( 1 + \log t \right) \right) \\
        &= \sr_{0} \cdot \exp\left(\log\left(\frac{\tau}{\sr_{0}}\right)\right) \cdot \exp\left(\log\left(\frac{\tau}{\sr_{0}}\right) \cdot \log t \right) \\
        &= \tau \cdot \left( \frac{\tau}{\sr_{0}} \right)^{\log t} \\
        &= \sr_{0}^{- \log t} \cdot \tau^{1 + \log t}.
    \end{align*}
    Here we note that \( \log (\tau / \sr_{0}) < 0 \) as \( \tau < \sr_{0} \). Furthermore, note that \( \sr_{0}^{-\log t} \) is an increasing function of \( t \geq 1 \), Thus taking \( t = 1 \) for this term:
    \begin{align*}
        \SR(\meas{Q}_{t}) > \sr_{0}^{- \log 1} \cdot \tau^{1 + \log t} = \tau^{1 + \log t}.
    \end{align*}
    Thus \( \SR(\meas{Q}_{T}) > \tau^{1 + \log T} \) as required.
\end{myproof}

\subsection{Representation Rate}


Using \cref{lem:bounded_general_rr}, we prove the representation rate component of \cref{thm:relative_fairness}.

\begin{myproof}
    Taking \( \leverage_{t} = - ({4Ct})^{-1} \log (\tau / \sr_{0}) \), from \cref{lem:bounded_general_rr} we get:
    \begin{align*}
        \RR(\meas{Q}_{t})
        &\geq \rr_{0} \cdot \exp\left( -2C \sum_{i=1}^{t} \leverage_{t} \right) \\
        &= \rr_{0} \cdot \exp\left( \frac{1}{2} \cdot \log \left(\frac{\tau}{\sr_{0}}\right) \sum_{i=1}^{t} \frac{1}{i} \right) \\
        &> \rr_{0} \cdot \exp\left( \frac{1}{2} \cdot \log \left(\frac{\tau}{\sr_{0}}\right) \left( 1 + \int_{1}^{t} \frac{1}{i} \dmeas{}i \right) \right) \\
        &= \rr_{0} \cdot \exp\left( \frac{1}{2} \cdot \log \left(\frac{\tau}{\sr_{0}}\right) \left( 1 + \log t \right) \right) \\
        &= \rr_{0} \cdot \left(\sqrt{\frac{\tau}{\sr_{0}}}\right)^{1 + \log t}.
    \end{align*}
    As required.
\end{myproof}
\section{Proof of \texorpdfstring{\cref{thm:kl_drop}}{3.5}}
\label{sec:pf_kl_drop}

\begin{lemma}\label{lem:kl_drop}
    The \kl-drop is given by:
    \begin{equation}
        \kl(\meas{P}, \meas{Q}_{t-1}) - 
        \kl(\meas{P}, \meas{Q}_{t})
        = \leverage_{t} \cdot \expect_{\meas{P}}[c_{t}] - \log \expect_{\meas{Q}_{t-1}}[\exp(\leverage_{t} c_{t})].
    \end{equation}
\end{lemma}
\begin{myproof}
    The drop is given by the following:
    \begin{align*}
        \kl(\meas{P}, \meas{Q}_{t-1}) -  \kl(\meas{P}, \meas{Q}_{t})
        &= \int \log\left( \frac{\meas{P}}{\meas{Q}_{t-1}} \right) \dmeas{P} - \int \log\left( \frac{\meas{P}}{\meas{Q}_{t}} \right) \dmeas{P} \\
        &= \int \log\left( \frac{\meas{P}}{\meas{Q}_{t-1}} \right) \dmeas{P} - \int \log\left( \frac{\meas{P}}{\exp\left(\leverage_{t} c_{t} - \log Z_{t} \right) \cdot \meas{Q}_{t-1}} \right) \dmeas{P} \\
        &= \int_{\mathcal{X} \times \mathcal{Y} \times \mathcal{S}} \log\left( \frac{\exp\left(\leverage_{t} c_{t}(x, y, s) - \log Z_{t} \right) \cdot \meas{Q}_{t-1}(x, y, s)}{\meas{Q}_{t-1}(x, y, s)} \right) \dmeas{P}(x, y, s) \\
        &= \int_{\mathcal{X} \times \mathcal{Y} \times \mathcal{S}} \left(\leverage_{t} c_{t}(x, y, s) - \log Z_{t} \right) \dmeas{P}(x, y, s) \\
        &= \leverage_{t} \cdot \expect_{\meas{P}} [c_{t}] - \log Z_{t} \\
        &= \leverage_{t} \cdot \expect_{\meas{P}} [c_{t}] - \log \expect_{\meas{Q}_{t-1}}[\exp(\leverage_{t} c_{t})],
    \end{align*}
    where the last line follows from the definition of the normalizing term (noting it can be expressed as an expectation).
\end{myproof}

\begin{lemma}\label{lem:expect_exp_bound}
    Given the WLA and WLs with bounding constant \( C > 0 \), given coefficients:
    \begin{align*}
        A = \frac{1}{4C^2} \left[ \exp(C) - \exp(-C) \right];
        \quad\quad
        B = \frac{1}{2C} \left[ \exp(C) - \exp(-C) \right];
        \quad\quad \\
        K = \frac{\exp(C) + \exp(-C)}{2} - AC^2,
    \end{align*}
    then for WL \( c_{t}(.) \),
    \begin{equation}
        \expect_{\meas{Q}_{t-1}}[\exp(c_{t})] < \exp(- \Gamma(\var_{\meas{Q}_{t-1}}[c_{t}], \gamma_{\meas{Q}}^{t}))
    \end{equation}
    where
    \begin{equation}
        \Gamma(\var_{\meas{Q}_{t-1}}[c_{t}], \gamma_{\meas{Q}}^{t}) = \log \left( (A \cdot \var_{\meas{Q}_{t-1}}[c_{t}] - C^{2} A \cdot \gamma_{Q}^{t} + K)^{-1} \right).
    \end{equation}
\end{lemma}
\begin{myproof}
    First notice that \( f(x) = Ax^{2} + Bx + K \) is an upper bound of \( \exp(x) \) on \( [-C, C] \), \ie, \( \exp(x) \leq f(x) \). The coefficients are derived from fitting \( f(x) \) to satisfy points \( (-C, \exp(-C)) \) and \( (C, \exp(C)) \); and ensuring monotonicity on the interval. \cref{fig:upperbound} depicts the bound (and a comparison to a prior bound used in \citet{hbcnLD}).

    From this, we have:
    \begin{align*}
        \expect_{\meas{Q}_{t-1}}[\exp(c_{t})] 
        &\leq \expect_{\meas{Q}_{t-1}}[A c_{t}^{2} + B c_{t} + K] \\
        &= A \cdot \expect_{\meas{Q}_{t-1}}[c_{t}^{2}] + B \cdot \expect_{\meas{Q}_{t-1}}[c_{t}] + K \\
        &= A \cdot (\var_{\meas{Q}_{t-1}}[c_{t}] + \expect_{\meas{Q}_{t-1}}[c_{t}]^{2}) + B \cdot \expect_{\meas{Q}_{t-1}}[c_{t}] + K \\
        &= A \cdot \var_{\meas{Q}_{t-1}}[c_{t}] + \expect_{\meas{Q}_{t-1}}[c_{t}] \cdot (A \cdot \expect_{\meas{Q}_{t-1}}[c_{t}] + B) + K.
    \end{align*}
    Note from the WLA we have \( \expect_{\meas{Q}_{t-1}}[-c_{t}] \geq C \cdot \gamma_{\meas{Q}}^{t} \iff \expect_{\meas{Q}_{t-1}}[c_{t}] \leq -C \cdot \gamma_{\meas{Q}}^{t} < 0 \). Furthermore, notice that \( Az + B > 0 \) and increasing for \( z \in [-C, C] \):
    \begin{align*}
        Az + B = \frac{\exp(C) - \exp(-C)}{2C}\left[ \frac{z}{2C} + 1 \right] > 0 \quad\quad (\textrm{for } z \in [-C, C]).
    \end{align*}
    Thus,
    \begin{align*}
        \expect_{\meas{Q}_{t-1}}[\exp(c_{t})] 
        &\leq A \cdot \var_{\meas{Q}_{t-1}}[c_{t}] + \expect_{\meas{Q}_{t-1}}[c_{t}] \cdot (A \cdot \expect_{\meas{Q}_{t-1}}[c_{t}] + B) + K \\
        &\leq A \cdot \var_{\meas{Q}_{t-1}}[c_{t}] - \gamma_{\meas{Q}} \cdot C \cdot (A \cdot \expect_{\meas{Q}_{t-1}}[c_{t}] + B) + K \\
        &\leq A \cdot \var_{\meas{Q}_{t-1}}[c_{t}] - \gamma_{\meas{Q}} \cdot C \cdot (B - C \cdot A) + K.
    \end{align*}
    Which gives the Lemma, as per:
    \begin{align*}
        \expect_{\meas{Q}_{t-1}}[\exp(c_{t})] 
        &\leq A \cdot \var_{\meas{Q}_{t-1}}[c_{t}] - \gamma_{\meas{Q}} \cdot C \cdot (B - C \cdot A) + K \\
        &= \exp \left( \log \left( A \cdot \var_{\meas{Q}_{t-1}}[c_{t}] - \gamma_{\meas{Q}} \cdot C \cdot (B - C \cdot A) + K \right) \right) \\
        &= \exp \left( (-1) \cdot -\log \left( A \cdot \var_{\meas{Q}_{t-1}}[c_{t}] - \gamma_{\meas{Q}} \cdot C \cdot (B - C \cdot A) + K \right) \right) \\
        &= \exp \left( -\log \left( A \cdot \var_{\meas{Q}_{t-1}}[c_{t}] - \gamma_{\meas{Q}} \cdot C \cdot (B - C \cdot A) + K \right)^{-1} \right).
    \end{align*}
    Finally, we conclude the proof by noting that \( 2CA = B \).
\end{myproof}

\begin{remark}
    We note that the quadratic approximation \( f(x) = Ax^2 + Bx + K \) gets worse as \( C \) increases. Intuitively, as the interval approximation increases, the quadratic finds it more difficult to ``catch-up'' to the exponential growth of \( x \mapsto \exp(x) \). See \cref{fig:upperbound} for a pictorial view of this.
\end{remark}

\begin{figure}[ht]
    \centering
    \includegraphics[width=0.4\textwidth]{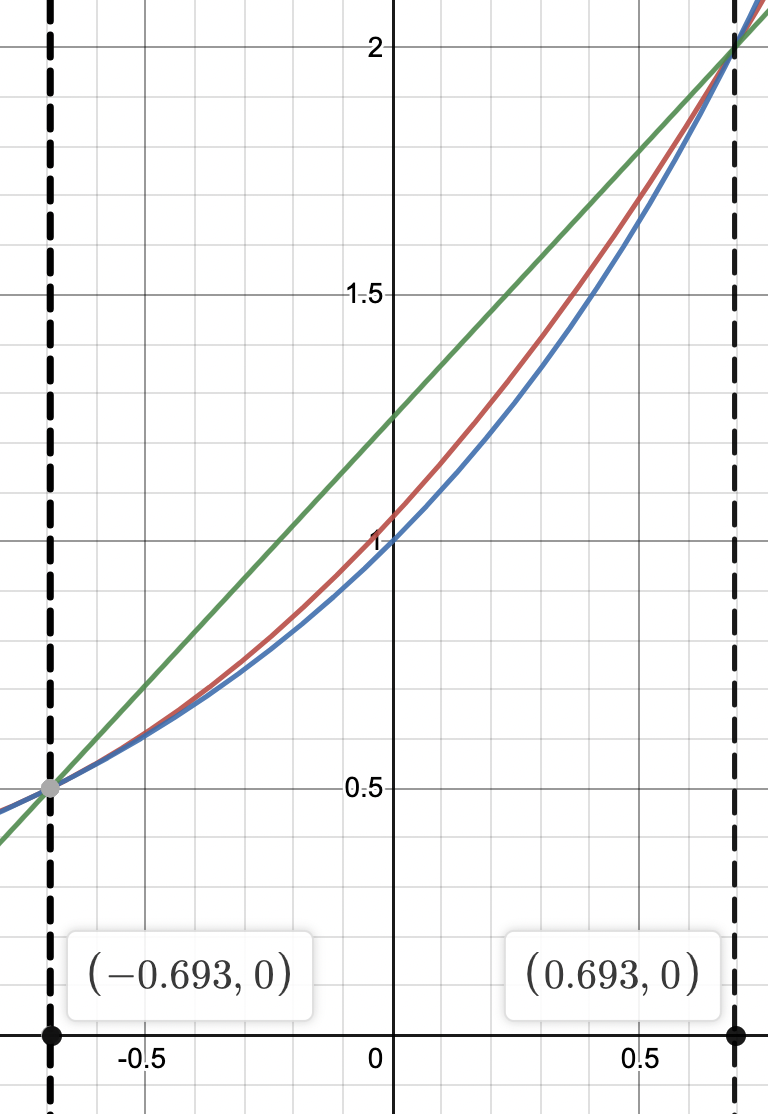}
    \caption{Upper bounds for \( \exp(x) \). The curves are: original \( \exp(x) \) function (blue); our quadratic function (red); and linear function used in \citet{hbcnLD} (green).}
    \label{fig:upperbound}
\end{figure}

Now we can prove \cref{thm:kl_drop}.

\begin{myproof}
    Given that \( \tau / \sr_{0} > \exp(-4C) \), we have that for each leveraging scheme in \cref{thm:exact_fairness,thm:relative_fairness} the leverage is bounded by \( \leverage_{t} \leq 1 \) for all \( t \).

    Thus with \cref{lem:kl_drop,lem:expect_exp_bound}, we bound the KL:
    \begin{align*}
        \kl(\meas{P}, \meas{Q}_{t-1}) - \kl(\meas{P}, \meas{Q}_{t})
        &= \leverage_{t} \cdot \expect_{\meas{P}}[c_{t}] - \log \expect_{\meas{Q}_{t-1}}[\exp(\leverage_{t} c_{t})] \tag{\cref{lem:kl_drop}} \\
        &\geq \leverage_{t} \cdot \expect_{\meas{P}}[c_{t}] - \leverage_{t} \cdot \log \expect_{\meas{Q}_{t-1}}[\exp(c_{t})] \tag{a} \\
        &= \leverage_{t} \cdot (\expect_{\meas{P}}[c_{t}] - \log \expect_{\meas{Q}_{t-1}}[\exp(c_{t})]) \\
        &\geq \leverage_{t} \cdot (\expect_{\meas{P}}[c_{t}] - \log \exp(- \Gamma(\var_{\meas{Q}_{t-1}}[c_{t}], \gamma_{\meas{Q}}^{t})) \tag{\cref{lem:expect_exp_bound}} \\
        &= \leverage_{t} \cdot (\expect_{\meas{P}}[c_{t}] + \Gamma(\var_{\meas{Q}_{t-1}}[c_{t}], \gamma_{\meas{Q}}^{t}) \\
        &\geq \leverage_{t} \cdot (C \cdot \gamma_{\meas{Q}}^{t} + \Gamma(\var_{\meas{Q}_{t-1}}[c_{t}], \gamma_{\meas{Q}}^{t}) \tag{WLA cond. on \( \meas{P} \)},
    \end{align*}
    where (a) is given by Jensen's inequality and noting that \( x \mapsto x^{1 / \leverage_{t}} \) is convex as \( \leverage_{t} \leq 1 \).
\end{myproof}
\section{Proof of \texorpdfstring{\cref{thm:statistical_difference}}{3.6}}
\label{sec:pf_statistical_difference}

To prove the theorem, we breakdown the proof into the corresponding upper and lower bounds. In particular, we first prove general Lemmas first and then specialize for the specific leveraging coefficients \( \leverage_{t} \) to give the Theorem.

\subsection{Upper Bound}

\begin{lemma}\label{lem:staticial_upper}
    Suppose \( C > 0 \), then
    \begin{equation}
        \kl(\meas{P}, \meas{Q}_{0}) -  \kl(\meas{P}, \meas{Q}_{T}) \leq 2C \cdot \sum_{k=1}^{T} \leverage_{k}.
    \end{equation} 
\end{lemma}
\begin{myproof}
    We first note that by unrolling the definition of \( \meas{Q}_{T} \) (\cref{eq:joint_fbde}), the density can be stated directly in terms of the initial distribution:
    \begin{equation}\label{eq:fbde_as_init}
        \meas{Q}_{T} = \meas{Q}_{0} \exp(\langle \ve{\leverage}, \ve{c}\rangle - \varphi(\ve{\leverage})) ,
    \end{equation}
    where \( \ve{\leverage} = (\leverage_{1}, \ldots, \leverage_{T}) \), \( \ve{c} = (c_{1}, \ldots, c_{T}) \), and \( \varphi(\ve{\leverage}) = \sum_{i=1}^{T} \log Z_{i} \).
    
    Thus similarly to \cref{lem:kl_drop}, calculate the total drop:
    \begin{align*}
        \kl(\meas{P}, \meas{Q}_{0}) -  \kl(\meas{P}, \meas{Q}_{T})
        &= \int \log\left( \frac{\meas{P}}{\meas{Q}_{0}} \right) \dmeas{P} - \int \log\left( \frac{\meas{P}}{\meas{Q}_{T}} \right) \dmeas{P} \\
        &= \int \log\left( \frac{\meas{P}}{\meas{Q}_{0}} \right) \dmeas{P} - \int \log\left( \frac{\meas{P}}{\meas{Q}_{0} \exp(\langle \ve{\leverage}, \ve{c}\rangle - \varphi(\ve{\leverage}))} \right) \dmeas{P} \\
        &= \int \log\left( \frac{\meas{Q}_{0} \exp\left(\langle \ve{\leverage}, \ve{c}\rangle - \varphi(\ve{\leverage}) \right)}{\meas{Q}_{0}} \right) \dmeas{P} \\
        &= \int \left(\langle \ve{\leverage}, \ve{c}\rangle - \varphi(\ve{\leverage}) \right) \dmeas{P} \\
        &= \langle \ve{\leverage}, \int \ve{c} \, \dmeas{P} \rangle - \varphi(\ve{\leverage}).
    \end{align*}
    We now upper bound each of these terms. Firstly, we note that as each weak learner \( c_{t} \) is upper bounded by \( C \), we can easily bound the first term:
    \begin{align*}
        \langle \ve{\leverage}, \int \ve{c} \, \dmeas{P} \rangle
        \leq 
        \langle \ve{\leverage}, \int C \cdot \ve{1} \, \dmeas{P} \rangle
        = C \cdot \sum_{k=1}^{T} \leverage_{k}.
    \end{align*}
    
    Similarly, we can bound the second term by noting \cref{remark:log_partition_bound}. Thus using an identical bound to \cref{eq:log_partition_bound} we get:
    \begin{align*}
        \varphi(\ve{\leverage})
        = \sum_{i=1}^{T} \log Z_{i}   
        \geq \sum_{k=1}^{T} (-C \leverage_{k})
        = - C \cdot \sum_{k=1}^{T} \leverage_{k}
    \end{align*}
    
    Thus it follows that 
    \begin{align*}
        \kl(\meas{P}, \meas{Q}_{0}) -  \kl(\meas{P}, \meas{Q}_{T}) \leq 2C \cdot \sum_{k=1}^{T} \leverage_{k}.
    \end{align*}
    As required.
\end{myproof}

\subsection{Lower Bound}

\begin{lemma}\label{lem:staticial_lower}
    Suppose \( \alpha(\gamma) \defeq \min_{t} \Gamma( \var_{\meas{Q}_{t-1}}[c_t], \gamma) / (\gamma C) \), \( C > 0 \), \( \leverage \leq 1\) for all \( t \), and \( T > 0 \), then:
    \begin{align*}
         \kl(\meas{P}, \meas{Q}_{0}) - \kl(\meas{P}, \meas{Q}_{T})
         &\geq \left( \gamma_{\meas{P}} + \gamma_{\meas{Q}} \cdot \alpha(\gamma_{\meas{Q}}) \right) \cdot C \cdot \sum_{k=1}^{T} \leverage_{k}.
    \end{align*}
\end{lemma}
\begin{myproof}
    To establish the lower bound, we repeatedly apply \cref{thm:kl_drop}:
    \begin{align*}
         \kl(\meas{P}, \meas{Q}_{T})
         \leq \kl(\meas{P}, \meas{Q}_{T-1}) -  \leverage_{T} \cdot \Lambda_{T}
         \leq \quad \ldots \quad
         \leq \kl(\meas{P}, \meas{Q}_{0}) -  \sum_{k=1}^{T} \leverage_{k} \cdot \Lambda_{k}.
    \end{align*}
    
    Let \( \alpha(\gamma) \defeq \min_{t} \Gamma( \var_{\meas{Q}_{t-1}}[c_t], \gamma) / (\gamma C) \). Note that the variance is bounded by \( 0 \leq \var_{\meas{Q}_{t-1}}[c_{t}] \leq \expect_{\meas{Q}}[c_{t}^{2}] \leq C^{2} \) and that \( \Gamma(\cdot, \gamma) \) is a decreasing function for all \( \gamma \in [0, 1] \).
    Thus we have
    \begin{align*}
         \kl(\meas{P}, \meas{Q}_{0}) - \kl(\meas{P}, \meas{Q}_{T})
         &\geq \sum_{k=1}^{T} \leverage_{k} \cdot \Lambda_{k} \\
         &= \sum_{k=1}^{T} \leverage_{k} \cdot \left( \gamma_{\meas{P}}^{t} \cdot C + \Gamma(\var_{\meas{Q}_{t-1}}[c_{t}], \gamma_{\meas{Q}}^t) \right) \\
         &= \sum_{k=1}^{T} \leverage_{k} \cdot \left( \gamma_{\meas{P}} \cdot C + \Gamma(\var_{\meas{Q}_{t-1}}[c_{t}], \gamma_{\meas{Q}}) \right) \tag{Fixed const. assum.} \\
         &\geq \sum_{k=1}^{T} \leverage_{k} \cdot \left( \gamma_{\meas{P}} \cdot C + \gamma_{\meas{Q}} \cdot C \cdot \alpha(\gamma_{\meas{Q}}) \right) \tag{Definition of \( \alpha \)} \\
         &= \left( \gamma_{\meas{P}} + \gamma_{\meas{Q}} \cdot \alpha(\gamma_{\meas{Q}}) \right) \cdot C \cdot \sum_{k=1}^{T} \leverage_{k}.
    \end{align*}
    As required.
\end{myproof}

\subsection{For Fairness Leverage Coefficients}

From \cref{lem:staticial_lower,lem:staticial_upper} we have that:
\begin{theorem}\label{thm:delta_kl_gen_bounds}
    Suppose \( \alpha(\gamma) \defeq \min_{t} \Gamma( \var_{\meas{Q}_{t-1}}[c_t], \gamma) / (\gamma C) \), \( C > 0 \), \( \leverage \leq 1\) for all \( t \), and \( T > 0 \), then:
\begin{align*}
    \left( \gamma_{\meas{P}} + \gamma_{\meas{Q}} \cdot \alpha(\gamma_{\meas{Q}}) \right) \cdot C \cdot \sum_{k=1}^{T} \leverage_{k}
    \leq 
    \kl(\meas{P}, \meas{Q}_{0}) -  \kl(\meas{P}, \meas{Q}_{T})
    \leq
    2C \cdot \sum_{k=1}^{T} \leverage_{k}.
\end{align*}
\end{theorem}

To upper and lower bound the statistical difference \( \Delta(\meas{Q}_{T}) \), we upper and lower bound the sum of leveraging coefficients \( \leverage_{t} \) for each of the fairness guarantees.

\subsubsection{Exact Fairness Leverage}

\begin{myproof}
Taking \( \leverage_{t} = - ({C2^{t+1}})^{-1} \log (\tau / \sr_{0}) \) from geometric series, we have
\begin{align*}
    \sum_{k=1}^{T} \leverage_{k}
    = - \frac{\log (\tau / \sr_{0})}{2C} \sum_{k=1}^{T} \frac{1}{2^{k}}
    = - \frac{\log (\tau / \sr_{0})}{2C} \left( 1 - \frac{1}{2^{T}} \right)
\end{align*}

Thus we have
\begin{align*}
    &- \log( \tau / \sr_{0}) \cdot
    \frac{\gamma_{\meas{P}} + \gamma_{\meas{Q}} \cdot \alpha(\gamma_{\meas{Q}}) }{2} \cdot \left( 1 - \frac{1}{2^T} \right) \\
    &\quad\leq
    \kl(\meas{P}, \meas{Q}_{0}) -  \kl(\meas{P}, \meas{Q}_{T}) \leq - \log( \tau / \sr_{0}) \left( 1 - \frac{1}{2^T} \right)
\end{align*}

As required.
\end{myproof}

\subsubsection{Relative Fairness Leverage}

\begin{myproof}
From harmonic series, we have that
\begin{align*}
    \frac{1}{T} + \log T < \sum_{k=1}^{T} \frac{1}{k} < 1 + \log T.
\end{align*}
Thus taking \( \leverage_{t} = - ({4Ct})^{-1} \log (\tau / \sr_{0}) \) we have that
\begin{align*}
    -\log(\tau / \sr_{0}) \cdot \frac{1}{4C} \cdot \left(\frac{1}{T} + \log T\right) < \sum_{k=1}^{T} \leverage_{t} < -\log(\tau / \sr_{0}) \cdot \frac{1}{4C} \cdot \left(1 + \log T\right).
\end{align*}
This gives the following bounds:
\begin{align*}
    &-\log(\tau / \sr_{0}) \cdot \frac{\gamma_{\meas{P}} + \gamma_{\meas{Q}} \cdot \alpha(\gamma_{\meas{Q}})}{4} \cdot \left(\frac{1}{T} + \log T \right)
    \leq 
    \kl(\meas{P}, \meas{Q}_{0}) -  \kl(\meas{P}, \meas{Q}_{T})  \\
    &\quad
    \leq - \log(\tau / \sr_{0}) \cdot \frac{1}{2} \cdot \left( 1 + \log T \right).
\end{align*}
As required.
\end{myproof}
\section{Proof of \texorpdfstring{\cref{cor:suff_boost_steps}}{3.7}}
\label{sec:pf_suff_boost_steps}

\begin{myproof}
    The statement follows directly from utilizing \cref{thm:delta_kl_gen_bounds} and utilizing the series lower bounds used in the proof of \cref{thm:statistical_difference}.

    Indeed, we have
    \begin{align*}
    & 
    \kl(\meas{P}, \meas{Q}_{0}) -  \kl(\meas{P}, \meas{Q}_{T}) \geq
    \left( \gamma_{\meas{P}} + \gamma_{\meas{Q}} \cdot \alpha(\gamma_{\meas{Q}}) \right) \cdot C \cdot \sum_{k=1}^{T} \leverage_{k}
    \\
    \iff &
    \kl(\meas{P}, \meas{Q}_{T}) \leq
    \kl(\meas{P}, \meas{Q}_{0})
    -
    \left( \gamma_{\meas{P}} + \gamma_{\meas{Q}} \cdot \alpha(\gamma_{\meas{Q}}) \right) \cdot C \cdot \sum_{k=1}^{T} \leverage_{k}.
    \end{align*}
    Thus, a sufficient conditions for \( \kl(\meas{P}, \meas{Q}_{T}) < \varepsilon \) is that the RHS is \( < \varepsilon \).

    Thus we are looking to find conditions where the following holds:
    \begin{equation}
        \label{eq:suff_kl_cond_gen}
        \frac{\kl(\meas{P}, \meas{Q}_{0}) - \varepsilon}{\left( \gamma_{\meas{P}} + \gamma_{\meas{Q}} \cdot \alpha(\gamma_{\meas{Q}}) \right) \cdot C} < \sum_{k=1}^{T} \leverage_{k}.
    \end{equation}

    We will take \( \lambda = -\log (\tau / \sr_{0}) \).

    \subsection{Exact Leverage}
    We can directly rewrite the condition as
    \begin{align*}
        \frac{\kl(\meas{P}, \meas{Q}_{0}) - \varepsilon}{\left( \gamma_{\meas{P}} + \gamma_{\meas{Q}} \cdot \alpha(\gamma_{\meas{Q}}) \right) \cdot C} &< \sum_{k=1}^{T} \leverage_{k} =  \frac{\lambda}{2C} \left( 1 - \frac{1}{2^{T}} \right) \\
        2^{T} &> \left( 1 - 2\cdot \frac{\kl(\meas{P}, \meas{Q}_{0}) - \varepsilon}{\lambda \cdot \left( \gamma_{\meas{P}} + \gamma_{\meas{Q}} \cdot \alpha(\gamma_{\meas{Q}}) \right)} \right) \\
        {T} &> \frac{1}{\log 2}\log \cdot \left( 1 - 2\cdot \frac{\kl(\meas{P}, \meas{Q}_{0}) - \varepsilon}{\lambda \cdot \left( \gamma_{\meas{P}} + \gamma_{\meas{Q}} \cdot \alpha(\gamma_{\meas{Q}}) \right)} \right).
    \end{align*}
    As required.

    \subsection{Relative Leverage}
    For this leverage, we need a weaker bound. We use the fact that 
    \begin{align*}
        \frac{\kl(\meas{P}, \meas{Q}_{0}) - \varepsilon}{\left( \gamma_{\meas{P}} + \gamma_{\meas{Q}} \cdot \alpha(\gamma_{\meas{Q}}) \right) \cdot C} < \sum_{k=1}^{T}
        \impliedby
        \frac{\kl(\meas{P}, \meas{Q}_{0}) - \varepsilon}{\left( \gamma_{\meas{P}} + \gamma_{\meas{Q}} \cdot \alpha(\gamma_{\meas{Q}}) \right) \cdot C} < \frac{\lambda}{4C} \cdot \log T.
    \end{align*}

    Thus, we immediately have that
    \begin{equation*}
        T > \exp \left( 4 \cdot \frac{\kl(\meas{P}, \meas{Q}_{0}) - \varepsilon}{\lambda \cdot \left( \gamma_{\meas{P}} + \gamma_{\meas{Q}} \cdot \alpha(\gamma_{\meas{Q}}) \right)} \right)
    \end{equation*}
    implies the statement holds.
\end{myproof}

\begin{remark}
    Tighter bounds can be made for relative leverage \wrt the ``omega function''. More generally, by leaving the leverage series as is (without simplification) gives sufficient condition for a bounded KL. However, the condition may not be \wrt \( T \) being large.
\end{remark}
\section{Proof of \texorpdfstring{\cref{lem:relative_fairness_guarantee}}{4.2}}
\label{sec:pf_relative_fairness_guarantee}

\begin{myproof}
We prove the following by contradiction. 
Suppose that there exists distribution \( \meas{Q} \in \mathcal{M}_{\varepsilon, \meas{Q}_{0}} \) with \( \SR(\meas{Q}) < \sr_{0} \cdot \exp(- \varepsilon) \).
Thus for there exists a \( s, s^{\prime} \in \mathcal{S} \) such that
\begin{equation*}
    \SR(\meas{Q}, s^{\prime}, s; y) <  \sr_{0} \cdot \exp(- \varepsilon)
    \implies 
    \SR(\meas{Q}, s, s^{\prime}; y) >  \frac{1}{\sr_{0}} \cdot \exp(\varepsilon).
\end{equation*}
But given that \( \SR(\meas{Q}_{0}) = \sr_{0} \), by definition, we have that 
\begin{equation*}
    \forall a, b \in \mathcal{S}: \; \SR(\meas{Q}_{0}, a, b; y) \geq \sr_{0}
\end{equation*}

Thus we have that,
\begin{align*}
    \max\left\{ \frac{\SR(\meas{Q}, s, s^{\prime}; y)}{\SR(\meas{Q}_{0}, s, s^{\prime}; y)}, \frac{\SR(\meas{Q}_{0}, s, s^{\prime}; y)}{\SR(\meas{Q}, s, s^{\prime}; y)}\right\}
    &\geq \frac{\SR(\meas{Q}, s, s^{\prime}; y)}{\SR(\meas{Q}_{0}, s, s^{\prime}; y)} \\
    &\geq \SR(\meas{Q}, s, s^{\prime}; y) \cdot \SR(\meas{Q}_{0}, s^{\prime}, s; y) \\
    &\geq \SR(\meas{Q}, s, s^{\prime}; y) \cdot sr_{0} \\
    &> \frac{1}{\sr_{0}} \cdot \exp(\varepsilon) \cdot \sr_{0} = \exp(\varepsilon).
\end{align*}
However, \( \meas{Q} \in \mathcal{M}_{\varepsilon, \meas{Q}_{0}} \) and thus by \cref{def:relmol} we have a contradiction.

Thus \( \meas{Q} \in \mathcal{M}_{\varepsilon, \meas{Q}_{0}} \implies \SR(\meas{Q}) \geq \sr_{0} \cdot \exp(-\varepsilon) \).

\end{myproof}
\section{Proof of \texorpdfstring{\cref{lem:relative_fair_distribution}}{4.3}}
\label{sec:pf_relative_fair_distribution}

\begin{myproof}
    We first note that the condition of ``\( \forall s, s^{\prime} \in \mathcal{S} \) we have either \( \SR({\meas{Q}}, s, s^{\prime}; y) > 1 \) or \( \SR({\meas{Q}}, s, s^{\prime}; y) \leq \SR({\meas{Q}}_{0}, s, s^{\prime}; y) \leq 1 \)'' is equivalent to ``\( \forall s, s^{\prime} \in \mathcal{S} \) we have \( \SR({\meas{Q}}, s, s^{\prime}; y) \leq 1 \) implies \( \SR({\meas{Q}}, s, s^{\prime}; y) \leq \SR({\meas{Q}}_{0}, s, s^{\prime}; y) \leq 1 \)''.

    We suppose that \( {\meas{Q}} \) is an arbitrary distribution that satisfies this condition.
    Let \( s, s^{\prime} \in \mathcal{S} \) be arbitrary such that \( \SR(\meas{Q}, s, s^{\prime}; y) \leq 1 \), and thus \( \SR(\meas{Q}, s, s^{\prime}; y) \leq \SR(\meas{Q}_{0}, s, s^{\prime}; y) \leq 1 \).
    
    We have that
    \begin{equation*}
        \frac{\SR(\meas{Q}, s, s^{\prime}; y)}{\SR(\meas{Q}_{0}, s, s^{\prime}; y)} \geq \SR(\meas{Q}, s, s^{\prime}; y)
    \end{equation*}
    and
    \begin{equation*}
        \frac{\SR(\meas{Q}, s, s^{\prime}; y)}{\SR(\meas{Q}_{0}, s, s^{\prime}; y)} \leq 1 \leq \frac{\SR(\meas{Q}_{0}, s, s^{\prime}; y)}{\SR(\meas{Q}, s, s^{\prime}; y)}.
    \end{equation*}
    
    As \( \SR(\meas{Q}) \geq \exp(-\varepsilon) \), we have that
    \begin{align*}
        \SR(\meas{Q}, s, s^{\prime}; y) \geq \exp(-\varepsilon)
        &\implies \frac{\SR(\meas{Q}, s, s^{\prime}; y)}{\SR(\meas{Q}_{0}, s, s^{\prime}; y)} \geq \exp(-\varepsilon) \\
        &\implies \min\left\{ \frac{\SR(\meas{Q}, s, s^{\prime}; y)}{\SR(\meas{Q}_{0}, s, s^{\prime}; y)}, \frac{\SR(\meas{Q}_{0}, s, s^{\prime}; y)}{\SR(\meas{Q}, s, s^{\prime}; y)} \right\} \geq \exp(-\varepsilon).
    \end{align*}
    
    Furthermore, if \( \SR(\meas{Q}, s, s^{\prime}; y) > 1 \), then necessarily \( \SR(\meas{Q}, s^{\prime}, s; y) \leq 1 \). However, we also have
    \begin{equation*}
        \min\left\{ \frac{\SR(\meas{Q}, s, s^{\prime}; y)}{\SR(\meas{Q}_{0}, s, s^{\prime}; y)}, \frac{\SR(\meas{Q}_{0}, s, s^{\prime}; y)}{\SR(\meas{Q}, s, s^{\prime}; y)} \right\}
        =
        \min\left\{ \frac{\SR(\meas{Q}_{0}, s^{\prime}, s; y)}{\SR(\meas{Q}, s^{\prime}, s; y)}, \frac{\SR(\meas{Q}, s^{\prime}, s; y)}{\SR(\meas{Q}_{0}, s^{\prime}, s; y)} \right\}.
    \end{equation*}
    
    Thus for all \( s, s^{\prime} \in \mathcal{S} \)
    \begin{align*}
        &\min\left\{ \frac{\SR(\meas{Q}, s, s^{\prime}; y)}{\SR(\meas{Q}_{0}, s, s^{\prime}; y)}, \frac{\SR(\meas{Q}_{0}, s, s^{\prime}; y)}{\SR(\meas{Q}, s, s^{\prime}; y)} \right\} \geq \exp(-\varepsilon)\\
        &\quad\iff \max\left\{ \frac{\SR(\meas{Q}, s, s^{\prime}; y)}{\SR(\meas{Q}_{0}, s, s^{\prime}; y)}, \frac{\SR(\meas{Q}_{0}, s, s^{\prime}; y)}{\SR(\meas{Q}, s, s^{\prime}; y)} \right\} \leq \exp(\varepsilon).
    \end{align*}
    That is \( \meas{Q} \in \mathcal{M}_{\varepsilon, \meas{Q}_{0}} \).
\end{myproof}

\section{Analyzing KL Drop Bound}
\label{sec:analyzing_kl_drop_bound}

In the following section, we analysis the bound given in \cref{thm:kl_drop}. In particular, we examine the non-negativity of the bound and compare it against \citet[Theorem 5]{hbcnLD}.

\begin{lemma}\label{lem:gamma_calculations}
    The following holds for \( \Gamma(\cdot, \cdot) \):
    \begin{align*}
        \Gamma(C^{2}, \gamma) &= \log \left( \frac{4}{(2 + \gamma)\exp(-C)) + (2 - \gamma) \exp(C)} \right); \\
        \Gamma(0, \gamma) &= \log \left(\frac{4}{(3 - \gamma)\exp(-C)) + (1 - \gamma) \exp(C)}\right); \\
        \Gamma(v, 1) &= \log \left(\frac{4C^2}{v(\exp(C) - \exp(-C) +4C^2 \exp(-C)}\right); \\
        \Gamma(0, 1) &= C.
    \end{align*}
\end{lemma}
\begin{myproof}
    Follows directly from the definition of \( \Gamma \), as per \cref{lem:expect_exp_bound}.
\end{myproof}

To compare against \citet[Theorem 5]{hbcnLD}, we consider the maximum variance which is required to make \( \Gamma(\cdot, \cdot) > 0\).

\begin{theorem}\label{thm:positive_gamma}
    For the function \( \Gamma(v, \gamma) \), we have that
    \begin{align}
        \Gamma(v, \gamma) > 0 &\iff \gamma > \frac{v}{C^2} + 1 - \frac{4}{\exp(C) + 1}; \label{eq:gamma_gamma_cond} \\
        \Gamma(v, \gamma) > 0 &\iff v < C^2 \cdot \left(\gamma + \frac{4}{\exp(C) + 1} - 1 \right) \label{eq:gamma_var_cond}.
    \end{align}
\end{theorem}
\begin{myproof}
    We consider the conditions for \( \Gamma(v, \gamma) > 0 \). This occurs when:
    \begin{align*}
        &\Gamma(v, \gamma) > 0 \\
        \iff &\log \left( (A \cdot v - C^{2} A \cdot \gamma + K)^{-1} \right) > 0 \\
        \iff &A \cdot v - C^{2} A \cdot \gamma + K < 1 \\
        \iff &\gamma > \frac{Av + K - 1}{C^2A} \\
        \iff &\gamma > \frac{v}{C^2} + \frac{K - 1}{C^2A}.
    \end{align*}
    Through computation, one can note that \( (K-1) / (C^2A) = 1 - 4 / (\exp(C) + 1) \).
    Thus we have the following:%
    \begin{align*}
        \Gamma(v, \gamma) > 0 &\iff \gamma > \frac{v}{C^2} + 1 - \frac{4}{\exp(C) + 1}; \\
        \Gamma(v, \gamma) > 0 &\iff v < C^2 \cdot \left(\gamma + \frac{4}{\exp(C) + 1} - 1 \right).
    \end{align*}
    As required.
\end{myproof}

\begin{corollary}[Comparing to \citet{hbcnLD}]\label{cor:compare_hbcnLD}
    Let \( C = \log 2 \). If \( \var_{\meas{Q}_{t-1}}[c_{t}] < C^2 \cdot \frac{2}{3} \), when \citet[Theorem 5]{hbcnLD}'s KL drop is positive, so is ours.

    Furthermore, lower \( \var_{\meas{Q}_{t-1}}[c_{t}] \) values allows for positive KL drops even when \citet[Theorem 5]{hbcnLD} has a non-positive drop.
\end{corollary}
\begin{myproof}
    To guarantee a positive drop, \citet[Theorem 5]{hbcnLD} requires the WLA to be within a ``high boosting regime''. This is equivalent to having \( \gamma_{\meas{Q}}^t > 1/3 \).
    
    It follows from \cref{thm:positive_gamma} that we get a positive drop when:
    \begin{equation}
        \gamma > \frac{2}{3} + 1 - \frac{4}{3} = \frac{1}{3}. \label{eq:gamma_var_cond_hbcnLD}
    \end{equation}
    As required.

    The last part of the corollary follows from noting that \cref{eq:gamma_var_cond} is linearly increasing function of \( v \). Thus lower \( v \) will relax the constraint in \cref{eq:gamma_var_cond_hbcnLD}.
\end{myproof}
\section{General Fair Mollifiers}
\label{sec:general_fair_mollifiers}

The relative mollifier construction present in \cref{def:relmol} was not the original definition of a mollifier, but only a convenient method for construction.
Originally in \citet{hbcnLD}, elements of private mollifiers satisfy a pairwise privacy constraint. We introduce the \emph{fair mollifier} which is equivalent to this original definition.

\begin{definition}\label{def-fairmol}
    Let \( \mathcal{M} \subset \mathcal{D}(\mathcal{Z}) \) and \( \varepsilon > 0 \). \( \mathcal{M} \) is an {\bf \( \varepsilon \)-fair mollifier} iff\footnote{Absolute continuity also required for all $\{\meas{P}\} \cup \mathcal{M}$ \wrt all  $\mathcal{M}$.} \( \forall \meas{Q}, \meas{Q}^{\prime} \in \mathcal{M}\), 
    \begin{eqnarray}\label{eq:fairmol_cond}
        \SR(\meas{Q}, s, s^{\prime}; y) \leq e^{\varepsilon} \cdot \SR(\meas{Q}^{\prime}, s, s^{\prime}; y), \quad \forall s, s^{\prime} \in \mathcal{S}.
    \end{eqnarray} 
\end{definition}

\cref{def-fairmol} is inspired by the local notion of differential privacy \citep{klnrsWC}. Likewise to the main-text, all statistical rate definitions can be interchanged with representation rate.

\begin{lemma} \label{lem:contain_fair_distribution}
    Suppose that \( \mathcal{M} \) is an \( \varepsilon \)-fair mollifier. If there exists a \( \meas{F} \in \mathcal{M} \) with \( \SR(\meas{F}) = \sr_{f} \), then all \( \meas{Q} \in \mathcal{M} \) have \( \SR(\meas{Q}) \geq \sr_{f} \cdot \exp(- \varepsilon) \).
\end{lemma}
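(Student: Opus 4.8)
The plan is to derive the conclusion from a single application of the $\varepsilon$-fair mollifier inequality, after first unpacking what ``representation rate $1$'' actually says about $F$.

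The first step is to observe that $\RR(F)=1$ forces the sensitive marginal of $F$ to be uniform, i.e. $\RR(F,a_i,a_j)=1$ for \emph{every} ordered pair $a_i,a_j\in\mathcal{A}$, not merely for the minimizing pair. This is because the ratios come in reciprocal pairs, $\RR(F,a_i,a_j)\cdot\RR(F,a_j,a_i)=1$, so if some ratio were $\neq 1$, then either it or its reciprocal would be strictly below $1$, contradicting $\RR(F)=\min_{a_i,a_j}\RR(F,a_i,a_j)=1$. (Here the footnote's absolute-continuity/common-support hypothesis is what makes all the marginals $p_F[A=a]$ strictly positive, so every ratio is well defined.)

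The second step is the core computation: fix an arbitrary $Q\in\mathcal{M}$ and an arbitrary ordered pair $a_i,a_j\in\mathcal{A}$. Since $F\in\mathcal{M}$, Definition~\ref{def-fairmol} applied to the pair $(F,Q)$ gives $\RR(F,a_i,a_j)\leq\exp(\varepsilon)\cdot\RR(Q,a_i,a_j)$. Rearranging and substituting $\RR(F,a_i,a_j)=1$ from the first step yields $\RR(Q,a_i,a_j)\geq\exp(-\varepsilon)$. The third step is just bookkeeping: as $a_i,a_j$ were arbitrary, take the minimum over all ordered pairs to get $\RR(Q)\geq\exp(-\varepsilon)=\tau$, and as $Q\in\mathcal{M}$ was arbitrary this holds throughout $\mathcal{M}$.

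There is no genuine obstacle here — the whole argument is a two-line consequence of the two definitions. The only point that repays a moment's care, and is easy to gloss over, is the first step: one must use that the definition of $\RR(\cdot)$ minimizes over \emph{all} ordered pairs to turn the single equality $\RR(F)=1$ into uniformity of $F$'s sensitive marginal; the rest is simply reading the mollifier inequality in the direction that bounds $\RR(Q)$ from below rather than from above.
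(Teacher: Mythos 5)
Your proof is correct and follows essentially the same route as the paper: apply the mollifier inequality of Definition~\ref{def-fairmol} to the pair $(F,Q)$, use that $\RR(F,a_i,a_j)=1$ for every pair, and rearrange to get $\RR(Q,a_i,a_j)\geq\exp(-\varepsilon)$. Your first step (the reciprocal-pair argument showing $\RR(F)=1$ forces all ratios to equal $1$) just makes explicit a detail the paper's proof leaves implicit.
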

\begin{myproof}
    As \( \mathcal{M} \) is a \( \varepsilon \)-fair mollifier, the following constraint holds for all \( \meas{Q} \in \mathcal{M} \) and for all \( s, s^{\prime} \in \mathcal{S} \) with the distribution \( \meas{F} \in \mathcal{M} \), where \( \SR(\meas{F}) = \sr_{f} \):
    \begin{align*}
        \SR(\meas{F}, s, s^{\prime}; y) \leq \exp(\varepsilon) \cdot \SR(\meas{Q}, s, s^{\prime}; y)
        &\implies 
        \sr_{0} \leq \exp(\varepsilon) \cdot \SR(\meas{Q}, s, s^{\prime}; y) \\
        &\implies 
        \sr_{0} \cdot \exp(-\varepsilon) \leq \SR(\meas{Q}, s, s^{\prime}; y).
    \end{align*}
    Thus all \( \meas{Q} \in \mathcal{M} \) has statistical rate \( \SR(\meas{Q}) \geq \sr_{f} \cdot \exp(-\varepsilon) \).
\end{myproof}

It should be noted that \cref{lem:contain_fair_distribution} is not tight, a mollifier might have a strong fairness guarantee.

\begin{lemma}
    \(\mathcal{M}_{\varepsilon, \meas{Q}_{0}} \) is a \( 2 \varepsilon \)-mollifier.
\end{lemma}
\begin{myproof}
    Follows directly from \cref{eq:relative_mollifier} applies to two distributions.
\end{myproof}

\section{Improvement in the Low / Sparse Data Regime via Prior Mixing}
\label{sec:mixing_prior}

When utilizing the \fbde algorithm, picking a good initial distribution can be pivotal to ensure that our final distribution is not only fair, but also close to the true data distribution. However, when we only have access to a small volume of data; or when a dataset has a large number of features causing sparsity over input combinations, it may be difficult to find good approximations of the input distribution. These scenarios can be further problematic when we utilize an empirical distribution to create our initial distribution as per \cref{eq:init_sensitive,eq:init_conditional}: inputs in the training set may not appear in the test set. In this case, the final distribution will have zero support for inputs which occur in the test set.

To remedy this issue, we introduce the \emph{mixed} initial distribution. The idea is to take a mixture distribution between the initial distribution introduced \cref{eq:init_sensitive,eq:init_conditional} with a \emph{prior distribution}. That is, we an initial distribution as per,
\begin{equation}
    \meas{Q}_{\init} \leftarrow (1 - \alpha) \cdot \meas{Q}_{\init} + \alpha \cdot \meas{Q}_{\prior},
\end{equation}
where \( \alpha \in [0, 1] \) and \( \meas{Q}_{\prior} \in \mathcal{D}(\mathcal{Z}) \) is a `prior' distribution.

By taking \( \alpha \neq 1 \) and a \( \meas{Q}_{\prior} \) with non-zero support over all of \( \mathcal{Z} \), we can ensure that the final boosted distribution also has non-zero support regardless of the initial distribution.

The one caveat to utilizing this mixture is that, of course, we still want the final \( \meas{Q}_{\init} \) to satisfy the initial fairness constraints, \ie, the dependence on \( \sr_{0} \) and \( \rr_{0} \) in \cref{thm:exact_fairness,thm:relative_fairness}.

Nevertheless, as long as the final \( \meas{Q}_{\init} \) satisfies the initial fairness constraints, the mixing of distribution provides a convenient mechanism for adding prior knowledge into \fbde. Indeed, beyond dealing with miss-matching distribution support issues, \( \meas{Q}_{\prior} \) can be used to add any user prior knowledge to \fbde.

We present experiment where mixing priors are not used for \dutch and \german in \cref{sec:dutch_german_without_mixing}.

\section{Additional Dataset Descriptions}
\label{sec:additional_dataset_descriptions}

We present the following additional dataset descriptions for \compas and \adult not included in the main-text. We note that \compas and \adult are standard public benchmark datasets; and \texttt{AIF360} uses a Apache License 2.0.

(a) {\bf \compas~\citep{almkMB}} 
The pre-processed dataset contains data
{where each defendant has the sex, race, age, number of priors, charge degree, and recidivism within two years. These attributes are given as binary features, where counts and continuous values are discretized into bins. The resulting dataset has a domain size of 144 and with 5,278 data points. We consider sensitive attributes of sex (only in appendix) and race.}

(b) {\bf \adult~\citep{dkUM}} 
The pre-processed dataset contains data where for each person the dataset contains the race, sex, age, years of educations, and binary label whether they earn more than \( 50 \)K a year. The counts and continuous values are discretized similarly to \compas. The pre-processed dataset has a domain size of 504 and with 48,842 data points. We consider sex and race (only in appendix) as the sensitive attribute.

For the \textsc{Minneapolis} dataset, we also include additional dataset descriptions. We note that it uses a CC0 1.0 License.

(c) {\bf \textsc{Minneapolis}}
The dataset consists of stop data for the Minneapolis Police Department. We only consider the features consisting of latitude, longitude, race, and `search' features, where the last feature is a binary variable that reports an individual was searched at a stop. We remove any rows with missing values. We remove outliers (those consisting of longitude values less than \( -13 \)). Furthermore, we binarize the race features to `Black' vs all other categories. In total, the resulting dataset has 121,965 data points.

In addition the datasets examined in the main-text, we also include the \dutch and \german standard datasets. The primary results for these extra datasets are presented in \cref{sec:extra_discrete_experiments}. We also include an additional discussion in \cref{sec:mixing_prior}, where we use a ``trick'' to improve the performance of mollifiers in \german. We further note that these are standard public benchmark datasets.

(d) {\bf \textsc{Dutch}~\citep{van20002001}} 
The Dutch Census dataset is from a 2001 Netherlands census, where data represents aggregated groups of people. The prediction task is to determine whether an individual has a prestigious job or not. The dataset consists of 60,420 examples with 12 different attributes (many with large number of categorical features). The pre-processed dataset has a domain size of 5,971,968, thus, the dataset is extremely sparse. We consider sex and age as sensitive attributes.

(e) {\bf \textsc{German}~\citep{kamiran2009classifying}} The German Credit dataset consists of individual bank holders, with the prediction task being to determine whether or not the grant credit to someone. We utilize the pre-processed version provided by \aif. Notably, the dataset is small with only 1,000 samples with a domain size of 216.
\section{Extended Discrete Experiments}
\label{sec:additional_discrete_experiments}

We present additional experiments and figures corresponding to the discrete datasets of \compas and \adult. In particular, we include all combinations of `sex' and `race' sensitive attributes settings for the datasets. An extended \cref{tab:summary_experiments_table} is presented in \cref{tab:full_experiments_table}. Furthermore, in \cref{fig:wl_all_plots} we plot the complete set of weak learner accuracy vs KL over boosting iterations plots that was given by \cref{fig:boost_iters_joint} (left). We further include similar plots for both representation rate and statistical rate in \cref{fig:rr_all_plots,fig:sr_all_plots}, respectively. We also present plots corresponding to \( \SR \) vs \( \SR_{\textrm{c}} \) in \cref{fig:srclf_all_plots}, which suggests that the classifier statistical rate \( \SR_{\textrm{c}} \) tends to follow a similar trend to that of the data statistical rate \( \SR \).

\begin{table*}[t]
    \caption{\fbde~(\(T = 32\)) and baselines evaluated for \compas and \adult datasets. The table reports the mean and s.t.d.}
    \label{tab:full_experiments_table}
    \scriptsize
    \begin{center}
    \begin{sc}
    \begin{tabularx}{\textwidth}{LLPDDDDDDDD} 
        \toprule
        && & Data & \mollifiera & \mollifierb & \mollifierc & \mollifierd & \celisa & \tabfairgan & \fairkmeans \\
        \midrule

{\multirow{9}{*}{\rotatebox[origin=c]{90}{\compas (\(\mathcal{S}\) = sex)}}} &
{\multirow{3}{*}{\rotatebox[origin=c]{90}{\underline{data}}}} 
& $ \textrm{RR} $                 &  \(.443 \pm .004\) & \(.916 \pm .011\) & \(.944 \pm .007\) & \(.852 \pm .010\) & \(.907 \pm .007\) & \(.987 \pm .007\) & \(.231 \pm .044\) & \(-\) \\
&& $ \textrm{SR} $                &  \(.728 \pm .044\) & \(.993 \pm .003\) & \(.904 \pm .008\) & \(.986 \pm .010\) & \(.898 \pm .008\) & \(.989 \pm .011\) & \(.823 \pm .138\) & \(-\) \\
&& $ \textrm{KL} $                & 
\(-\) & \(.288 \pm .020\) & \(.295 \pm .022\) & \(.268 \pm .020\) & \(.283 \pm .021\) & \(.335 \pm .021\) & \(3.14 \pm .831\) & \(-\) \\
        \cmidrule(l{5pt}){2-11}

&{\multirow{3}{*}{\rotatebox[origin=c]{90}{\underline{pred}}}} 
& $ \textrm{SR}_{\textrm{c}}$     &  \(.726 \pm .025\) & \(.952 \pm .006\) & \(.874 \pm .009\) & \(.938 \pm .019\) & \(.874 \pm .016\) & \(.966 \pm .012\) & \(.844 \pm .113\) & \(-\) \\
&& $ \textrm{EO} $                &  \(.764 \pm .031\) & \(.966 \pm .021\) & \(.905 \pm .027\) & \(.963 \pm .027\) & \(.907 \pm .030\) & \(.978 \pm .015\) & \(.797 \pm .098\) & \(-\) \\
&& $ \textrm{Acc} $               &  \(.660 \pm .004\) & \(.651 \pm .008\) & \(.654 \pm .005\) & \(.657 \pm .009\) & \(.655 \pm .008\) & \(.650 \pm .007\) & \(.595 \pm .036\) & \(-\) \\
        \cmidrule(l{5pt}){2-11}
        
&{\multirow{3}{*}{\rotatebox[origin=c]{90}{\underline{clus}}}} 
& $ \cPR $                        & 
\(.122 \pm .014\) & \(.122 \pm .014\) & \(.122 \pm .014\) & \(.122 \pm .014\) & \(.119 \pm .017\) & \(.119 \pm .017\) & \(.106 \pm .032\) & \(.143 \pm .019\) \\
&& $ \cSR $                       & 
\(.380 \pm .123\) & \(.380 \pm .123\) & \(.380 \pm .123\) & \(.380 \pm .123\) & \(.438 \pm .097\) & \(.385 \pm .122\) & \(.364 \pm .065\) & \(.295 \pm .089\) \\
&& $ \textrm{dist} $              & 
\(.064 \pm .000\) & \(.078 \pm .000\) & \(.078 \pm .000\) & \(.078 \pm .000\) & \(.078 \pm .000\) & \(.078 \pm .000\) & \(.078 \pm .000\) & \(.069 \pm .001\) \\
        \midrule
        
{\multirow{9}{*}{\rotatebox[origin=c]{90}{\compas (\(\mathcal{S}\) = race)}}} &
{\multirow{3}{*}{\rotatebox[origin=c]{90}{\underline{data}}}} 
& $ \textrm{RR} $                 & \(.662 \pm .007\) & \(.966 \pm .008\) & \(.977 \pm .008\) & \(.944 \pm .010\) & \(.964 \pm .009\) & \(.992 \pm .004\) & \(.632 \pm .052\) &  \( - \) \\
&& $ \textrm{SR} $                & \(.747 \pm .013\) & \(.988 \pm .006\) & \(.899 \pm .011\) & \(.978 \pm .011\) & \(.896 \pm .011\) & \(.992 \pm .006\) & \(.727 \pm .100\) &  \( - \) \\
&& $ \textrm{KL} $                & \( - \) & \(.135 \pm .020\) & \(.129 \pm .020\) & \(.132 \pm .020\) & \(.127 \pm .020\) & \(.164 \pm .018\) & \(2.71 \pm .735\) &  \( - \) \\
        \cmidrule(l{5pt}){2-11}

&{\multirow{3}{*}{\rotatebox[origin=c]{90}{\underline{pred}}}} 
& $ \textrm{SR}_{\textrm{c}}$     & \(.747 \pm .020\) & \(.959 \pm .025\) & \(.875 \pm .025\) & \(.945 \pm .027\) & \(.872 \pm .044\) & \(.939 \pm .018\) & \(.802 \pm .074\) &  \( - \) \\
&& $ \textrm{EO} $                & \(.781 \pm .034\) & \(.960 \pm .026\) & \(.900 \pm .041\) & \(.950 \pm .028\) & \(.895 \pm .039\) & \(.944 \pm .020\) & \(.815 \pm .079\) &  \( - \) \\
&& $ \textrm{Acc} $               & \(.660 \pm .004\) & \(.641 \pm .014\) & \(.653 \pm .012\) & \(.642 \pm .010\) & \(.656 \pm .012\) & \(.648 \pm .015\) & \(.591 \pm .043\) &  \( - \) \\
        \cmidrule(l{5pt}){2-11}
        
&{\multirow{3}{*}{\rotatebox[origin=c]{90}{\underline{clus}}}} 
& $ \cPR $            & \(.334 \pm .019\) & \(.332 \pm .022\) & \(.334 \pm .019\) & \(.334 \pm .019\) & \(.319 \pm .036\) & \(.333 \pm .020\) & \(.286 \pm .027\) & \(.259 \pm .013\) \\
&& $ \cSR $           & \(.288 \pm .055\) & \(.235 \pm .082\) & \(.284 \pm .055\) & \(.284 \pm .055\) & \(.256 \pm .084\) & \(.288 \pm .059\) & \(.268 \pm .063\) & \(.284 \pm .036\) \\
&& $ \textrm{dist} $  & \(.065 \pm .000\) & \(.078 \pm .000\) & \(.078 \pm .000\) & \(.078 \pm .000\) & \(.078 \pm .000\) & \(.078 \pm .000\) & \(.078 \pm .001\) & \(.069 \pm .001\) \\
        \midrule

{\multirow{9}{*}{\rotatebox[origin=c]{90}{\adult (\(\mathcal{S}\) = sex)}}} &
{\multirow{3}{*}{\rotatebox[origin=c]{90}{\underline{data}}}} 
& $ \textrm{RR} $                 & \(.496 \pm .002\) & \(.958 \pm .003\) & \(.979 \pm .003\) & \(.919 \pm .004\) & \(.957 \pm .003\) & \(.995 \pm .002\) & \(.516 \pm .023\) &  \( - \) \\
&& $ \textrm{SR} $                & \(.360 \pm .005\) & \(.961 \pm .005\) & \(.883 \pm .006\) & \(.944 \pm .006\) & \(.865 \pm .006\) & \(.979 \pm .004\) & \(.862 \pm .101\) &  \( - \) \\
&& $ \textrm{KL} $                & \( - \) & \(.122 \pm .006\) & \(.119 \pm .006\) & \(.113 \pm .006\) & \(.114 \pm .006\) & \(.182 \pm .005\) & \(1.68 \pm .538\) &  \( - \) \\

        \cmidrule(l{5pt}){2-11}
&{\multirow{3}{*}{\rotatebox[origin=c]{90}{\underline{pred}}}} 
& $ \textrm{SR}_{\textrm{c}}$     & \(.360 \pm .003\) & \(.818 \pm .010\) & \(.766 \pm .010\) & \(.793 \pm .011\) & \(.753 \pm .008\) & \(.919 \pm .011\) & \(.823 \pm .118\) &  \( - \) \\
&& $ \textrm{EO} $                & \(.471 \pm .008\) & \(.959 \pm .016\) & \(.908 \pm .018\) & \(.935 \pm .016\) & \(.895 \pm .016\) & \(.981 \pm .010\) & \(.867 \pm .105\) &  \( - \) \\
&& $ \textrm{Acc} $               & \(.803 \pm .003\) & \(.785 \pm .002\) & \(.788 \pm .002\) & \(.787 \pm .002\) & \(.788 \pm .002\) & \(.773 \pm .005\) & \(.781 \pm .006\) &  \( - \) \\
        \cmidrule(l{5pt}){2-11}
        
&{\multirow{3}{*}{\rotatebox[origin=c]{90}{\underline{clus}}}} 
& $ \cPR $           & \(.125 \pm .067\) & \(.093 \pm .019\) & \(.101 \pm .025\) & \(.116 \pm .027\) & \(.111 \pm .033\) & \(.130 \pm .061\) & \(.137 \pm .038\) & \(.117 \pm .039\) \\
&& $ \cSR $           & \(.426 \pm .232\) & \(.252 \pm .082\) & \(.302 \pm .076\) & \(.190 \pm .061\) & \(.254 \pm .086\) & \(.468 \pm .237\) & \(.437 \pm .179\) & \(.296 \pm .151\) \\
&& $ \textrm{dist} $  & \(.034 \pm .000\) & \(.037 \pm .000\) & \(.037 \pm .000\) & \(.037 \pm .000\) & \(.037 \pm .000\) & \(.037 \pm .000\) & \(.037 \pm .000\) & \(.035 \pm .000\) \\

\midrule

{\multirow{9}{*}{\rotatebox[origin=c]{90}{\adult (\(\mathcal{S}\) = race)}}} &
{\multirow{3}{*}{\rotatebox[origin=c]{90}{\underline{data}}}} 
& $ \textrm{RR} $                 &  \(.170 \pm .001\) & \(.914 \pm .002\) & \(.955 \pm .003\) & \(.837 \pm .002\) & \(.910 \pm .002\) & \(.998 \pm .003\) & \(.089 \pm .019\) & \(-\) \\
&& $ \textrm{SR} $                &  \(.601 \pm .011\) & \(.985 \pm .006\) & \(.892 \pm .006\) & \(.974 \pm .007\) & \(.888 \pm .006\) & \(.986 \pm .006\) & \(.284 \pm .032\) & \(-\) \\
&& $ \textrm{KL} $                &  \(-\) & \(.292 \pm .001\) & \(.305 \pm .001\) & \(.264 \pm .001\) & \(.289 \pm .001\) & \(.382 \pm .005\) & \(1.98 \pm .173\) & \(-\) \\
        \cmidrule(l{5pt}){2-11}

&{\multirow{3}{*}{\rotatebox[origin=c]{90}{\underline{pred}}}} 
& $ \textrm{SR}_{\textrm{c}}$     &  \(.600 \pm .031\) & \(.867 \pm .035\) & \(.808 \pm .031\) & \(.860 \pm .034\) & \(.803 \pm .035\) & \(.935 \pm .019\) & \(.480 \pm .114\) & \(-\) \\
&& $ \textrm{EO} $                &  \(.787 \pm .053\) & \(.933 \pm .019\) & \(.960 \pm .037\) & \(.935 \pm .017\) & \(.957 \pm .034\) & \(.929 \pm .038\) & \(.588 \pm .103\) & \(-\) \\
&& $ \textrm{Acc} $               &  \(.803 \pm .003\) & \(.800 \pm .002\) & \(.801 \pm .003\) & \(.800 \pm .002\) & \(.801 \pm .003\) & \(.794 \pm .004\) & \(.745 \pm .019\) & \(-\) \\
        \cmidrule(l{5pt}){2-11}
        
&{\multirow{3}{*}{\rotatebox[origin=c]{90}{\underline{clus}}}} 
& $ \cPR $                        &  \(.042 \pm .019\) & \(.051 \pm .038\) & \(.061 \pm .032\) & \(.051 \pm .029\) & \(.043 \pm .012\) & \(.047 \pm .021\) & \(.071 \pm .039\) & \(.052 \pm .044\) \\
&& $ \cSR $                       &  \(.313 \pm .096\) & \(.422 \pm .111\) & \(.366 \pm .212\) & \(.303 \pm .102\) & \(.327 \pm .115\) & \(.290 \pm .096\) & \(.359 \pm .225\) & \(.400 \pm .218\) \\
&& $ \textrm{dist} $              &  \(.034 \pm .000\) & \(.037 \pm .000\) & \(.037 \pm .000\) & \(.037 \pm .000\) & \(.037 \pm .000\) & \(.037 \pm .000\) & \(.037 \pm .000\) & \(.035 \pm .000\) \\

        \bottomrule
    \end{tabularx}
    \end{sc}
    \end{center}
\end{table*}

\subsection{Runtime Comparison}

In addition, we report the runtime of each approach in \cref{tab:runtime}. We notice that in general, \mollifiera is the quickest approach (by a significant margin) taking only a few seconds on both datasets. By strictly looking at the table, \tabfairgan is the next quickest; and lastly \fbde and \fairkmeans (with \fairkmeans scaling significantly worse on the larger dataset). With this strict analysis, one may come to a conclusion that \fbde has a large computational trade-off when compared to \mollifiera (and even \tabfairgan).
However, if one examines \cref{fig:wl_all_plots} they will notice that the change in the distribution is quite small after \( T = 8 \) iterations. In particular, the KL will still be superior than \( \mollifiera \) if we early stopped at \( T = 8 \). Furthermore, we would have a fairer distribution as well (we would have moved from \( \meas{Q}_{0} \) less). As the boosting algorithms time complexity is linear \wrt \( T \) (\ie, the number of WLs learned) we could reduce the boosting iterations / runtime by 4x; whilst still maintaining superior KL and also having strong fairness. Indeed, the fairness and KL can be verified by examining \cref{fig:rr_all_plots,fig:sr_all_plots,fig:srclf_all_plots}.
Although, this would still result in a runtime many times larger than \mollifiera, we believe that the possible interpretability / human-in-the-loop capabilities of \fbde are out-weight the higher runtime --- we can interpret this comparison as a trade-off between lower runtime vs interpretability properties. 

\begin{table*}[t]
    \caption{Runtime (s) of \fbde~(\(T = 32\)) and baselines evaluated for \compas and \adult datasets. The table reports the mean and s.t.d.}
    \label{tab:runtime}
    \begin{center}
    \begin{small}
    \begin{sc}
    \begin{tabularx}{\textwidth}{DDDDDDDDD} 
    \toprule
& \mollifiera & \mollifierb & \mollifierc & \mollifierd & \celisa & \tabfairgan & \fairkmeans \\
        \midrule
\compas (sex) & \(88.467 \pm 0.193\) & \(90.362 \pm 0.296\) & \(89.070 \pm 0.443\) & \(89.123 \pm 0.210\) & \(0.740 \pm 1.127\) & \(25.234 \pm 0.131\) & \(62.365 \pm 0.251\) \\
        \midrule
\compas (race) & \(88.026 \pm 0.122\) & \(88.568 \pm 0.220\) & \(89.621 \pm 0.106\) & \(87.935 \pm 0.233\) & \(0.629 \pm 0.894\) & \(25.091 \pm 0.310\) & \(61.705 \pm 0.386\) \\
        \midrule
\adult (sex) & \(311.700 \pm 2.941\) & \(308.990 \pm 0.904\) & \(310.606 \pm 1.777\) & \(309.986 \pm 1.433\) & \(2.068 \pm 3.039\) & \(169.611 \pm 3.659\) & \(5446.451 \pm 54.161\) \\
        \midrule
\adult (sex) & \(313.745 \pm 1.462\) & \(311.175 \pm 1.319\) & \(313.996 \pm 0.755\) & \(315.955 \pm 6.093\) & \(1.735 \pm 2.094\) & \(171.614 \pm 3.297\) & \(5149.282 \pm 43.155\) \\
    \bottomrule
    \end{tabularx}
    \end{sc}
    \end{small}
    \end{center}
\end{table*}

\subsection{A Note about \texorpdfstring{\celisa}{MaxEnt}}

When using \celisa, we utilize take the following parameters: statistical rate setting \( \rho = 1 \), prior interpolation parameter \( C = 0.5 \), and marginal vector constrained by a weighted mean (from \citet[Algorithm 1]{ckvDP}).

\subsection{A Note About \texorpdfstring{\tabfairgan}{TabFair}}
We note the extremely poor performance of \tabfairgan in the \adult dataset with `race' sensitive attribute. We believe that the reason for this performance is the selection of suboptimal (hyper)parameters. Unfortunately, the model parameters presented in \citet[Table A1 in Appendix]{rgTF} are not applicable to our experiments as they utilize a different pre-processing of the \compas and \adult datasets. For \compas, we use a total of 128 epochs with 40 of those being `fair epochs' (\citet{rgTF}'s fairness training phase); in addition the batch sizes are specified to be 128 and the fairness regularizer parameter \( \lambda_{f} = 2.5 \). For \adult, we have 128 epochs with 30 of those being `fair epochs'; in addition the batch sizes are specified to be 128 and the fairness regularizer parameter \( \lambda_{f} = 2 \). Although we did not make an effort to fine-tune these (hyper)parameters for these extra settings in the Appendix, the lack of performance transfer when only the sensitive attribute is switched highlights a significant downside when compared to \fbde and other optimization approaches.

\subsection{Full Weak Learner Plot}
We present the full plot of \cref{fig:wl_teaser} in \cref{fig:full_wl}. We leave the careful examination of such weak learners to future work and rather present these plots as a proof of concept.

\begin{sidewaysfigure}[t]
    \centering
    \includegraphics[width=0.24\columnwidth]{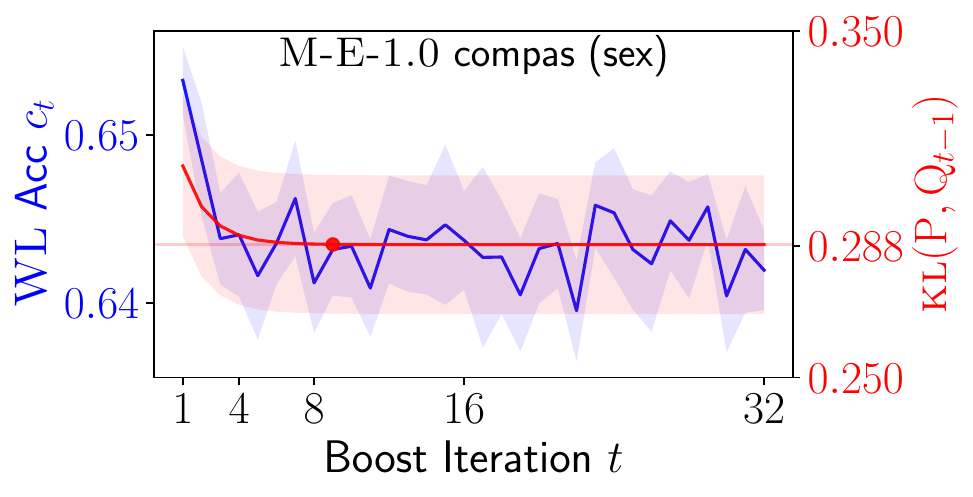}%
    \includegraphics[width=0.24\columnwidth]{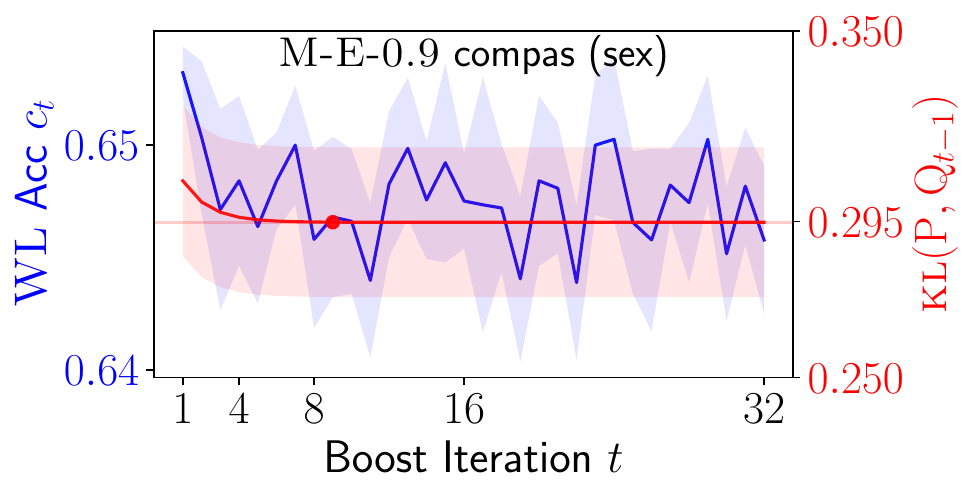}%
    \includegraphics[width=0.24\columnwidth]{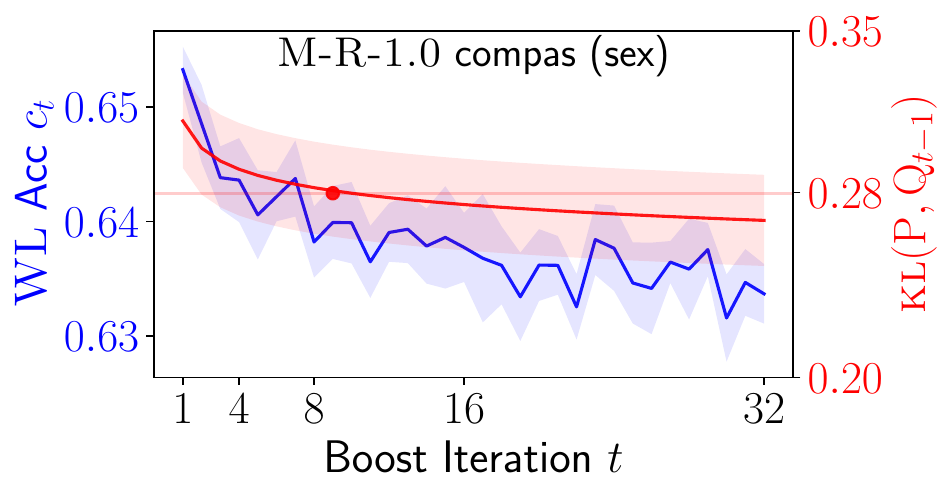}%
    \includegraphics[width=0.24\columnwidth]{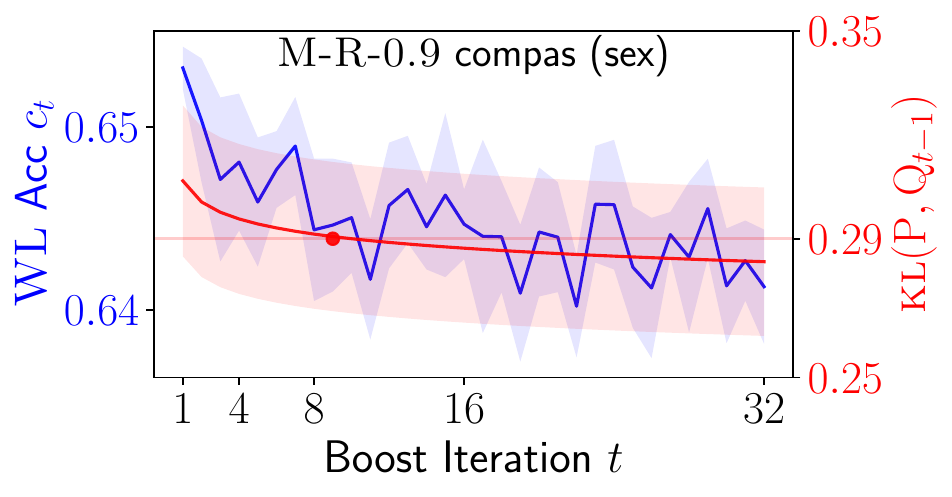}%
    \\
    \includegraphics[width=0.24\columnwidth]{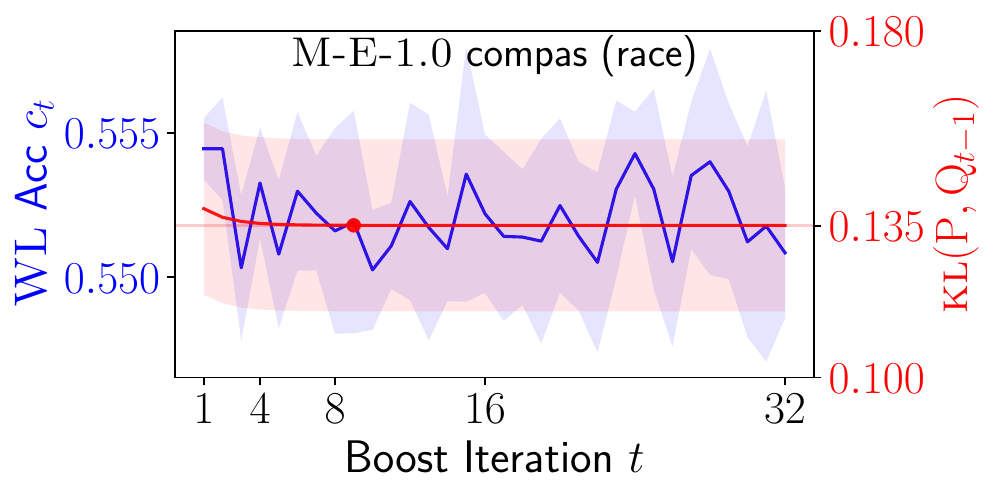}%
    \includegraphics[width=0.24\columnwidth]{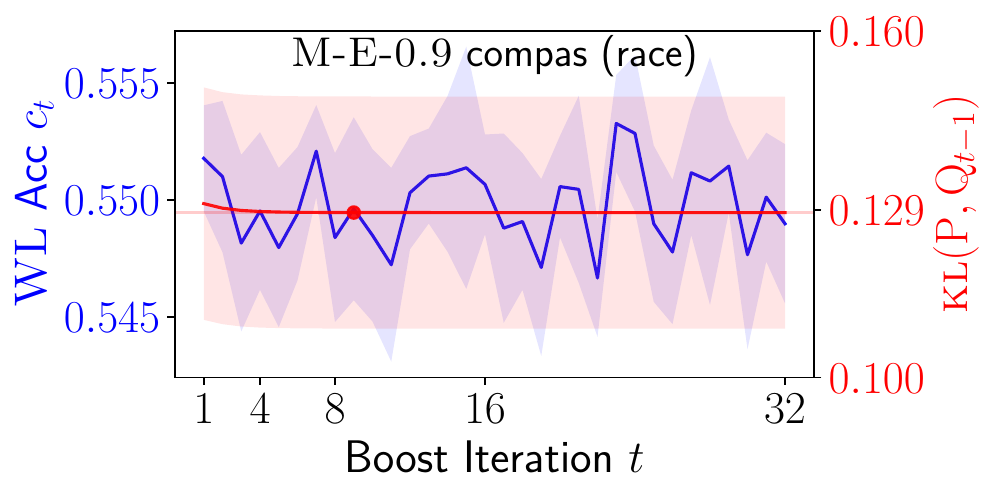}%
    \includegraphics[width=0.24\columnwidth]{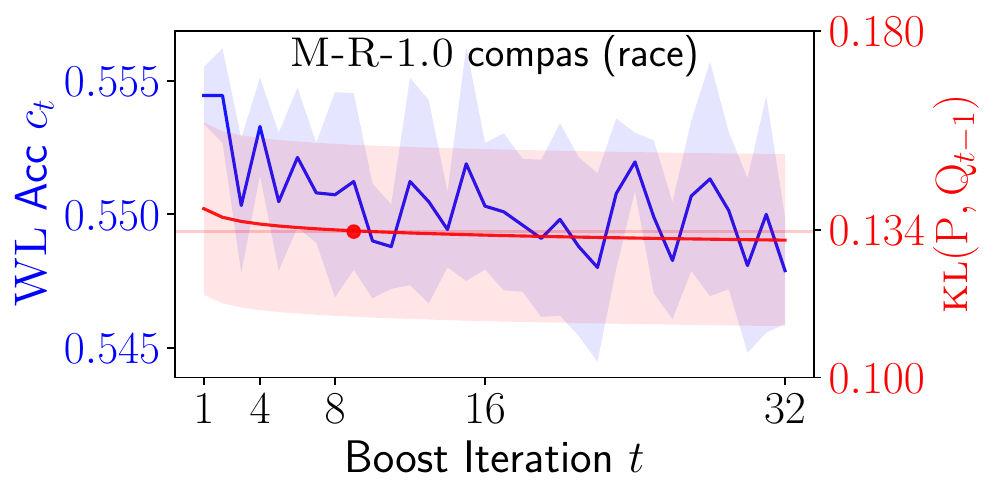}%
    \includegraphics[width=0.24\columnwidth]{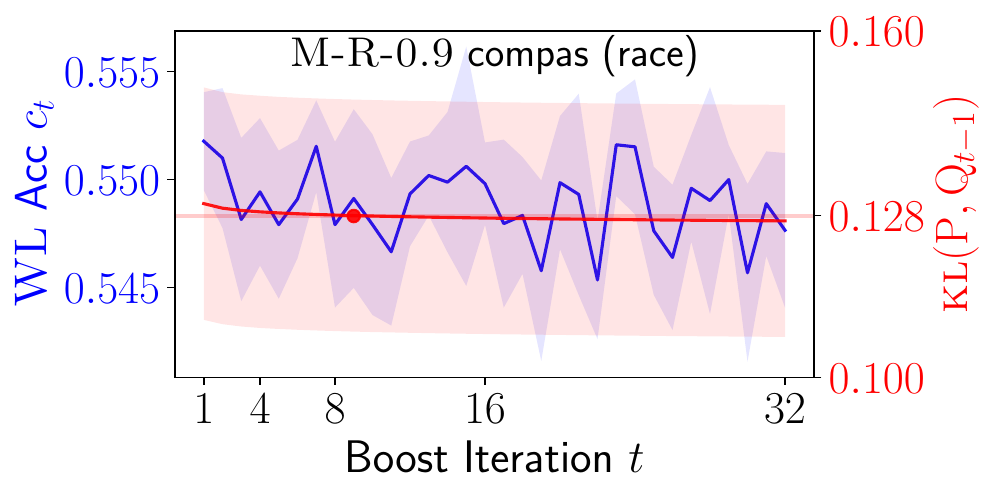}%
    \\
    \includegraphics[width=0.24\columnwidth]{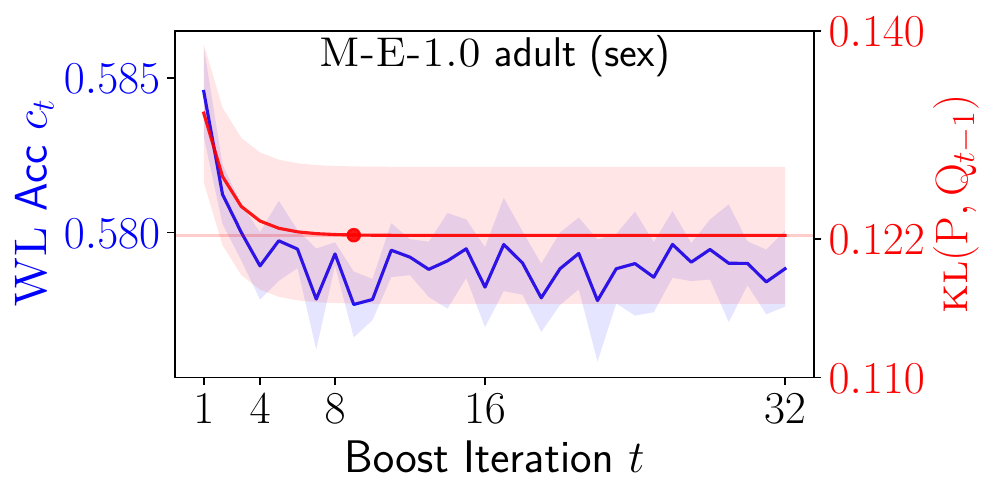}%
    \includegraphics[width=0.24\columnwidth]{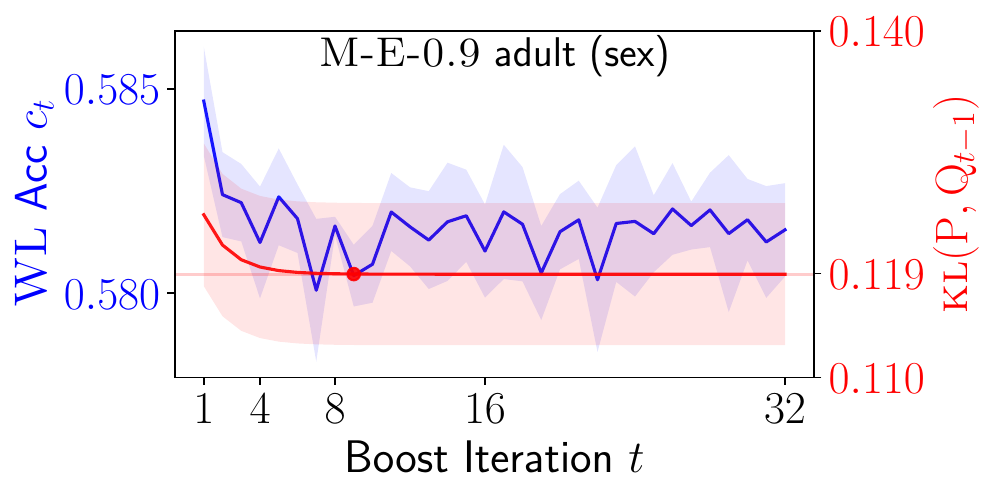}%
    \includegraphics[width=0.24\columnwidth]{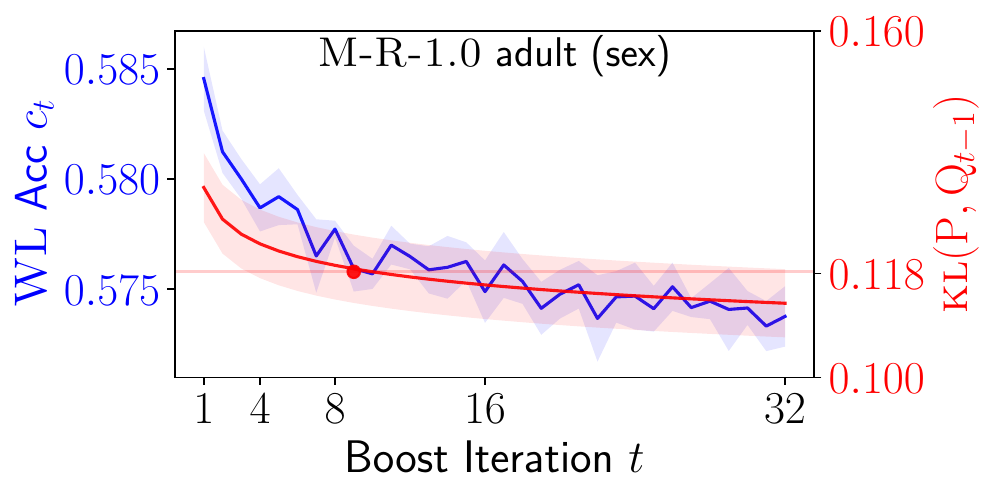}%
    \includegraphics[width=0.24\columnwidth]{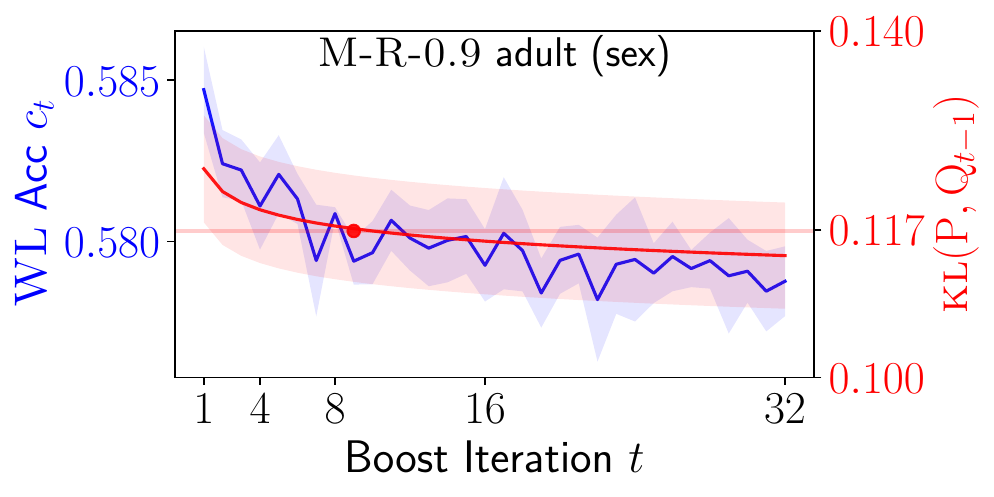}%
    \\
    \includegraphics[width=0.24\columnwidth]{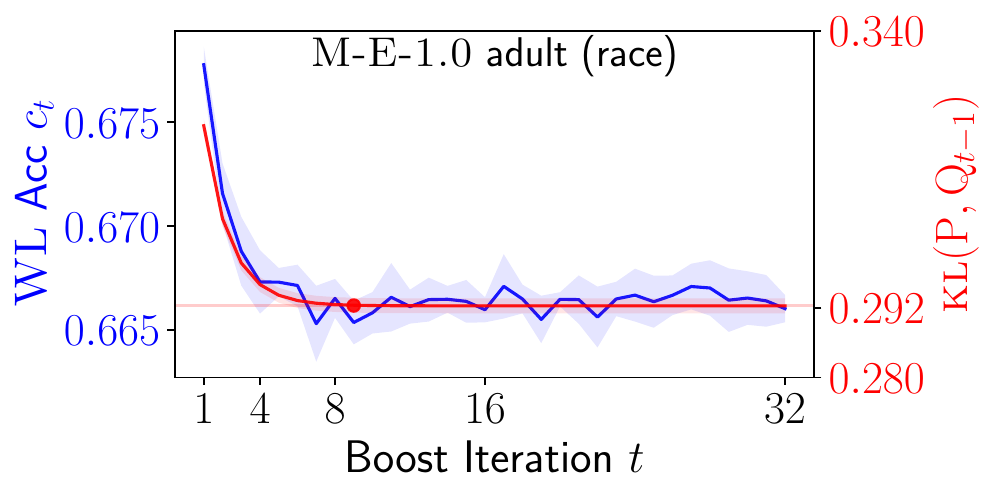}%
    \includegraphics[width=0.24\columnwidth]{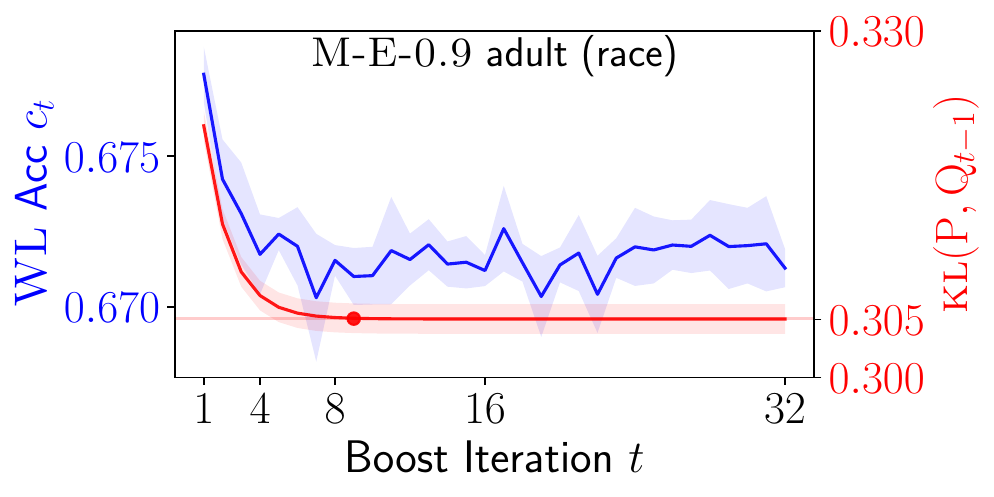}%
    \includegraphics[width=0.24\columnwidth]{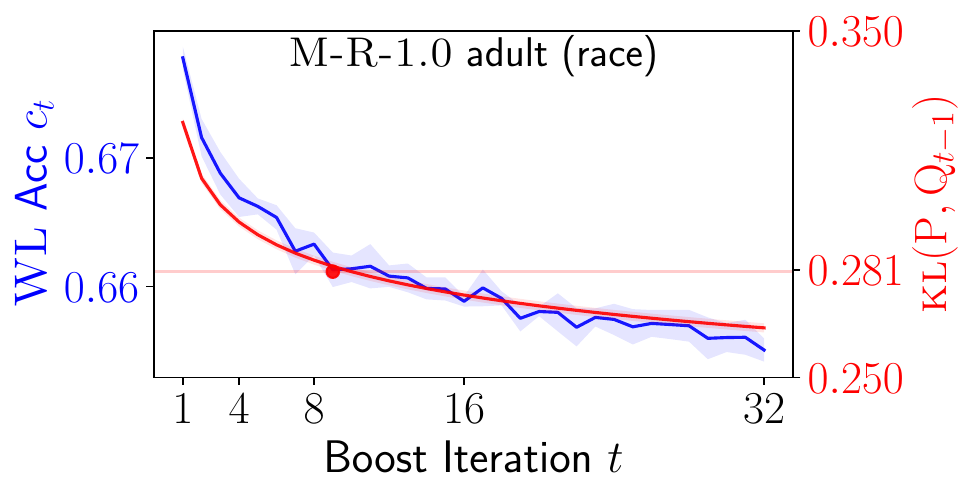}%
    \includegraphics[width=0.24\columnwidth]{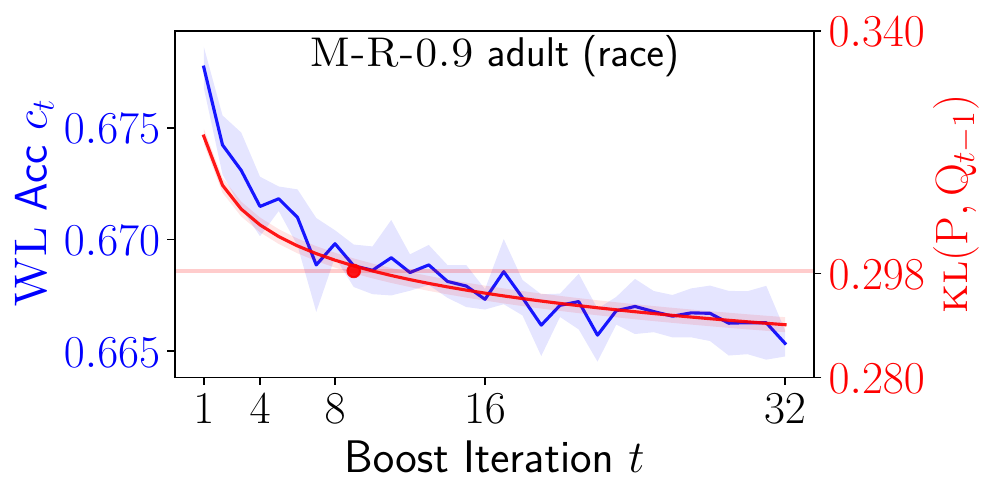}%
    \caption{All WL accuracy vs KL over boosting iterations plots for \compas and \adult. Horizontal line depicts the  \( \kl(\meas{P}, \meas{Q}_{8})\) value.}
    \label{fig:wl_all_plots}
\end{sidewaysfigure}

\begin{sidewaysfigure}[t]
    \centering
    \includegraphics[width=0.24\columnwidth]{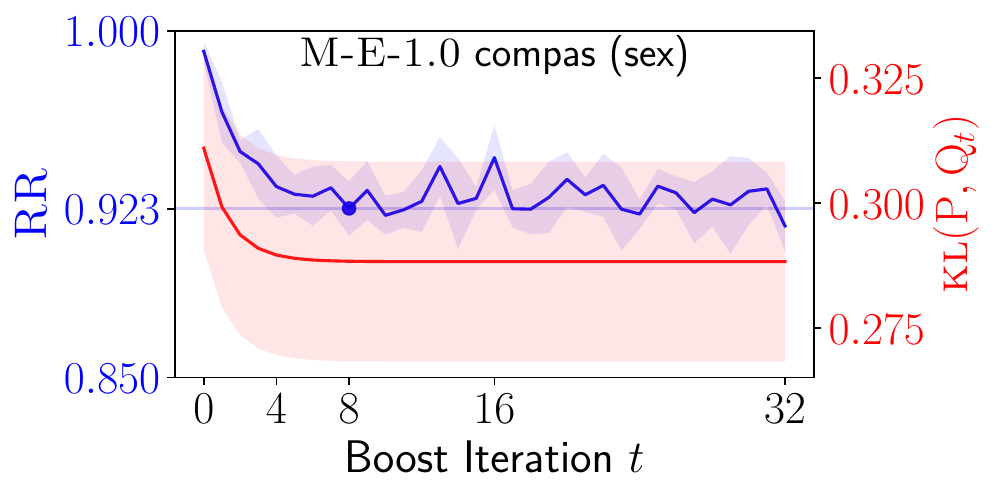}%
    \includegraphics[width=0.24\columnwidth]{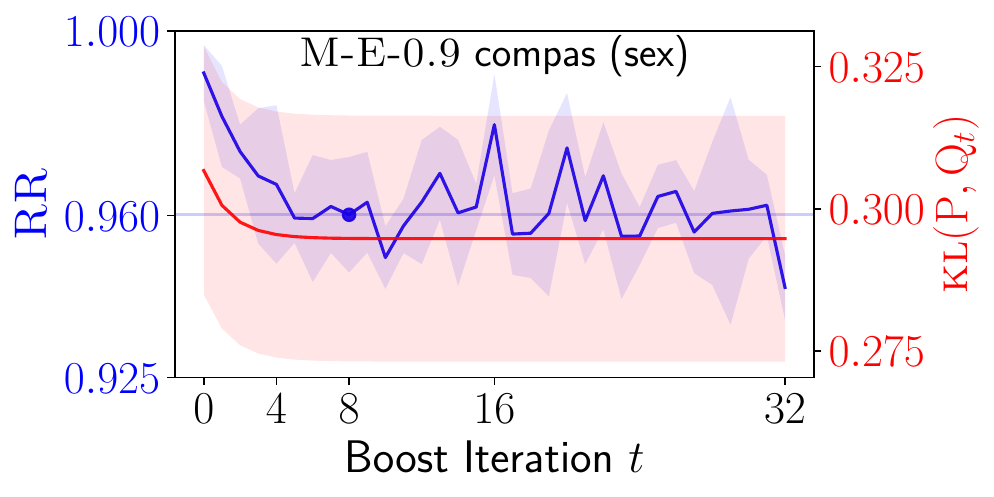}%
    \includegraphics[width=0.24\columnwidth]{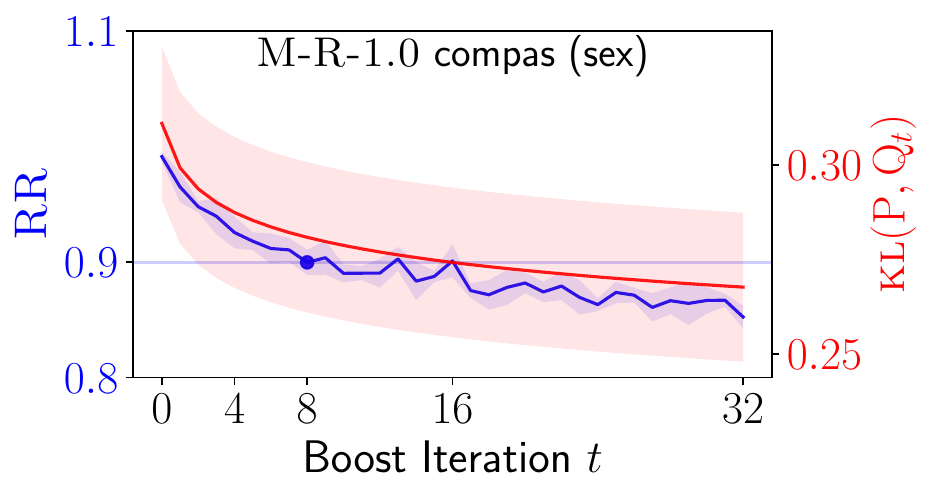}%
    \includegraphics[width=0.24\columnwidth]{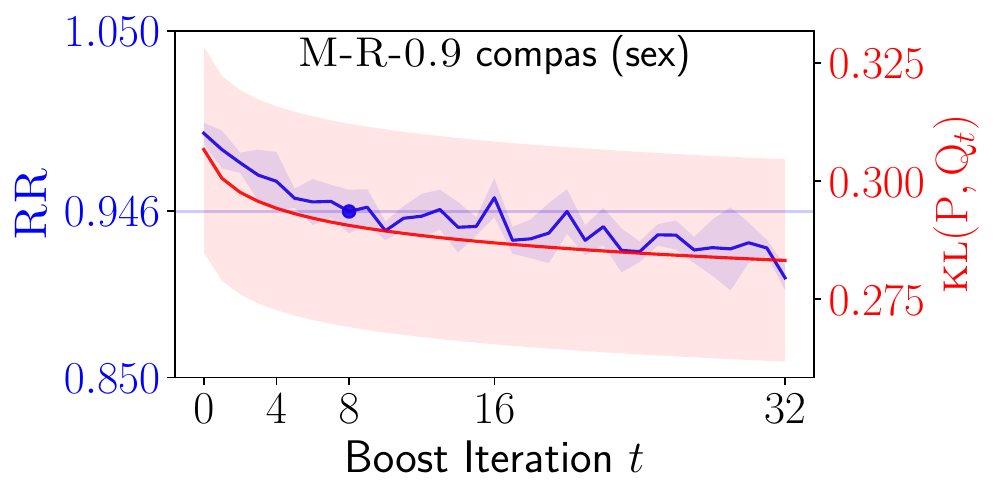}%
    \\
    \includegraphics[width=0.24\columnwidth]{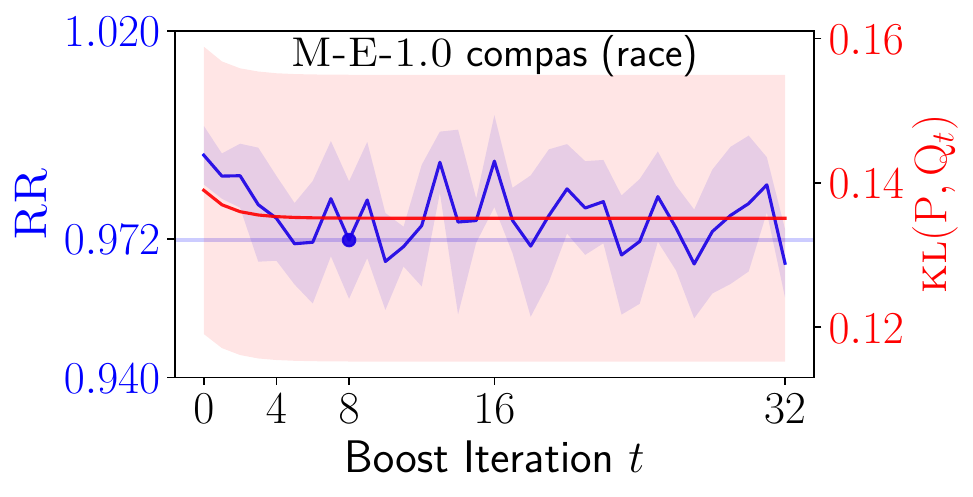}%
    \includegraphics[width=0.24\columnwidth]{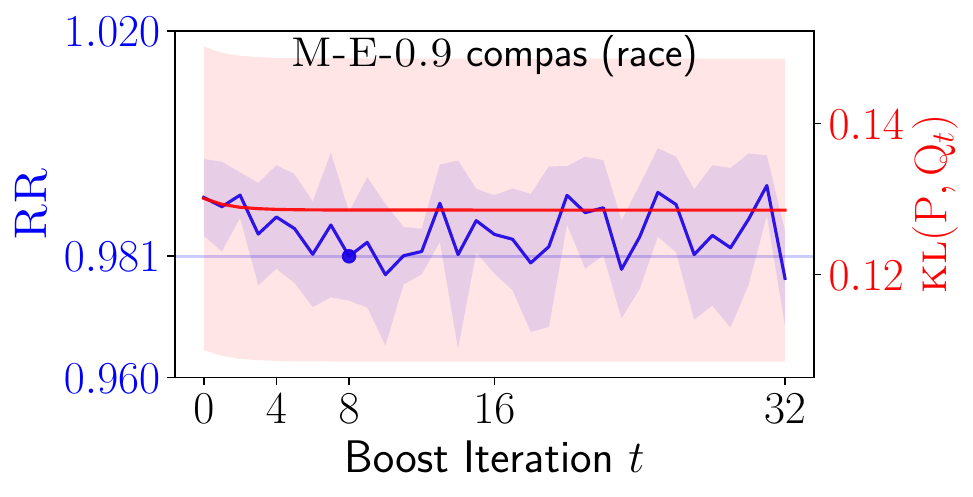}%
    \includegraphics[width=0.24\columnwidth]{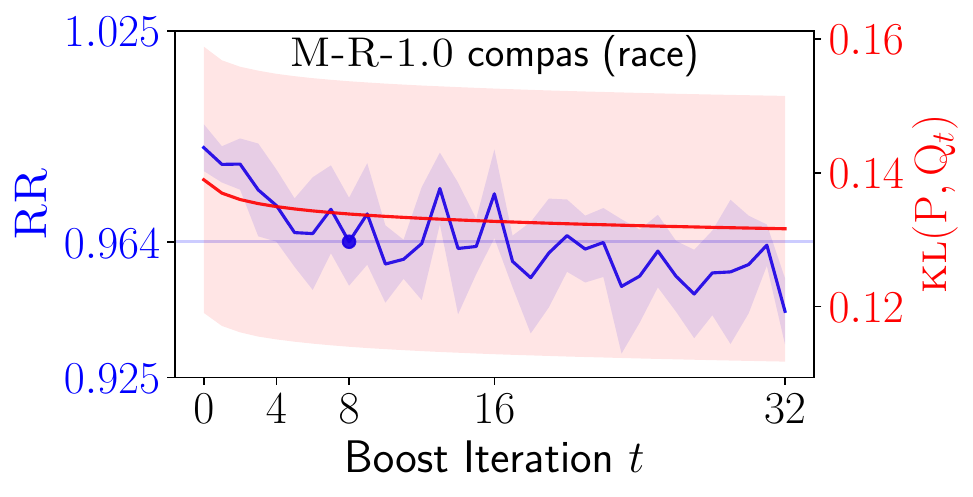}%
    \includegraphics[width=0.24\columnwidth]{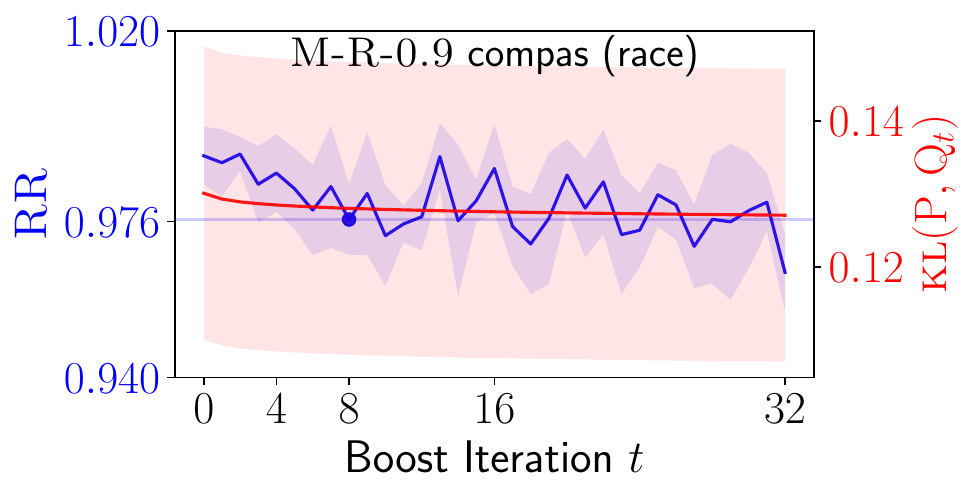}%
    \\
    \includegraphics[width=0.24\columnwidth]{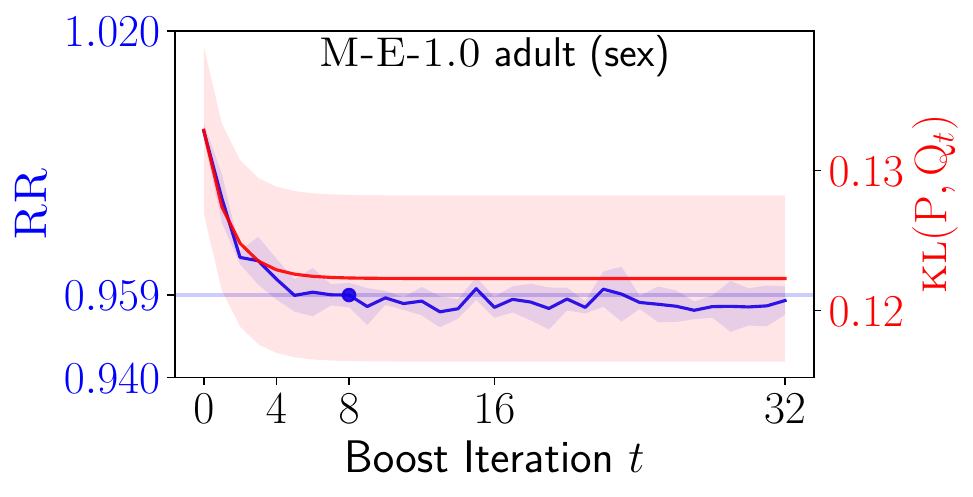}%
    \includegraphics[width=0.24\columnwidth]{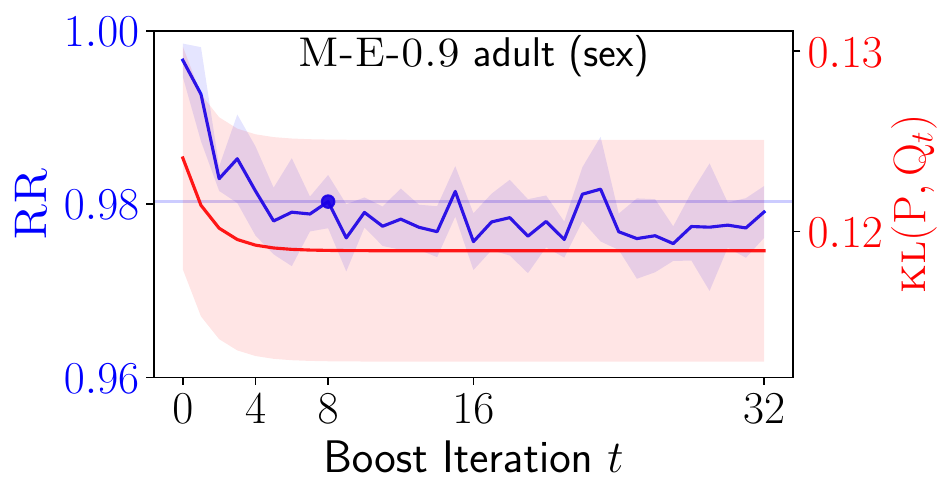}%
    \includegraphics[width=0.24\columnwidth]{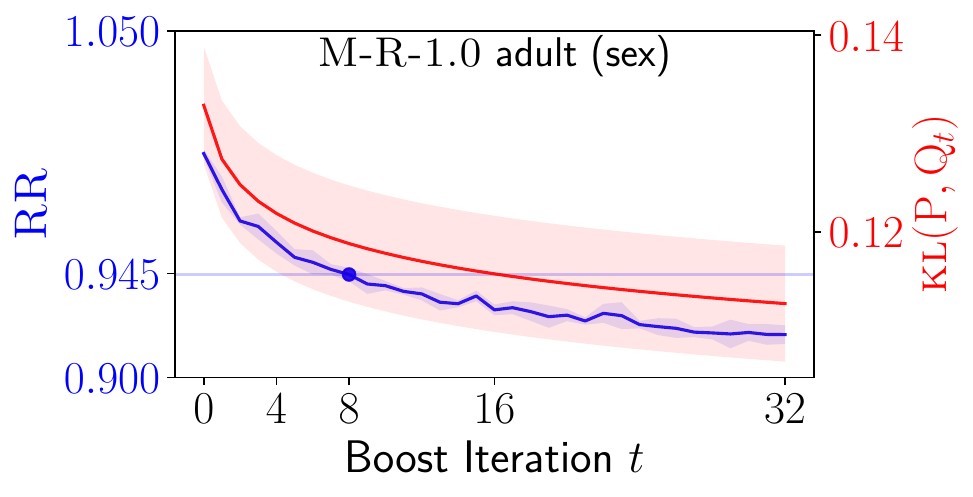}%
    \includegraphics[width=0.24\columnwidth]{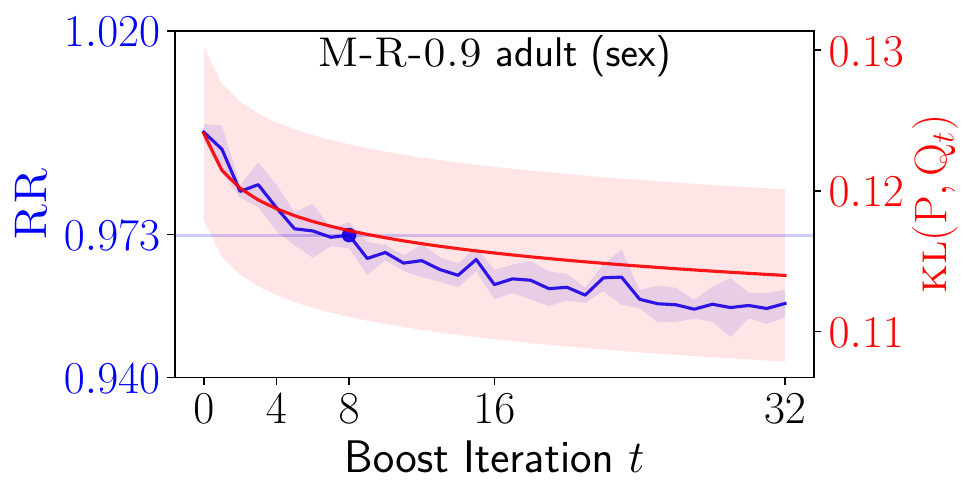}%
    \\
    \includegraphics[width=0.24\columnwidth]{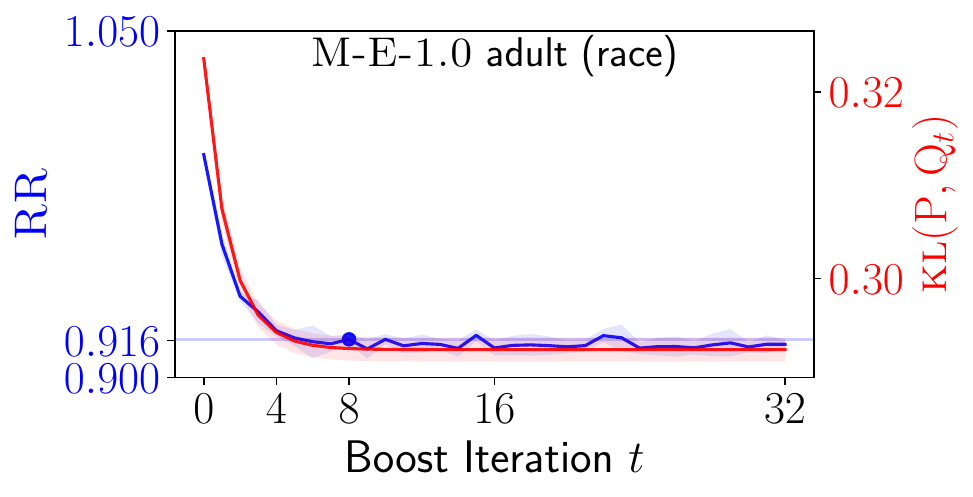}%
    \includegraphics[width=0.24\columnwidth]{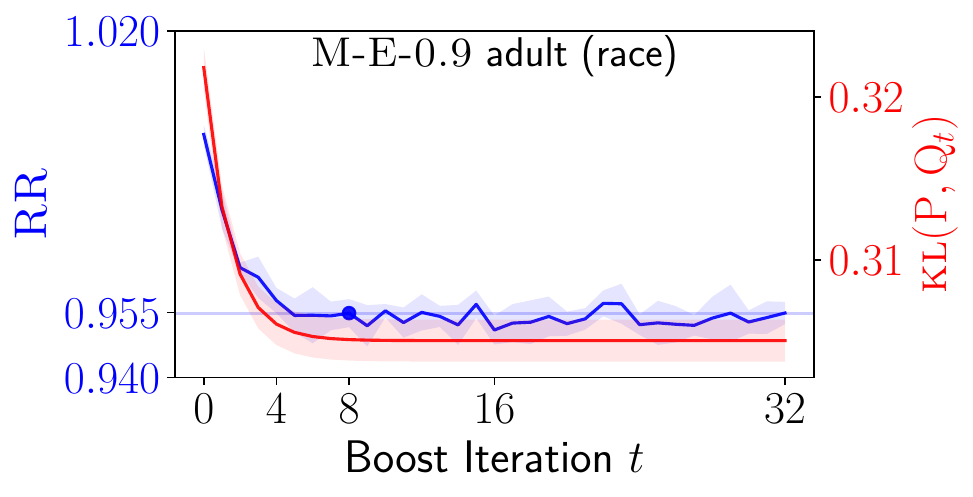}%
    \includegraphics[width=0.24\columnwidth]{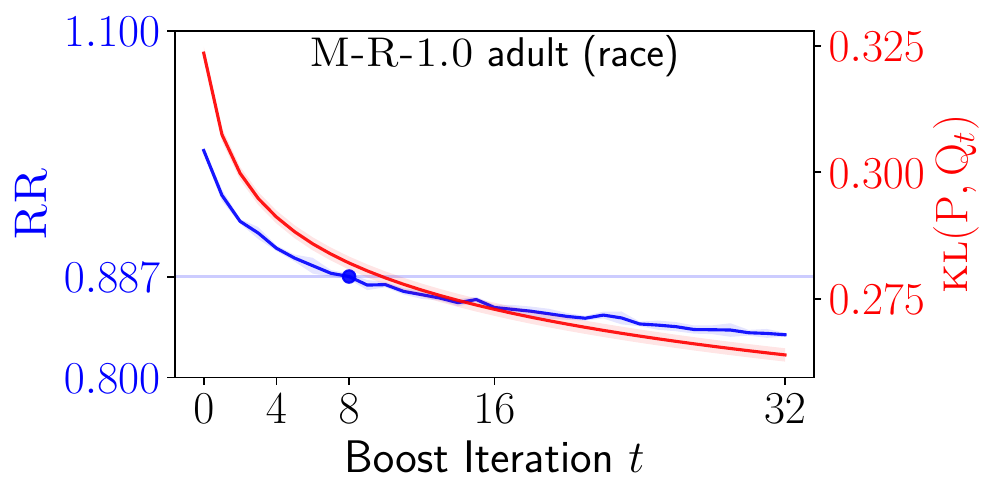}%
    \includegraphics[width=0.24\columnwidth]{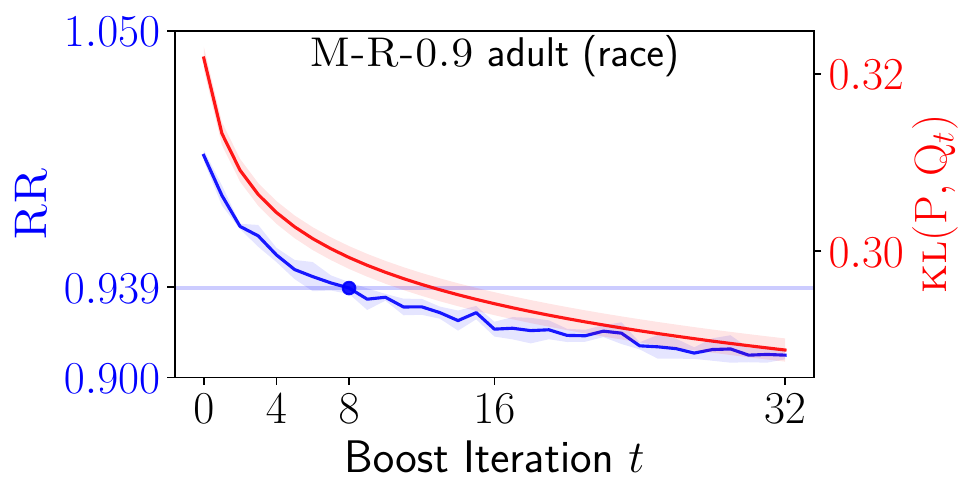}%
    \caption{All RR vs KL over boosting iterations plots for \compas and \adult. Horizontal line depicts the \( T = 8 \) RR value.}
    \label{fig:rr_all_plots}
\end{sidewaysfigure}

\begin{sidewaysfigure}[t]
    \centering
    \includegraphics[width=0.24\columnwidth]{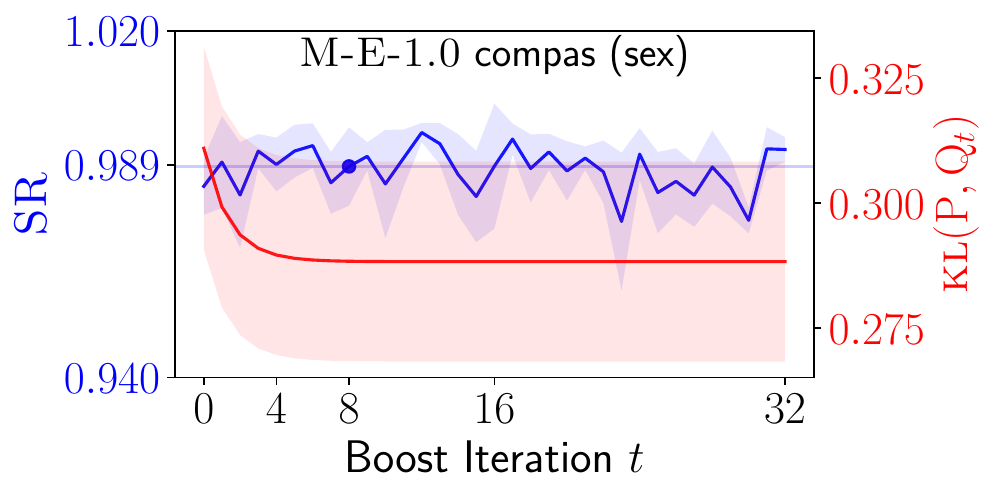}%
    \includegraphics[width=0.24\columnwidth]{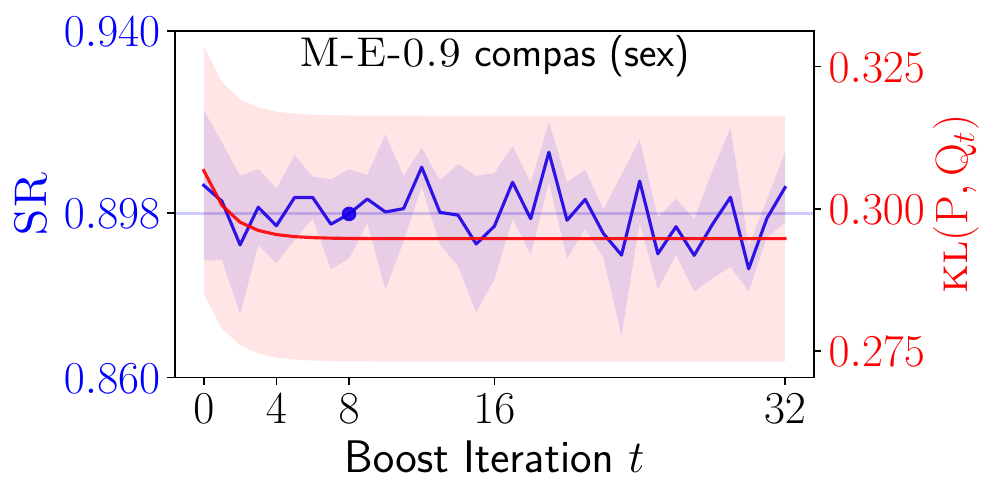}%
    \includegraphics[width=0.24\columnwidth]{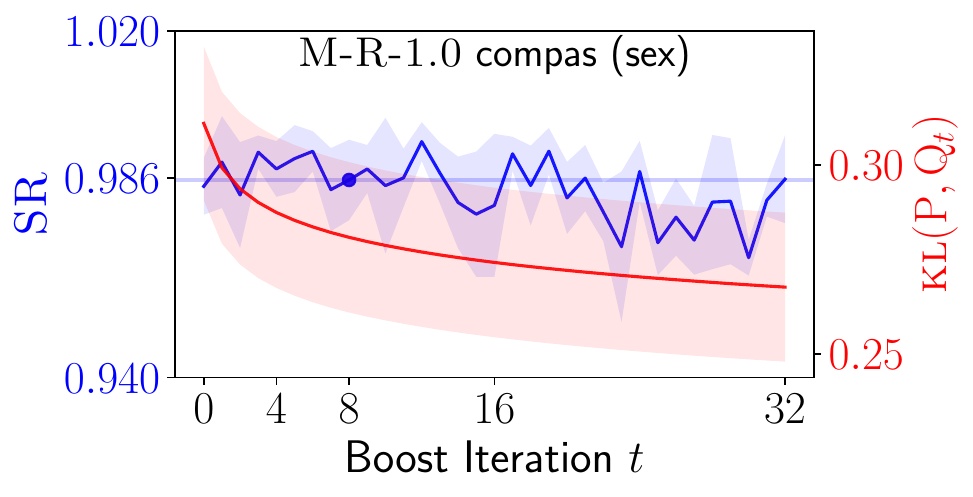}%
    \includegraphics[width=0.24\columnwidth]{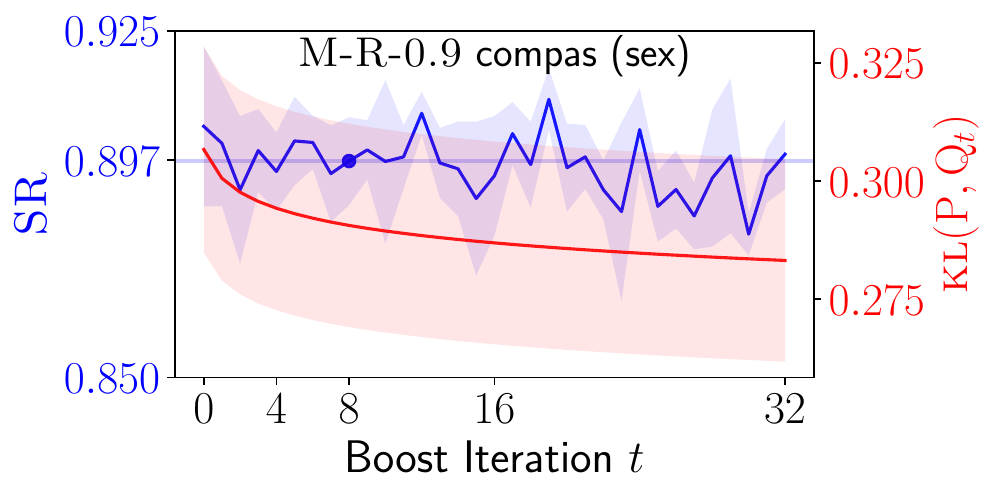}%
    \\
    \includegraphics[width=0.24\columnwidth]{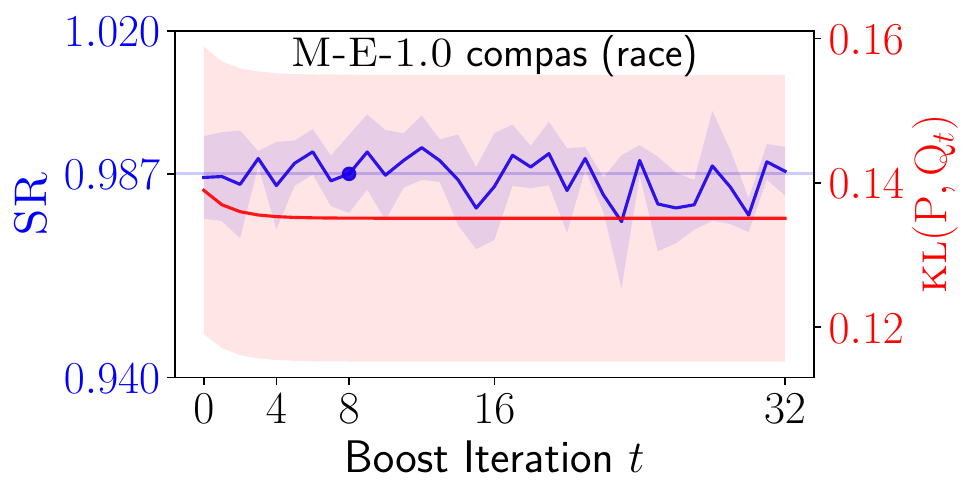}%
    \includegraphics[width=0.24\columnwidth]{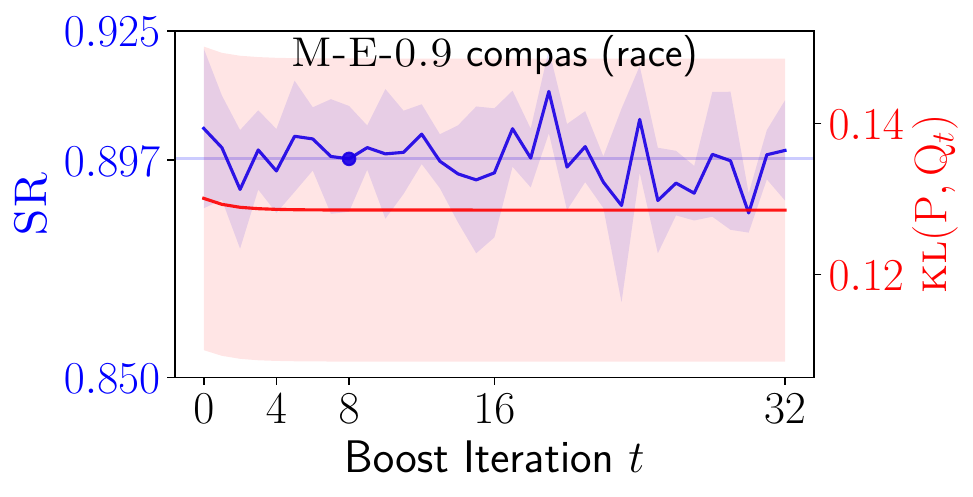}%
    \includegraphics[width=0.24\columnwidth]{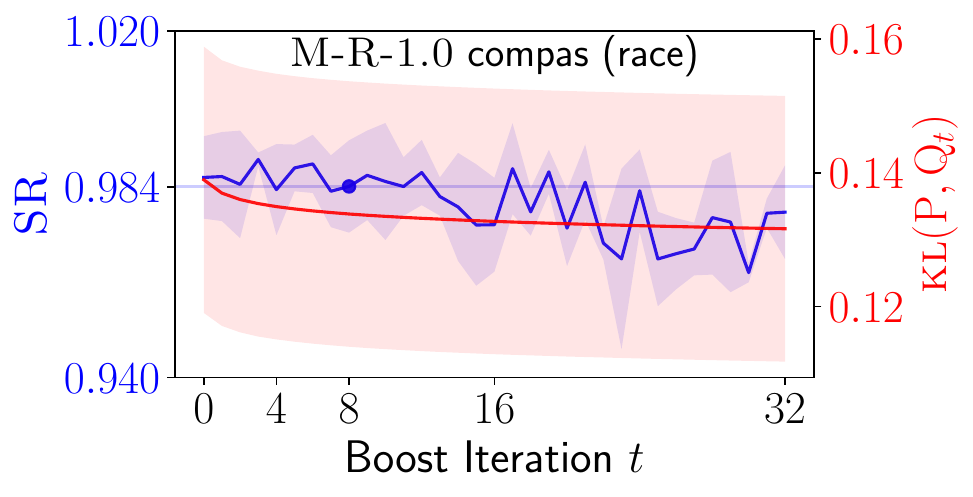}%
    \includegraphics[width=0.24\columnwidth]{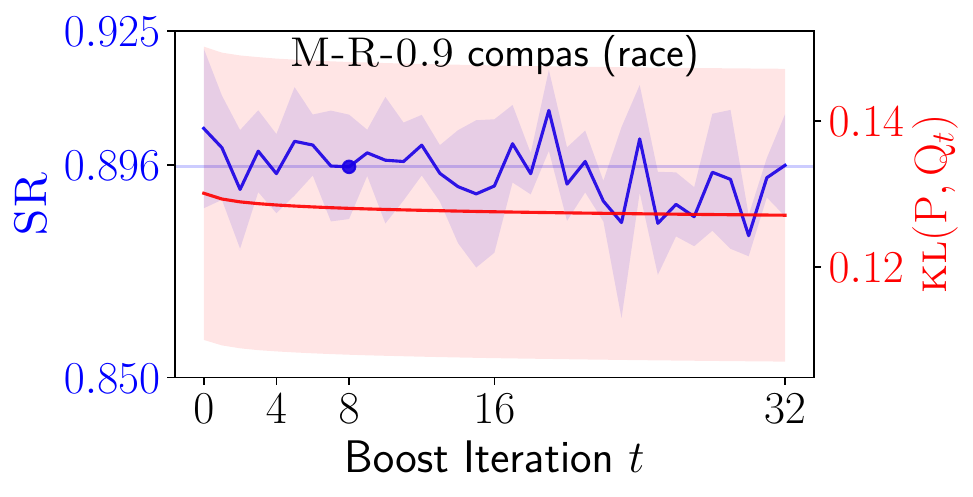}%
    \\
    \includegraphics[width=0.24\columnwidth]{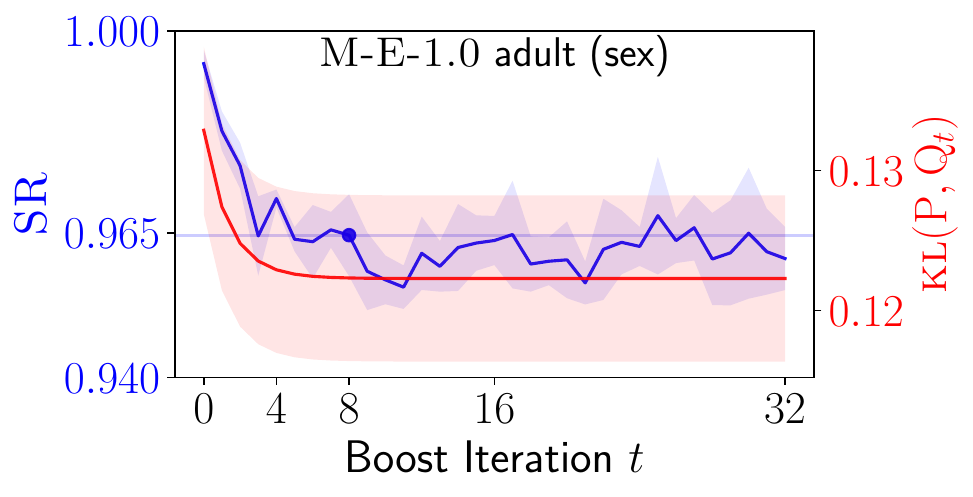}%
    \includegraphics[width=0.24\columnwidth]{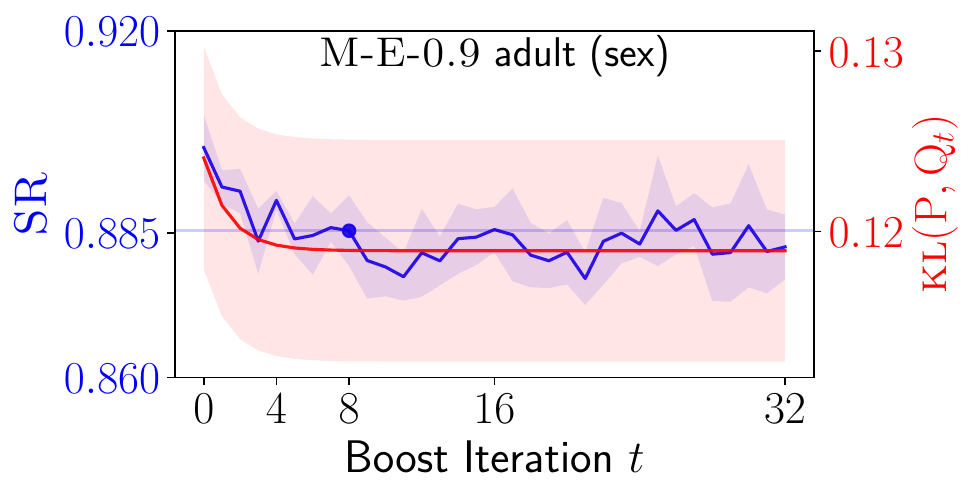}%
    \includegraphics[width=0.24\columnwidth]{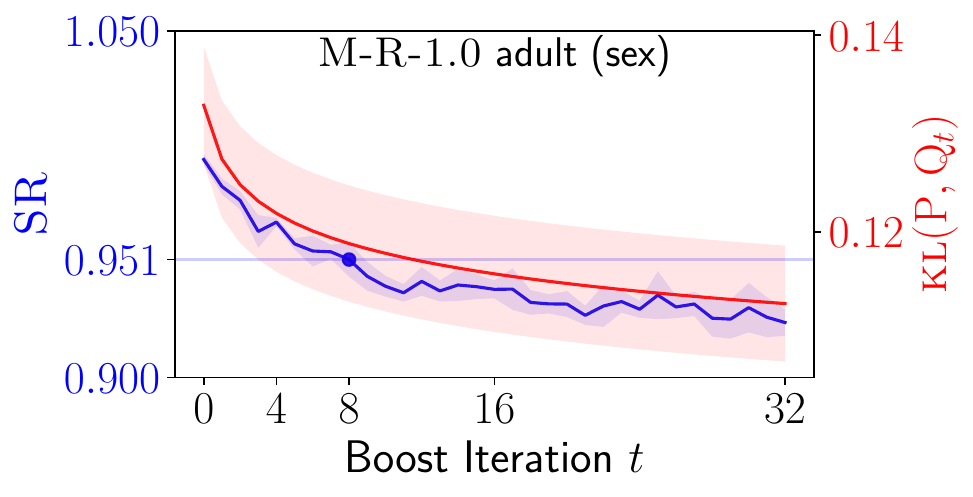}%
    \includegraphics[width=0.24\columnwidth]{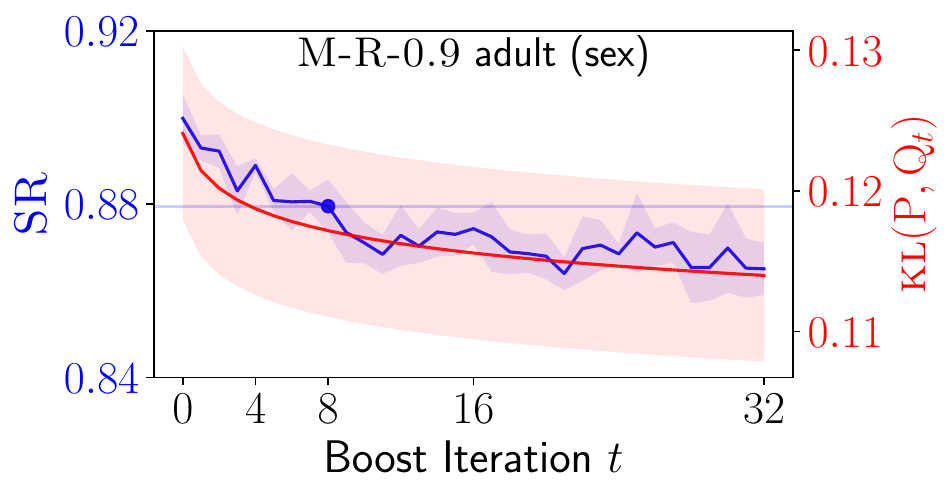}%
    \\
    \includegraphics[width=0.24\columnwidth]{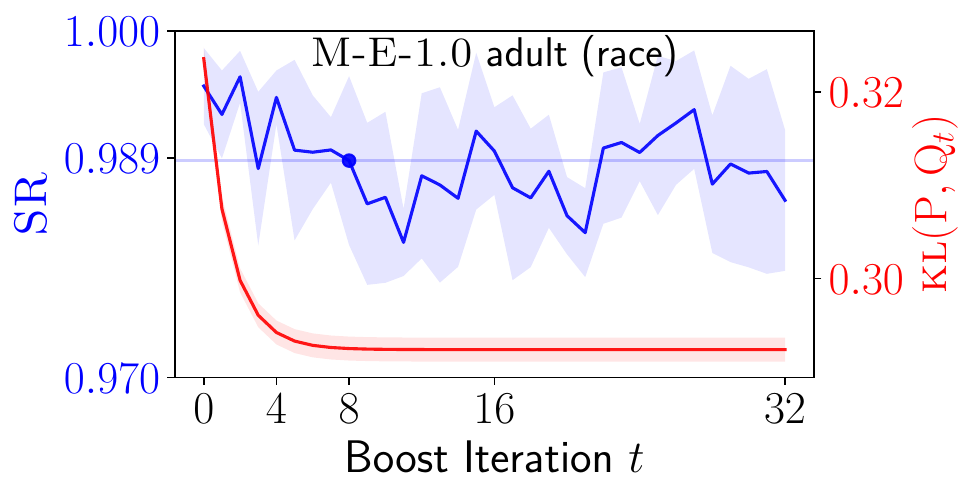}%
    \includegraphics[width=0.24\columnwidth]{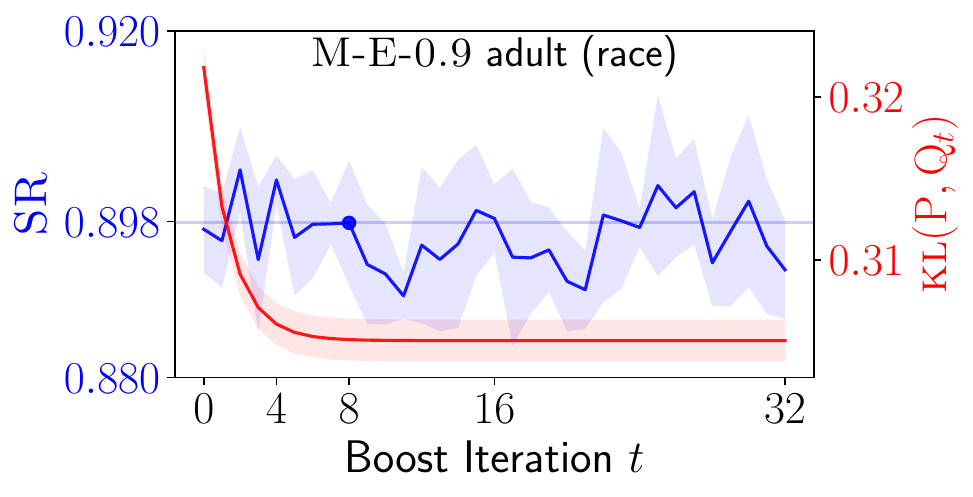}%
    \includegraphics[width=0.24\columnwidth]{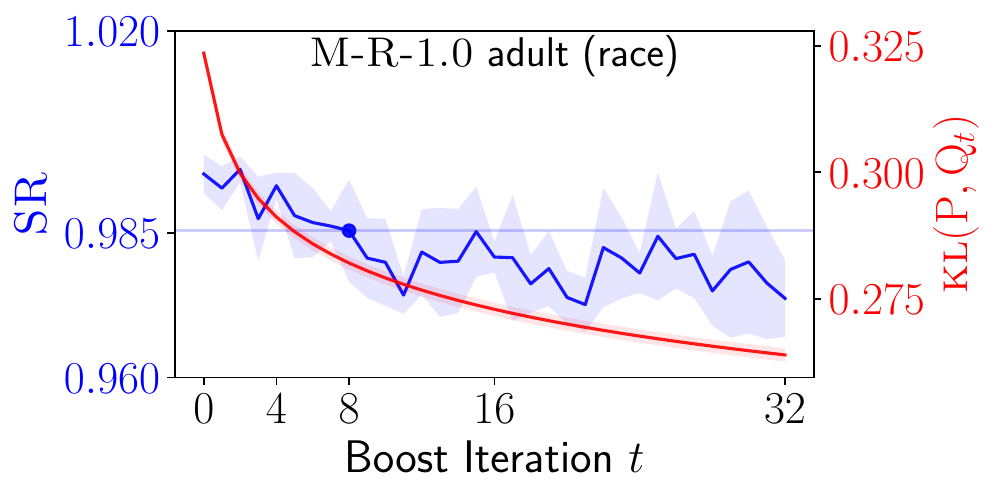}%
    \includegraphics[width=0.24\columnwidth]{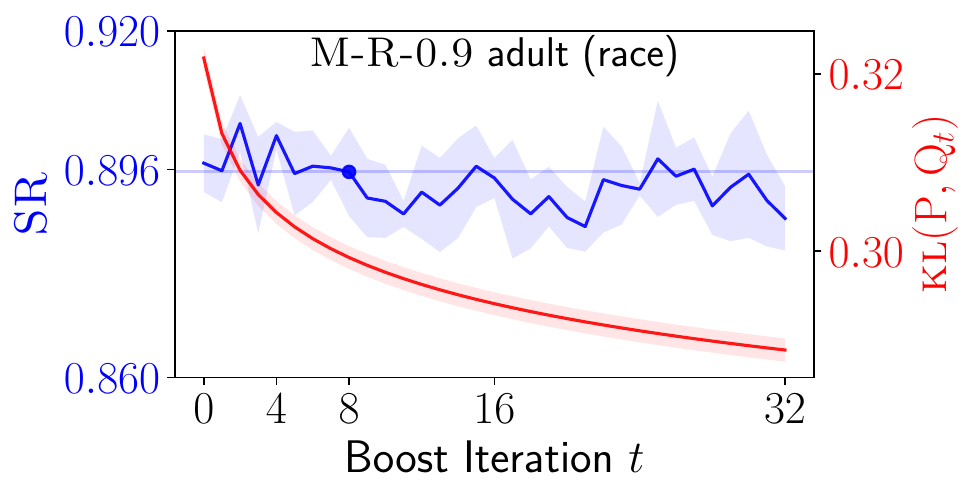}%
    \caption{All SR vs KL over boosting iterations plots for \compas and \adult. Horizontal line depicts the \( T = 8 \) SR value.}
    \label{fig:sr_all_plots}
\end{sidewaysfigure}

\begin{sidewaysfigure}[t]
    \centering
    \includegraphics[width=0.24\columnwidth]{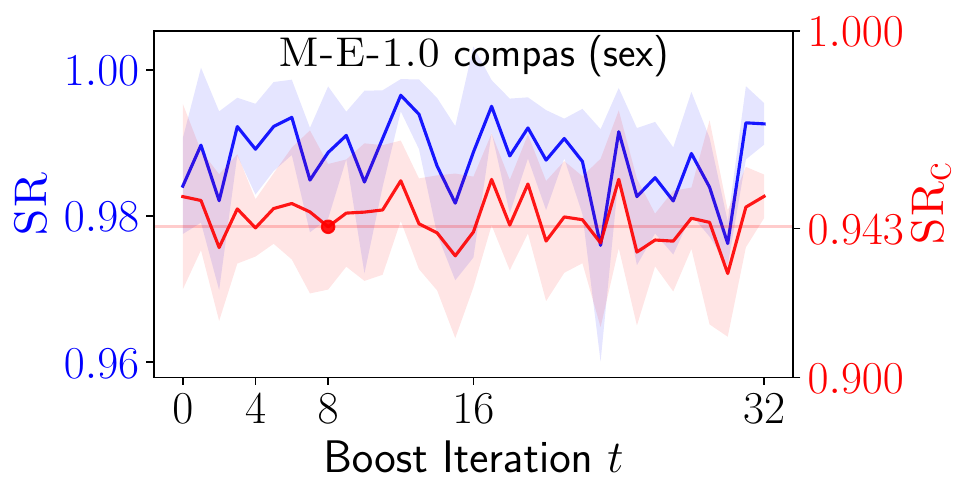}%
    \includegraphics[width=0.24\columnwidth]{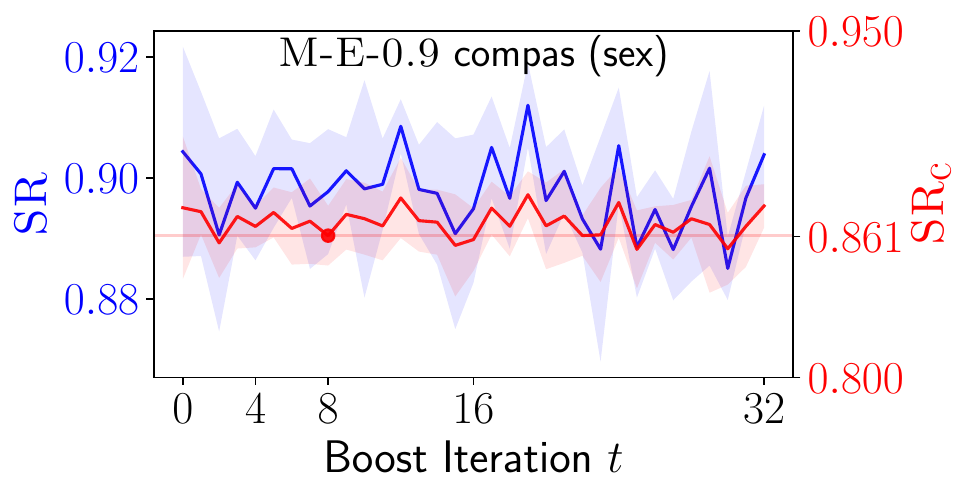}%
    \includegraphics[width=0.24\columnwidth]{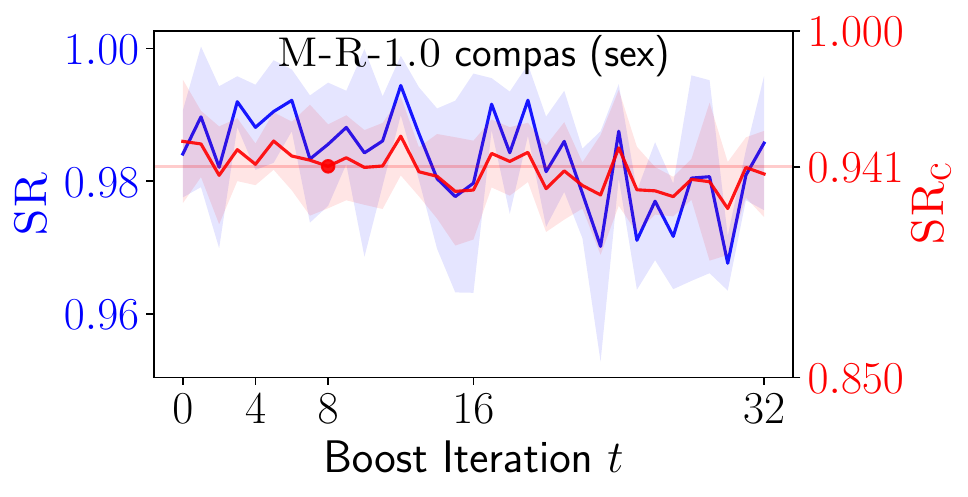}%
    \includegraphics[width=0.24\columnwidth]{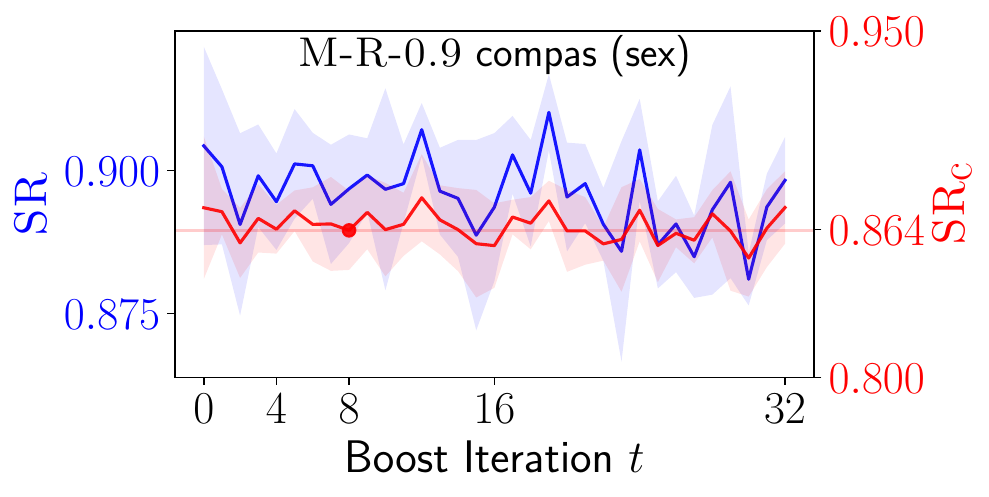}%
    \\
    \includegraphics[width=0.24\columnwidth]{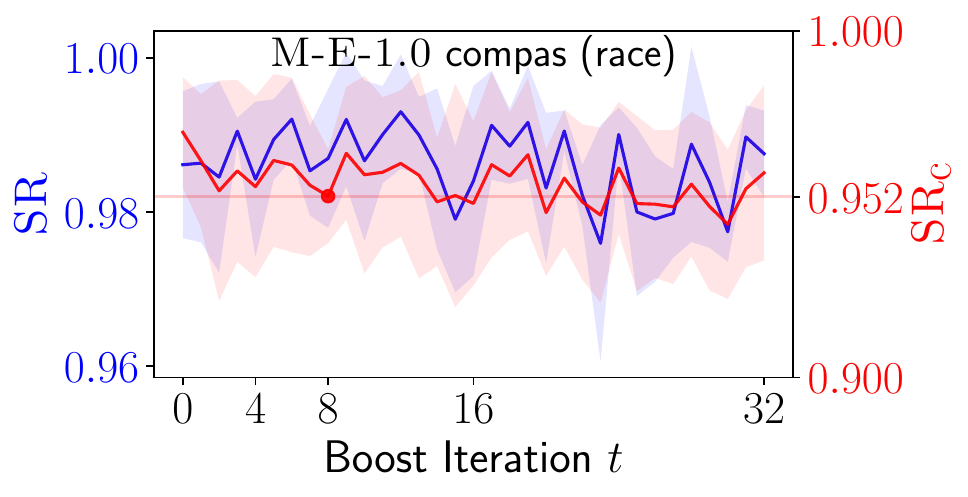}%
    \includegraphics[width=0.24\columnwidth]{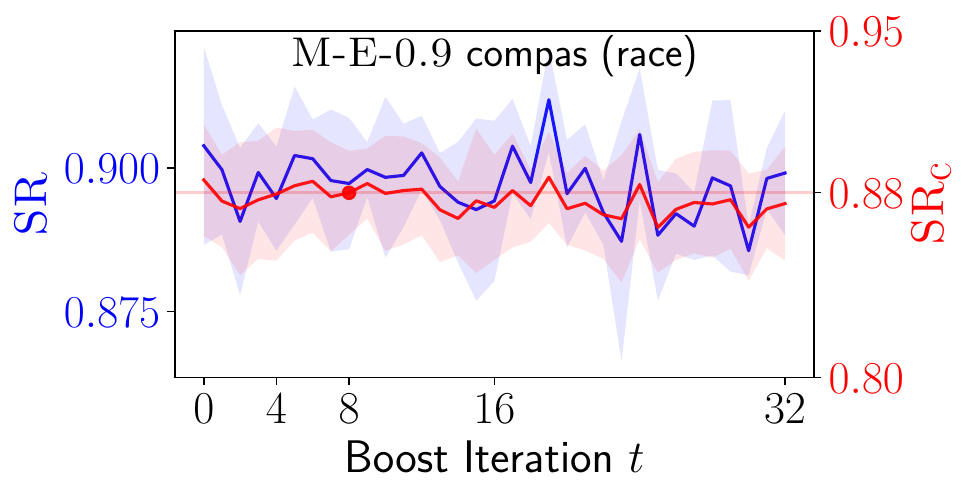}%
    \includegraphics[width=0.24\columnwidth]{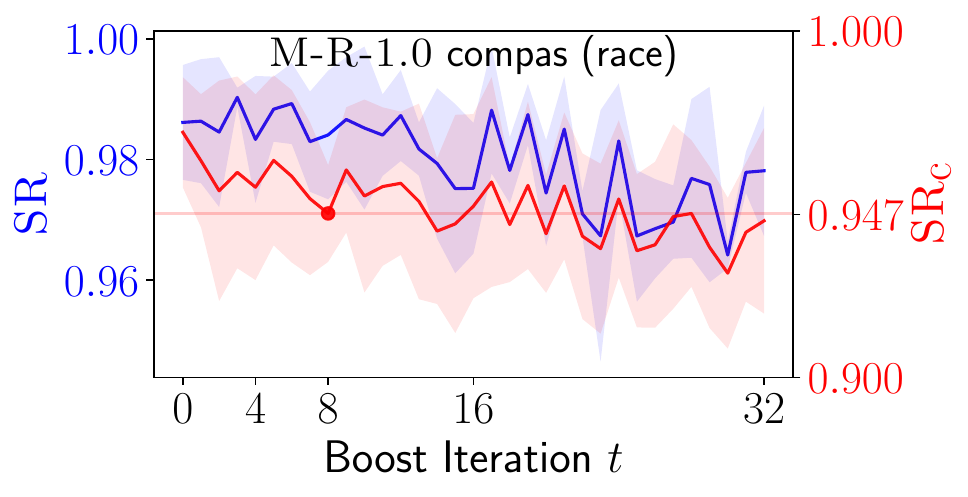}%
    \includegraphics[width=0.24\columnwidth]{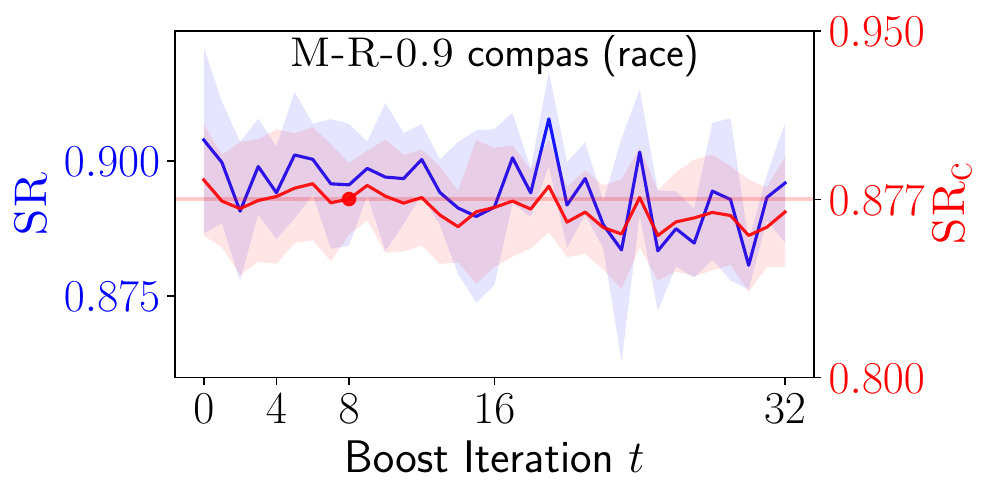}%
    \\
    \includegraphics[width=0.24\columnwidth]{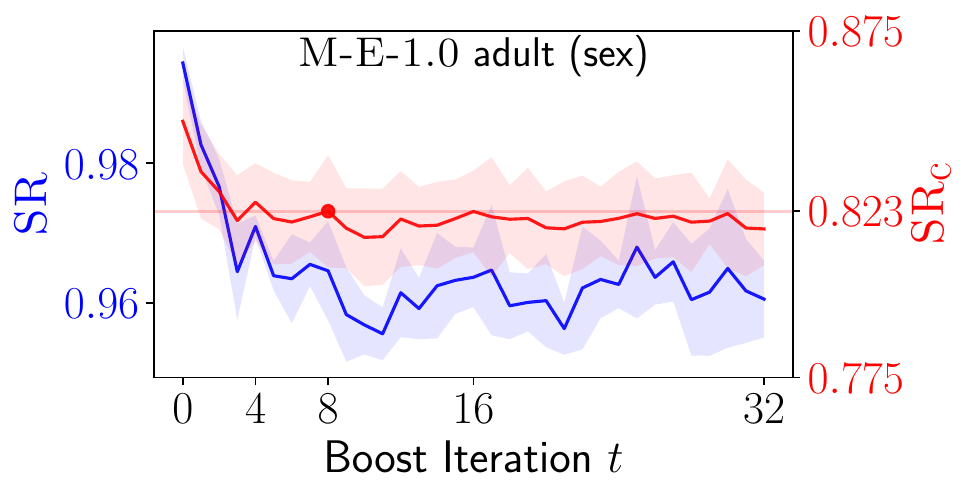}%
    \includegraphics[width=0.24\columnwidth]{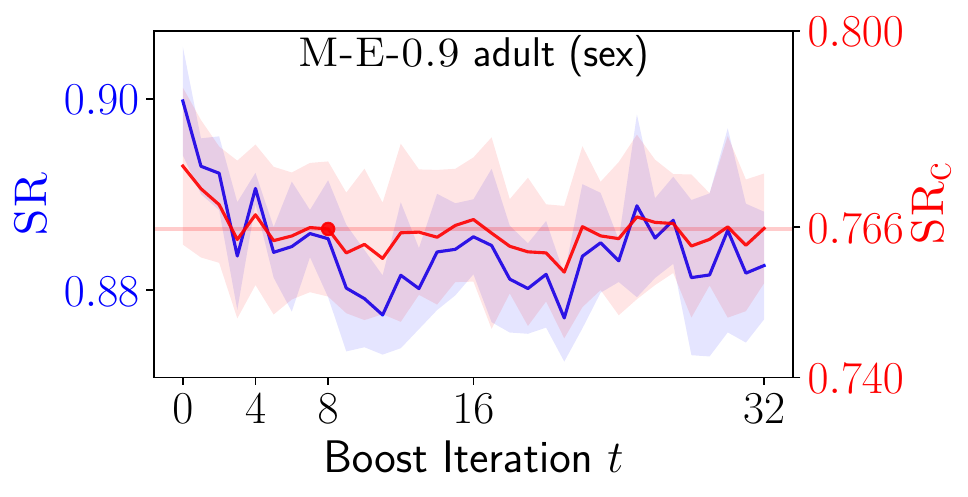}%
    \includegraphics[width=0.24\columnwidth]{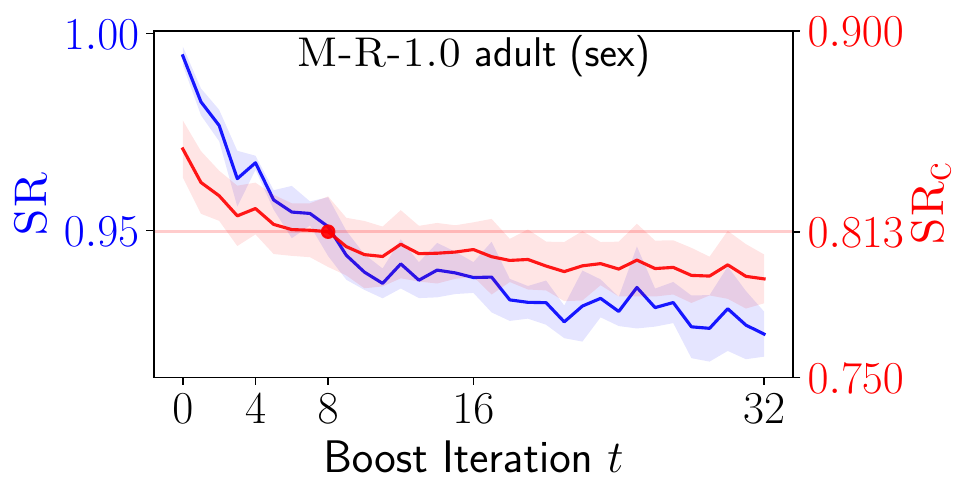}%
    \includegraphics[width=0.24\columnwidth]{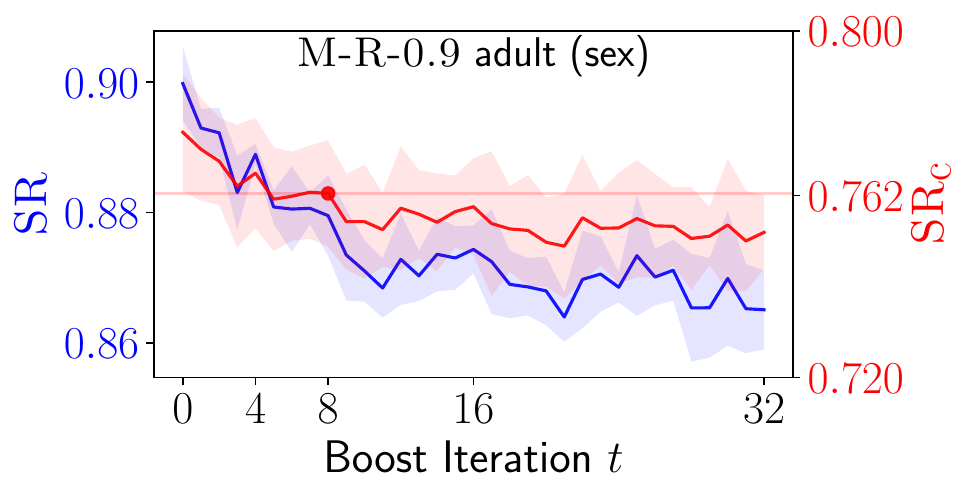}%
    \\
    \includegraphics[width=0.24\columnwidth]{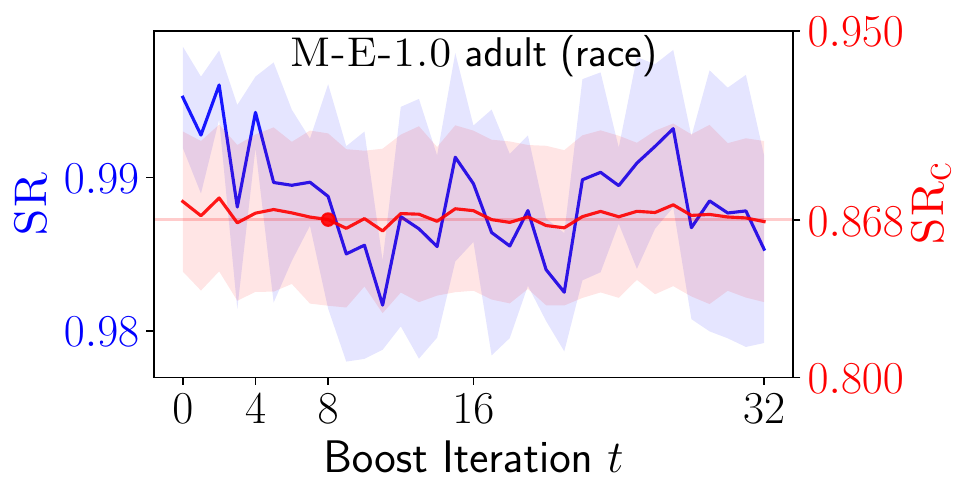}%
    \includegraphics[width=0.24\columnwidth]{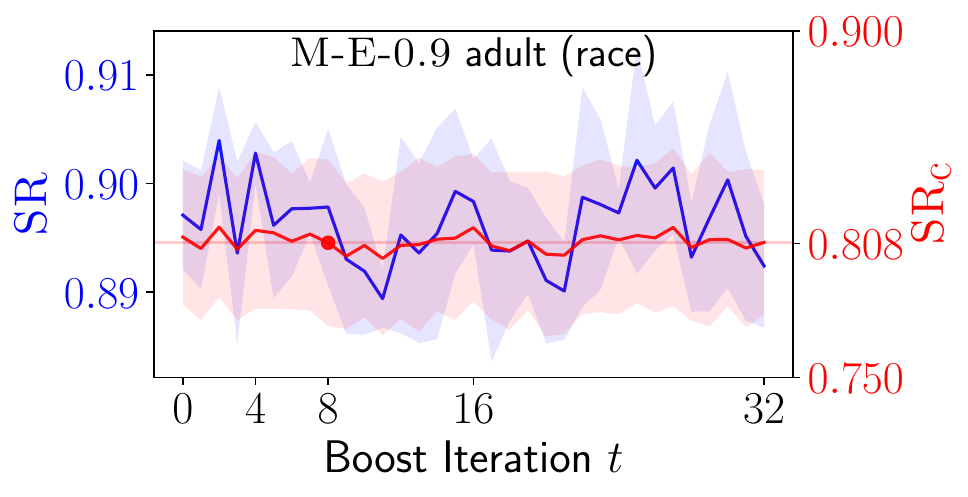}%
    \includegraphics[width=0.24\columnwidth]{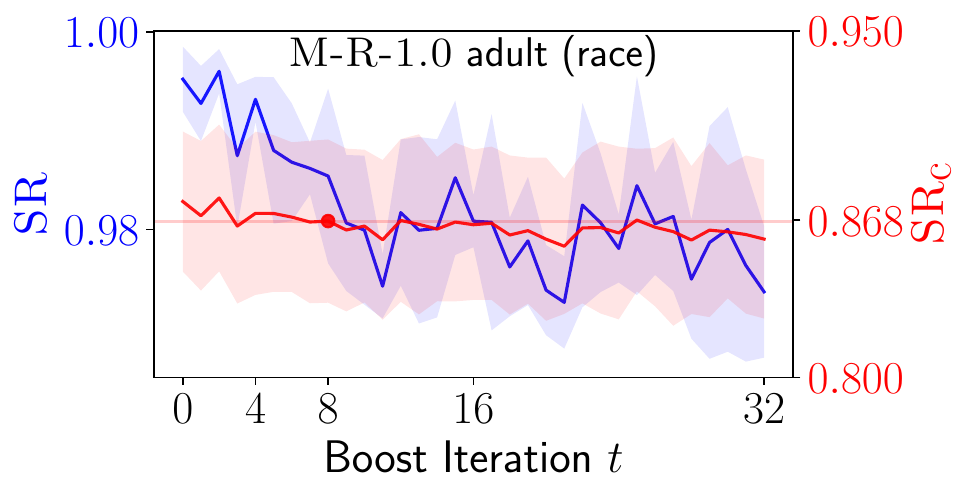}%
    \includegraphics[width=0.24\columnwidth]{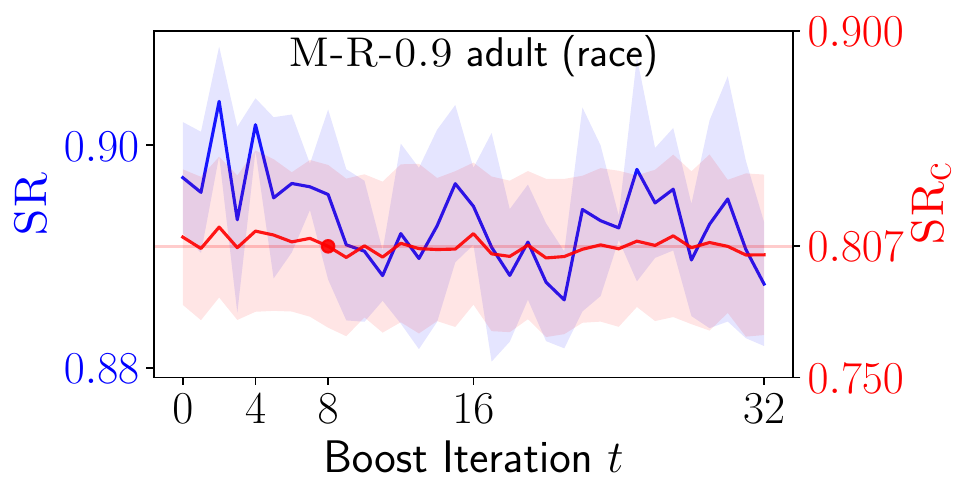}%
    \caption{All SR vs Classification SR over boosting iterations plots for \compas and \adult. Horizontal line depicts the \( T = 8 \) \( \SR_{\textrm{c}}\) value.}
    \label{fig:srclf_all_plots}
\end{sidewaysfigure}

\begin{sidewaysfigure}
    \centering
    \includegraphics[width=\columnwidth]{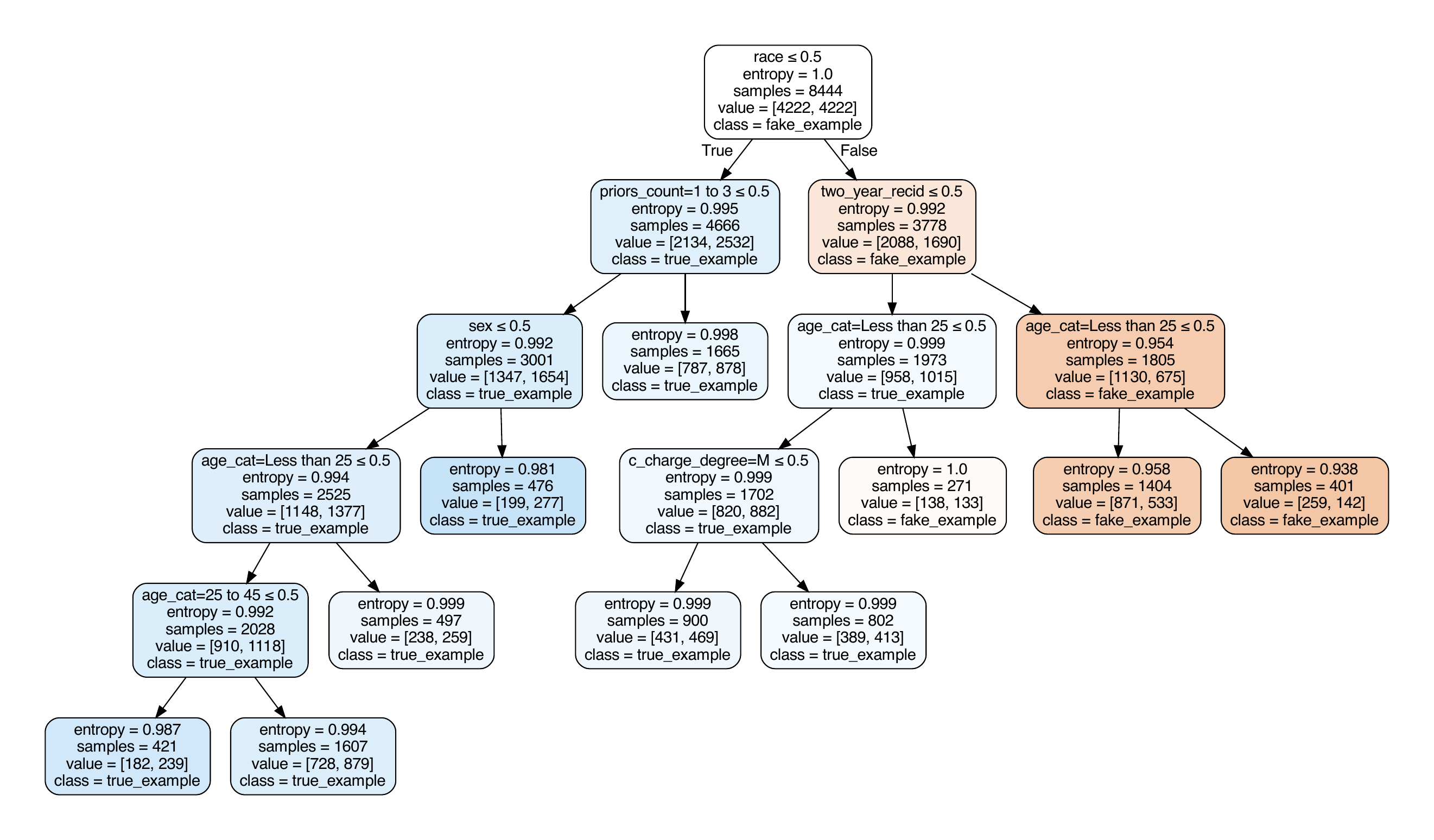}
    \caption{Full WL example of \cref{fig:wl_teaser}: \mollifiera with \compas (\( \mathcal{S} = \textrm{race}\)).}
    \label{fig:full_wl}
\end{sidewaysfigure}

\section{Additional Continuous Experiments}
\label{sec:additional_continuous_experiments}

We present additional experiments and figures for the \textsc{Minneapolis} dataset. We note that all computation was done without a GPU. In addition to \mollifiera explored in the main-text, we present results for \mollifierb, \mollifierc, and \mollifierd similar to the discrete dataset experiments. In \cref{tab:summary_continuous_table}, we present a summary of evaluations used in the discrete experiments. In \cref{fig:cts_all_plots} we present RR vs KL, SR vs KL, and SR vs \( \SR_{\textrm{c}} \) plots (one each row), as per \cref{fig:boost_iters_joint}.

\begin{table*}[t]
    \caption{\fbde~(\(T = 16\)) evaluated for the \textsc{Minneapolis} dataset. The table reports the mean and s.t.d.}
    \label{tab:summary_continuous_table}
    \begin{center}
    \begin{small}
    \begin{sc}
    \begin{tabularx}{\textwidth}{LLPDDDDD}
        \toprule
        && & Data & \mollifiera & \mollifierb & \mollifierc & \mollifierd \\
        \midrule

{\multirow{10}{*}{\rotatebox[origin=c]{90}{\textsc{Minneapolis} (\(\mathcal{S}\) = race)}}} &
{\multirow{3}{*}{\rotatebox[origin=c]{90}{\underline{data}}}} 
& $ \textrm{RR} $                 &  \(.183 \pm .001\) & \(.846 \pm .004\) & \(.918 \pm .000\) & \(.756 \pm .003\) & \(.860 \pm .004\) \\
&& $ \textrm{SR} $                &  \(.684 \pm .005\) & \(.995 \pm .003\) & \(.901 \pm .000\) & \(.991 \pm .004\) & \(.902 \pm .004\) \\
&& $ \textrm{KL} $                &  \( - \) & \(.218 \pm .005\) & \(.236 \pm .000\) & \(.186 \pm .004\) & \(.219 \pm .005\) \\
        \cmidrule(l{5pt}){2-8}

&{\multirow{3}{*}{\rotatebox[origin=c]{90}{\underline{pred}}}} 
& $ \textrm{SR}_{\textrm{c}}$     &  \(.685 \pm .012\) & \(.930 \pm .018\) & \(.754 \pm .000\) & \(.929 \pm .038\) & \(.835 \pm .038\) \\
&& $ \textrm{EO} $                &  \(.778 \pm .028\) & \(.974 \pm .018\) & \(.856 \pm .000\) & \(.943 \pm .031\) & \(.929 \pm .047\) \\
&& $ \textrm{Acc} $               &  \(.649 \pm .004\) & \(.594 \pm .016\) & \(.625 \pm .000\) & \(.600 \pm .015\) & \(.601 \pm .015\) \\
        \cmidrule(l{5pt}){2-8}
        
&{\multirow{3}{*}{\rotatebox[origin=c]{90}{\underline{clus}}}} 
& $ \cPR $                        &  \(.134 \pm .013\) & \(.137 \pm .011\) & \(.155 \pm .000\) & \(.138 \pm .012\) & \(.153 \pm .023\) \\
&& $ \cSR $                       &  \(.282 \pm .072\) & \(.235 \pm .033\) & \(.310 \pm .000\) & \(.443 \pm .057\) & \(.262 \pm .044\) \\
&& $ \textrm{dist} $              &  \(.013 \pm .000\) & \(.016 \pm .000\) & \(.016 \pm .000\) & \(.016 \pm .000\) & \(.016 \pm .000\) \\

        \cmidrule(l{5pt}){2-8}
& \multicolumn{2}{l}{Time(s)} & \( - \) & \(1540.731 \pm 2.615\) & \(1542.730 \pm .000\) & \(1369.129 \pm 55.183\) & \(1410.402 \pm 45.461\) \\
        \bottomrule
    \end{tabularx}
    \end{sc}
    \end{small}
    \end{center}
\end{table*}

\begin{sidewaysfigure}[t]
    \centering
    \includegraphics[width=0.24\columnwidth]{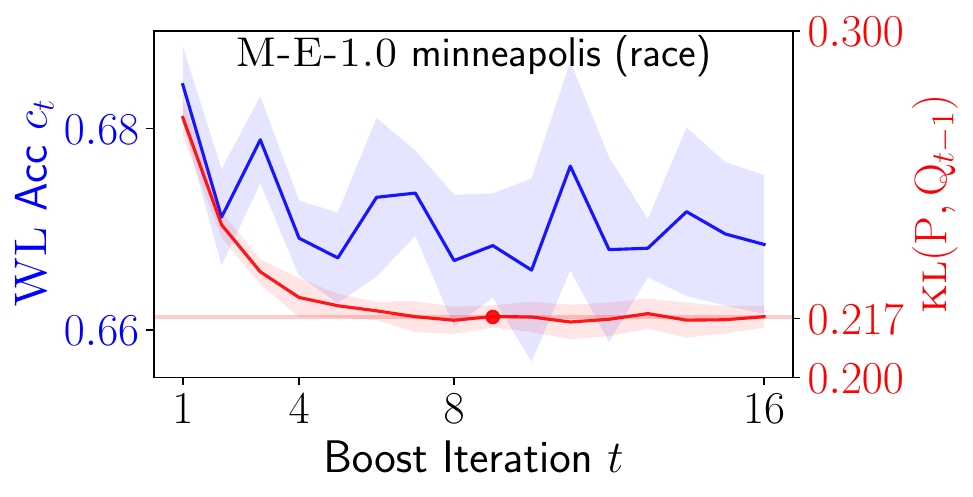}%
    \includegraphics[width=0.24\columnwidth]{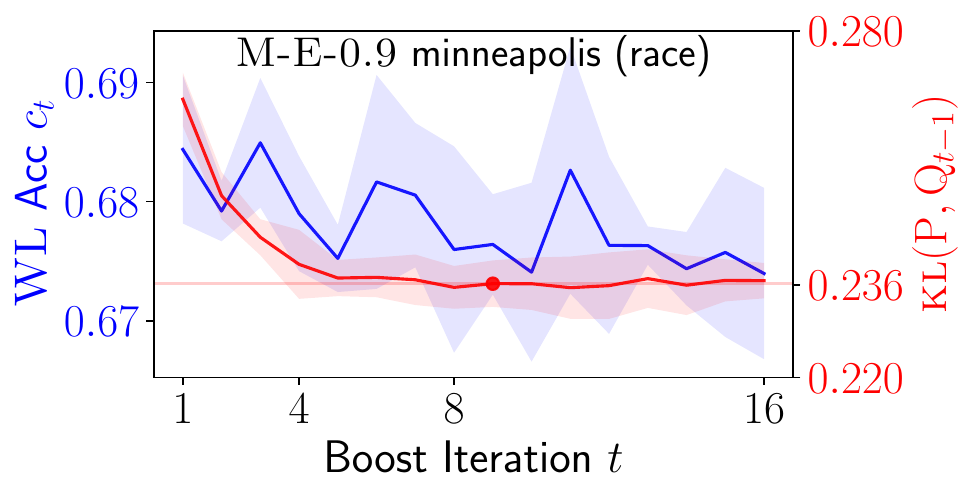}%
    \includegraphics[width=0.24\columnwidth]{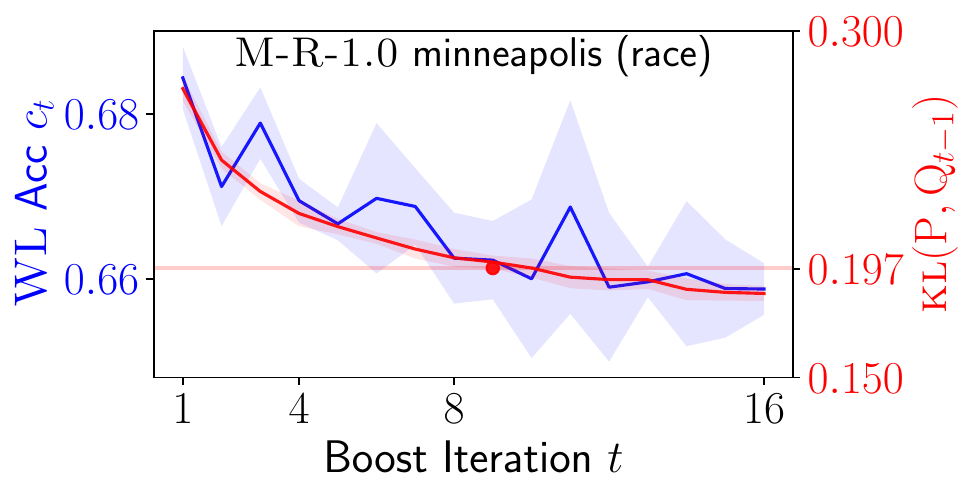}%
    \includegraphics[width=0.24\columnwidth]{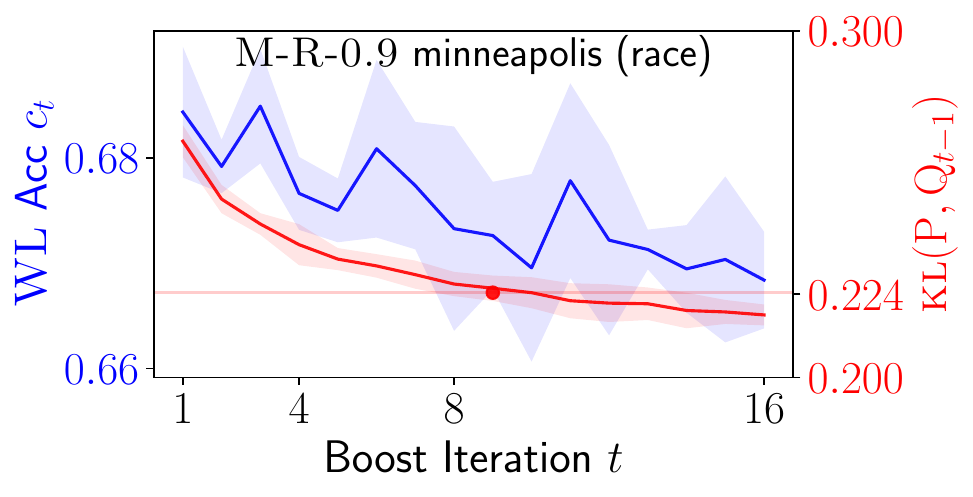}%
    \\
    \includegraphics[width=0.24\columnwidth]{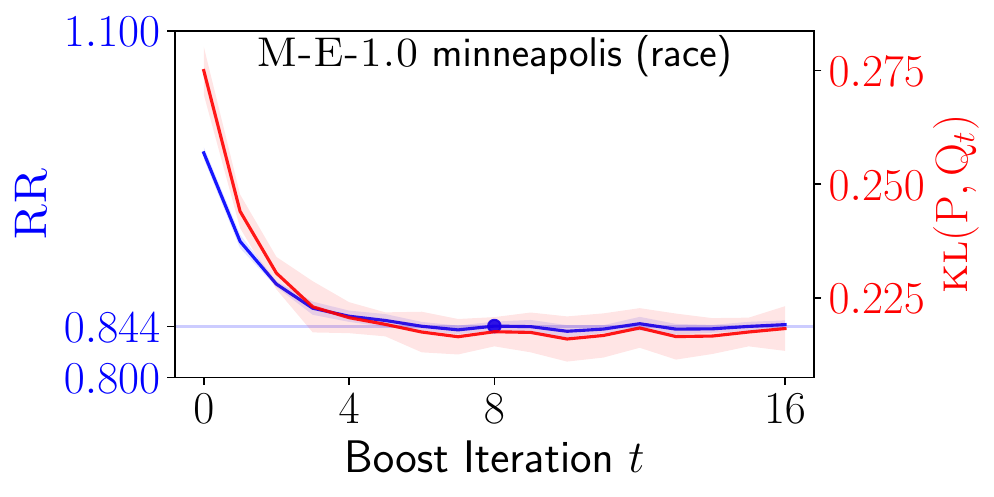}%
    \includegraphics[width=0.24\columnwidth]{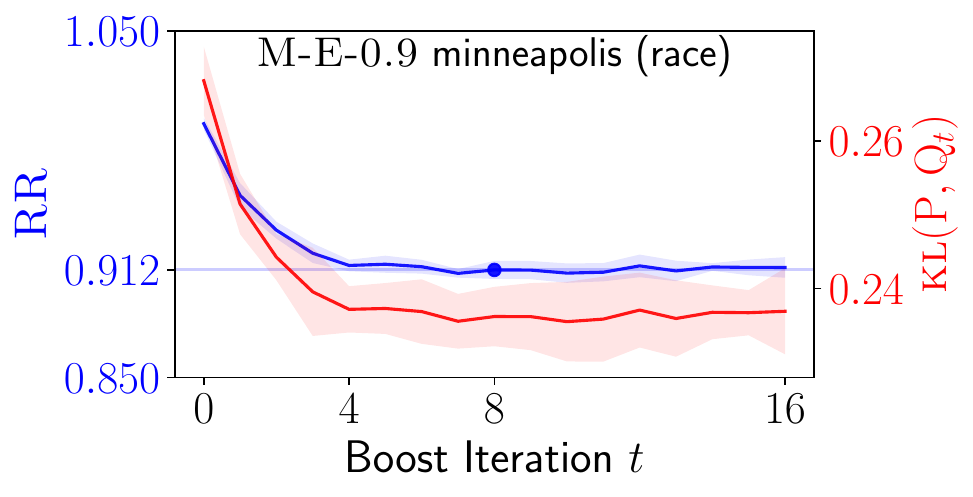}%
    \includegraphics[width=0.24\columnwidth]{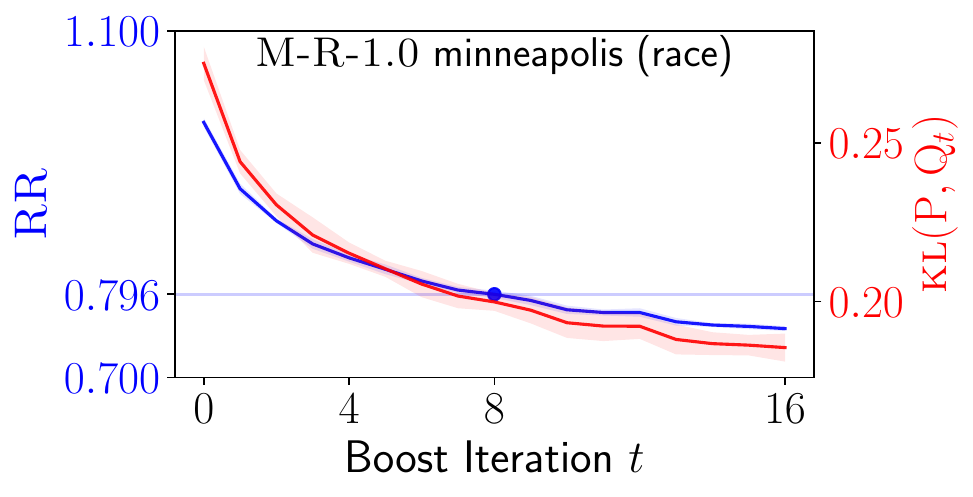}%
    \includegraphics[width=0.24\columnwidth]{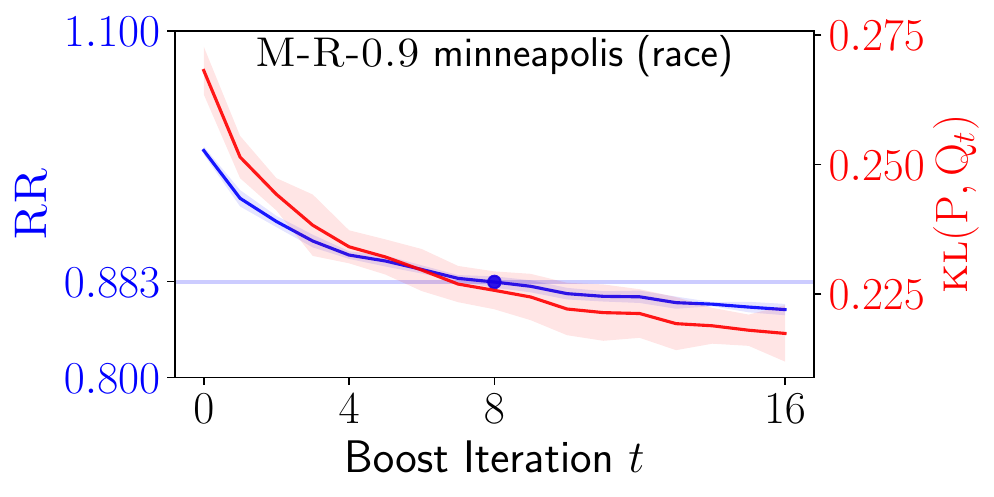}%
    \\
    \includegraphics[width=0.24\columnwidth]{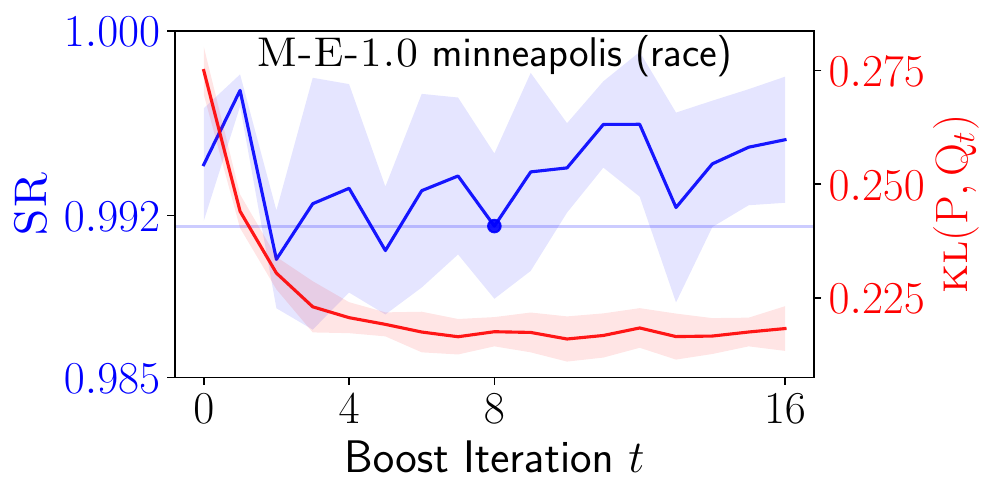}%
    \includegraphics[width=0.24\columnwidth]{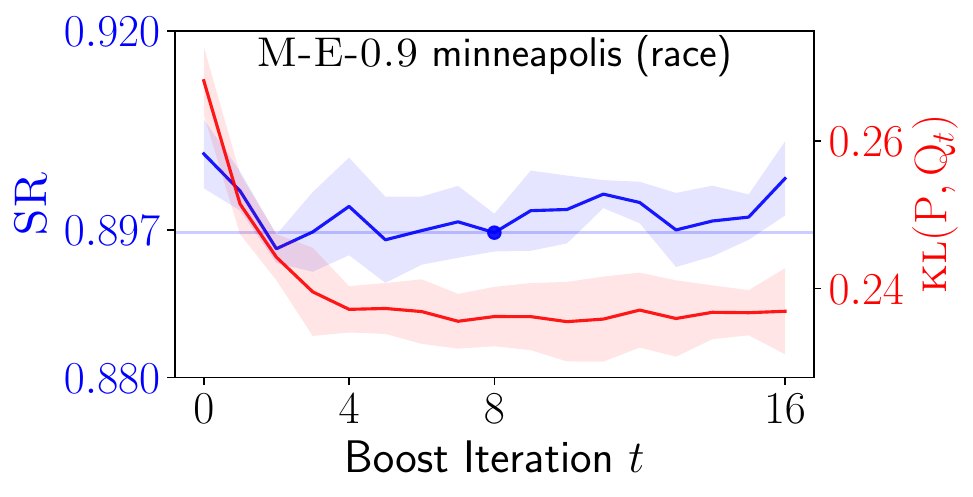}%
    \includegraphics[width=0.24\columnwidth]{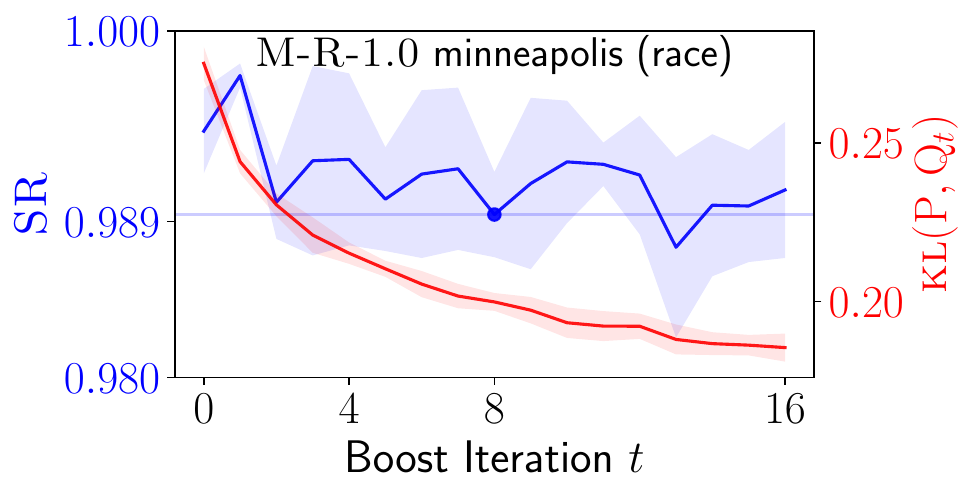}%
    \includegraphics[width=0.24\columnwidth]{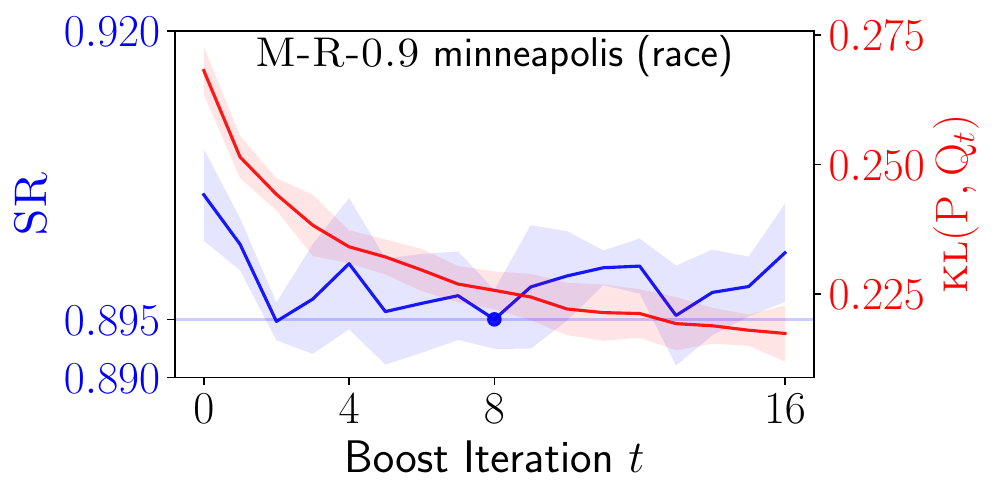}%
    \\
    \includegraphics[width=0.24\columnwidth]{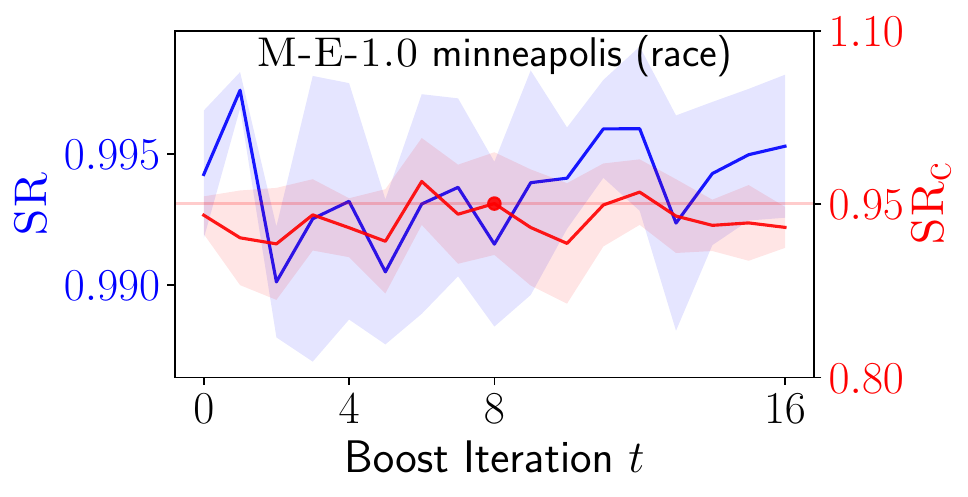}%
    \includegraphics[width=0.24\columnwidth]{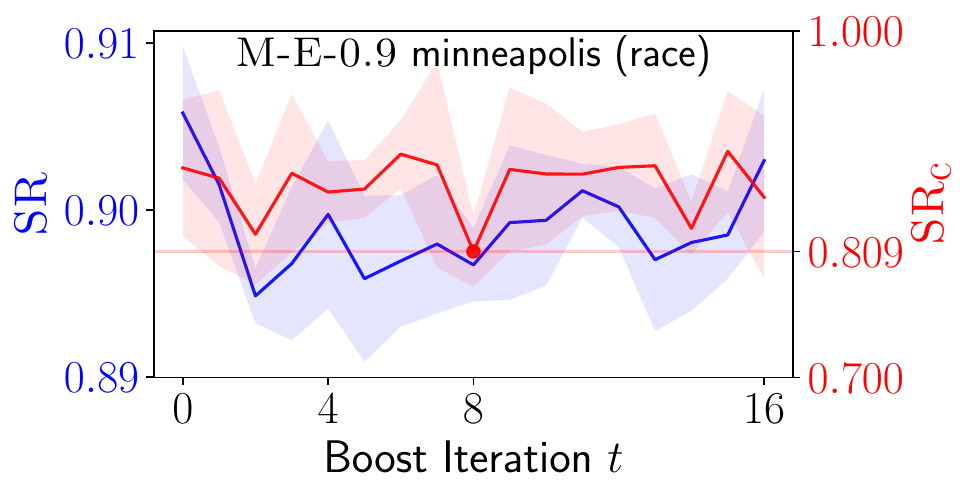}%
    \includegraphics[width=0.24\columnwidth]{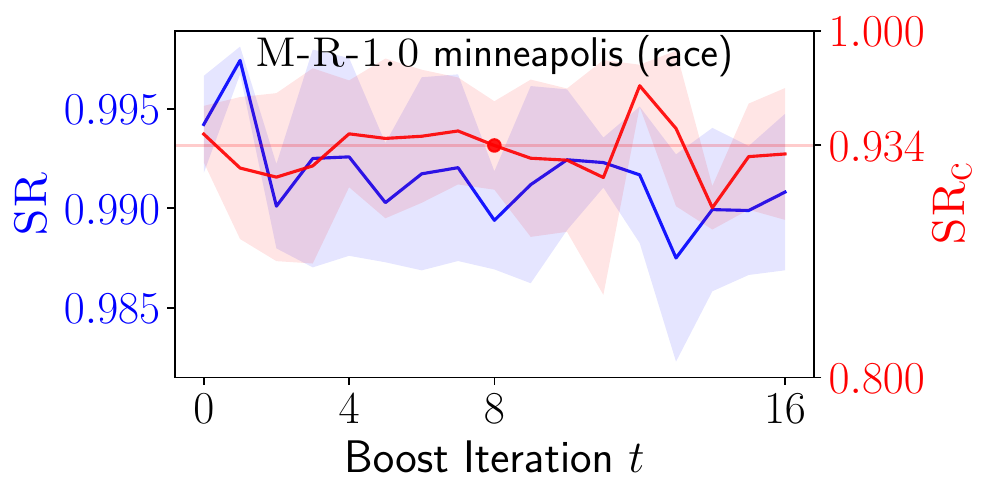}%
    \includegraphics[width=0.24\columnwidth]{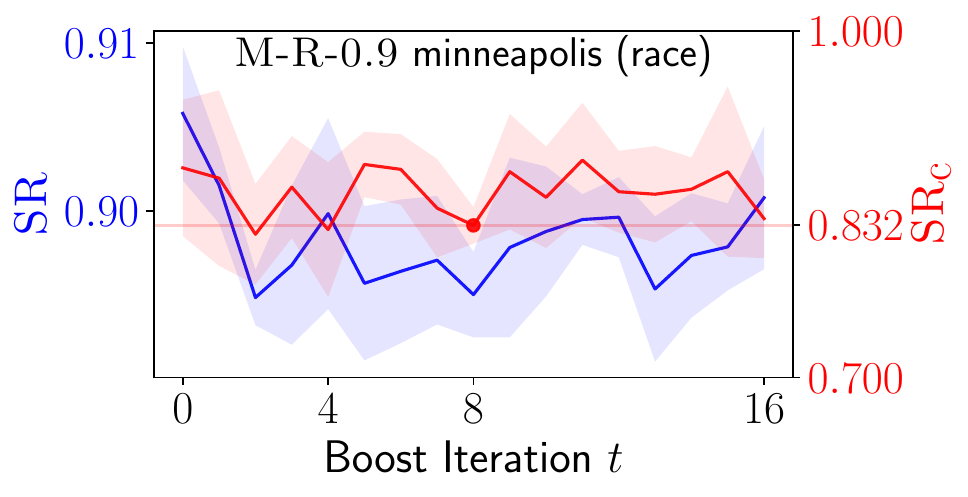}%
    \caption{All boosting iterations plots for \textsc{Minneapolis}. Horizontal line depicts the  \( \kl(\meas{P}, \meas{Q}_{8})\) value.}
    \label{fig:cts_all_plots}
\end{sidewaysfigure}

\section{Extra Discrete Experiments}
\label{sec:extra_discrete_experiments}

We present the exact same plots and tables for \dutch and \german as we did for \dutch and \german. All settings for \fbde are identical except for the initial distribution utilized. Here we incorporate a ``prior'' into the initial distribution, similar to \citet{ckvDP}. An extended discussion of the trick we use is presented in \cref{sec:mixing_prior}.
We plot weak learner accuracy vs KL, \( \SR \) vs KL, \( \RR \) vs KL, and \( \SR \) vs \( \SR_{\textrm{c}} \) over boosting iterations in \cref{fig:wl_all_plots_extra,fig:rr_all_plots_extra,fig:sr_all_plots_extra,fig:srclf_all_plots_extra}, respectively.

We would like to note one particularly poor showing in \cref{tab:no_mixing_experiments_table}: \mollifierd on \dutch with \( \mathcal{S} = \textrm{sex} \). In this case, \mollifierd has a critical failure in \( \SRclf \). However, one should note that in \cref{fig:srclf_all_plots_extra}, there actually include boosting iterations which \mollifierd significantly performs better in \( \SRclf \). Despite this, we should still note that the overall average \( \SRclf \) is still quite poor. This is similar to \mollifierb in this dataset setting. This indicates that having a weaker (in terms of fairness) initial distribution might not work for sparse datasets. Nevertheless, one wants to make sure that \fbde is performing correctly in these downstream fairness considerations by examining performance per boosting iterations.

\begin{table*}[t]
    \caption{\fbde~(\(T = 32\)) and baselines evaluated for \dutch and \german datasets. The table reports the mean and s.t.d.}
    \label{tab:full_experiments_table_extra}
    \scriptsize
    \begin{center}
    \begin{sc}
    \begin{tabularx}{\textwidth}{LLPDDDDDDDD} 
        \toprule
        && & Data & \mollifiera & \mollifierb & \mollifierc & \mollifierd & \celisa & \tabfairgan & \fairkmeans \\
        \midrule

{\multirow{9}{*}{\rotatebox[origin=c]{90}{\dutch (\(\mathcal{S}\) = sex)}}} &
{\multirow{3}{*}{\rotatebox[origin=c]{90}{\underline{data}}}} 
& $ \textrm{RR} $                 & 
\(.996 \pm .002\) & \(.987 \pm .001\) & \(.993 \pm .000\) & \(.980 \pm .001\) & \(.992 \pm .000\) & \(.997 \pm .002\) & \(.411 \pm .260\) & \(-\) \\
&& $ \textrm{SR} $                & 
\(.523 \pm .002\) & \(.988 \pm .001\) & \(.894 \pm .000\) & \(.971 \pm .001\) & \(.887 \pm .000\) & \(.961 \pm .004\) & \(.344 \pm .338\) & \(-\) \\
&& $ \textrm{KL} $                & 
\(-\) & \(.939 \pm .004\) & \(.916 \pm .004\) & \(.920 \pm .004\) & \(.906 \pm .004\) & \(1.637 \pm .017\) & \(4.367 \pm .668\) & \(-\) \\
        \cmidrule(l{5pt}){2-11}
&{\multirow{3}{*}{\rotatebox[origin=c]{90}{\underline{pred}}}} 
& $ \textrm{SR}_{\textrm{c}}$     & 
\(.464 \pm .022\) & \(.696 \pm .193\) & \(.518 \pm .255\) & \(.628 \pm .139\) & \(.275 \pm .068\) & \(.496 \pm .020\) & \(.628 \pm .223\) & \(-\) \\
&& $ \textrm{EO} $                & 
\(.890 \pm .006\) & \(.973 \pm .005\) & \(.986 \pm .008\) & \(.980 \pm .007\) & \(.987 \pm .007\) & \(.967 \pm .007\) & \(.585 \pm .203\) & \(-\) \\
&& $ \textrm{Acc} $               & 
\(.827 \pm .001\) & \(.763 \pm .003\) & \(.778 \pm .004\) & \(.770 \pm .003\) & \(.781 \pm .001\) & \(.814 \pm .003\) & \(.667 \pm .068\) & \(-\) \\
        \cmidrule(l{5pt}){2-11}
        
&{\multirow{3}{*}{\rotatebox[origin=c]{90}{\underline{clus}}}} 
 & $ \cPR $           & 
\(.080 \pm .029\) & \(.070 \pm .021\) & \(.136 \pm .053\) & \(.113 \pm .017\) & \(.069 \pm .023\) & \(.075 \pm .006\) & \(.094 \pm .056\) & \(.203 \pm .103\) \\

&& $ \cSR $           & 
\(.341 \pm .126\) & \(.273 \pm .066\) & \(.254 \pm .053\) & \(.449 \pm .052\) & \(.254 \pm .058\) & \(.204 \pm .038\) & \(.291 \pm .121\) & \(.428 \pm .195\) \\
&& $ \textrm{dist} $  & 
\(.051 \pm .000\) & \(.056 \pm .000\) & \(.056 \pm .000\) & \(.056 \pm .000\) & \(.056 \pm .000\) & \(.056 \pm .000\) & \(.056 \pm .000\) & \(.053 \pm .000\) \\

\midrule

{\multirow{9}{*}{\rotatebox[origin=c]{90}{\dutch (\(\mathcal{S}\) = age)}}} &
{\multirow{3}{*}{\rotatebox[origin=c]{90}{\underline{data}}}} 
& $ \textrm{RR} $                 &  
\(.621 \pm .001\) & \(.980 \pm .001\) & \(.990 \pm .000\) & \(.964 \pm .001\) & \(.980 \pm .000\) & \(.996 \pm .004\) & \(.227 \pm .026\) & \(-\) \\
&& $ \textrm{SR} $                &  
\(.610 \pm .003\) & \(.993 \pm .001\) & \(.895 \pm .000\) & \(.984 \pm .001\) & \(.893 \pm .000\) & \(.963 \pm .005\) & \(.430 \pm .011\) & \(-\) \\
&& $ \textrm{KL} $                &  
\(-\) & \(.935 \pm .004\) & \(.929 \pm .005\) & \(.916 \pm .004\) & \(.919 \pm .005\) & \(1.656 \pm .015\) & \(4.132 \pm .165\) & \(-\) \\
        \cmidrule(l{5pt}){2-11}

&{\multirow{3}{*}{\rotatebox[origin=c]{90}{\underline{pred}}}} 
& $ \textrm{SR}_{\textrm{c}}$     &  
\(.695 \pm .030\) & \(.816 \pm .094\) & \(.676 \pm .257\) & \(.747 \pm .084\) & \(.709 \pm .180\) & \(.722 \pm .023\) & \(.271 \pm .046\) & \(-\) \\
&& $ \textrm{EO} $                &  
\(.892 \pm .009\) & \(.993 \pm .005\) & \(.960 \pm .007\) & \(.987 \pm .007\) & \(.961 \pm .009\) & \(.991 \pm .003\) & \(.742 \pm .018\) & \(-\) \\
&& $ \textrm{Acc} $               &  
\(.827 \pm .001\) & \(.781 \pm .004\) & \(.788 \pm .004\) & \(.785 \pm .005\) & \(.790 \pm .005\) & \(.820 \pm .003\) & \(.661 \pm .004\) & \(-\) \\
        \cmidrule(l{5pt}){2-11}
        
&{\multirow{3}{*}{\rotatebox[origin=c]{90}{\underline{clus}}}} 
& $ \cPR $                        &  
\(.699 \pm .080\) & \(.454 \pm .212\) & \(.660 \pm .096\) & \(.675 \pm .070\) & \(.718 \pm .056\) & \(.440 \pm .175\) & \(.436 \pm .190\) & \(.373 \pm .051\) \\
&& $ \cSR $                       &  
\(.525 \pm .079\) & \(.421 \pm .138\) & \(.489 \pm .103\) & \(.492 \pm .098\) & \(.542 \pm .036\) & \(.340 \pm .058\) & \(.376 \pm .068\) & \(.443 \pm .126\) \\
&& $ \textrm{dist} $              &  
\(.052 \pm .000\) & \(.056 \pm .000\) & \(.056 \pm .000\) & \(.056 \pm .000\) & \(.056 \pm .000\) & \(.056 \pm .000\) & \(.056 \pm .000\) & \(.053 \pm .000\) \\

\midrule

{\multirow{9}{*}{\rotatebox[origin=c]{90}{\german (\(\mathcal{S}\) = sex)}}} &
{\multirow{3}{*}{\rotatebox[origin=c]{90}{\underline{data}}}} 
& $ \textrm{RR} $                 & 
\(.449 \pm .014\) & \(.934 \pm .033\) & \(.960 \pm .027\) & \(.903 \pm .038\) & \(.946 \pm .037\) & \(.980 \pm .013\) & \(.575 \pm .154\) & \( - \) \\
&& $ \textrm{SR} $                & 
\(.897 \pm .018\) & \(.982 \pm .018\) & \(.917 \pm .044\) & \(.983 \pm .017\) & \(.916 \pm .044\) & \(.985 \pm .011\) & \(.985 \pm .010\) & \( - \) \\
&& $ \textrm{KL} $                & 
\(-\) & \(.606 \pm .027\) & \(.612 \pm .026\) & \(.588 \pm .026\) & \(.602 \pm .025\) & \(.531 \pm .035\) & \(9.02 \pm 1.44\) & \( - \) \\
        \cmidrule(l{5pt}){2-11}

&{\multirow{3}{*}{\rotatebox[origin=c]{90}{\underline{pred}}}} 
& $ \textrm{SR}_{\textrm{c}}$     & 
\(.903 \pm .055\) & \(.971 \pm .014\) & \(.887 \pm .049\) & \(.970 \pm .019\) & \(.886 \pm .050\) & \(.951 \pm .030\) & \(.837 \pm .120\) & \( - \) \\
&& $ \textrm{EO} $                & 
\(.928 \pm .055\) & \(.980 \pm .021\) & \(.909 \pm .046\) & \(.969 \pm .023\) & \(.907 \pm .051\) & \(.954 \pm .031\) & \(.819 \pm .120\) & \( - \) \\
&& $ \textrm{Acc} $               & 
\(.690 \pm .022\) & \(.712 \pm .011\) & \(.705 \pm .011\) & \(.714 \pm .019\) & \(.706 \pm .007\) & \(.671 \pm .044\) & \(.509 \pm .170\) & \( - \) \\
        \cmidrule(l{5pt}){2-11}
        
&{\multirow{3}{*}{\rotatebox[origin=c]{90}{\underline{clus}}}} 
& $ \cPR $                        & 
\(.234 \pm .103\) & \(.229 \pm .144\) & \(.244 \pm .104\) & \(.197 \pm .082\) & \(.210 \pm .055\) & \(.180 \pm .075\) & \( - \) & \(.168 \pm .070\) \\
&& $ \cSR $                       & 
\(.268 \pm .100\) & \(.359 \pm .310\) & \(.327 \pm .094\) & \(.422 \pm .265\) & \(.393 \pm .298\) & \(.441 \pm .144\) & \(.396 \pm .315\) & \(.376 \pm .299\) \\
&& $ \textrm{dist} $              & 
\(.158 \pm .001\) & \(.197 \pm .001\) & \(.197 \pm .001\) & \(.197 \pm .000\) & \(.197 \pm .001\) & \(.195 \pm .001\) & \(.196 \pm .011\) & \(.157 \pm .001\) \\
        \midrule
        
{\multirow{9}{*}{\rotatebox[origin=c]{90}{\german (\(\mathcal{S}\) = age)}}} &
{\multirow{3}{*}{\rotatebox[origin=c]{90}{\underline{data}}}} 
& $ \textrm{RR} $                 & 
\(.235 \pm .008\) & \(.911 \pm .037\) & \(.951 \pm .036\) & \(.847 \pm .033\) & \(.915 \pm .033\) & \(.971 \pm .011\) & \(.419 \pm .083\) & \( - \) \\
&& $ \textrm{SR} $                & 
\(.794 \pm .029\) & \(.982 \pm .020\) & \(.911 \pm .019\) & \(.982 \pm .019\) & \(.908 \pm .022\) & \(.980 \pm .013\) & \(.968 \pm .014\) & \( - \) \\
&& $ \textrm{KL} $                & 
\(-\) & \(.717 \pm .035\) & \(.725 \pm .034\) & \(.687 \pm .034\) & \(.708 \pm .034\) & \(.691 \pm .043\) & \(9.812 \pm 1.357\) & \( - \) \\
        \cmidrule(l{5pt}){2-11}

&{\multirow{3}{*}{\rotatebox[origin=c]{90}{\underline{pred}}}} 
& $ \textrm{SR}_{\textrm{c}}$     & 
\(.802 \pm .067\) & \(.947 \pm .027\) & \(.881 \pm .045\) & \(.950 \pm .044\) & \(.890 \pm .048\) & \(.969 \pm .016\) & \(.780 \pm .212\) & \( - \) \\
&& $ \textrm{EO} $                & 
\(.823 \pm .078\) & \(.940 \pm .028\) & \(.894 \pm .059\) & \(.943 \pm .027\) & \(.914 \pm .058\) & \(.951 \pm .034\) & \(.744 \pm .193\) & \( - \) \\
&& $ \textrm{Acc} $               & 
\(.690 \pm .022\) & \(.700 \pm .016\) & \(.693 \pm .020\) & \(.697 \pm .013\) & \(.707 \pm .011\) & \(.671 \pm .023\) & \(.486 \pm .128\) & \( - \) \\
        \cmidrule(l{5pt}){2-11}
        
&{\multirow{3}{*}{\rotatebox[origin=c]{90}{\underline{clus}}}} 
& $ \cPR $            & 
\(.162 \pm .039\) & \(.183 \pm .042\) & \(.210 \pm .080\) & \(.183 \pm .097\) & \(.174 \pm .052\) & \(.200 \pm .060\) & \(.228 \pm .025\) & \(.208 \pm .102\) \\
&& $ \cSR $           & 
\(.387 \pm .333\) & \(.494 \pm .231\) & \(.784 \pm .250\) & \(.729 \pm .270\) & \(.665 \pm .441\) & \(.601 \pm .226\) & \(.499 \pm .298\) & \(.406 \pm .236\) \\
&& $ \textrm{dist} $  & 
\(.161 \pm .001\) & \(.198 \pm .001\) & \(.198 \pm .000\) & \(.198 \pm .001\) & \(.198 \pm .001\) & \(.196 \pm .001\) & \(.198 \pm .011\) & \(.158 \pm .001\) \\
        \bottomrule
    \end{tabularx}
    \end{sc}
    \end{center}
\end{table*}

\begin{sidewaysfigure}[t]
    \centering
    \includegraphics[width=0.24\columnwidth]{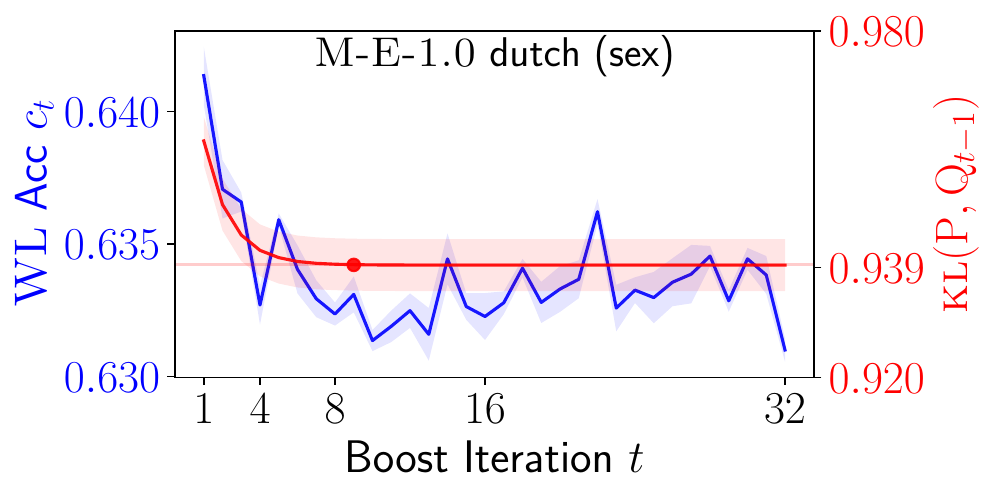}%
    \includegraphics[width=0.24\columnwidth]{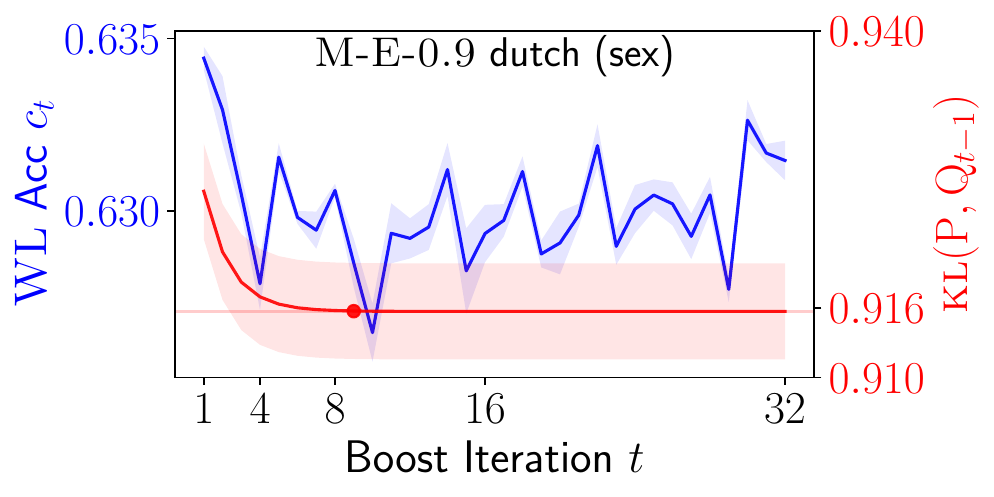}%
    \includegraphics[width=0.24\columnwidth]{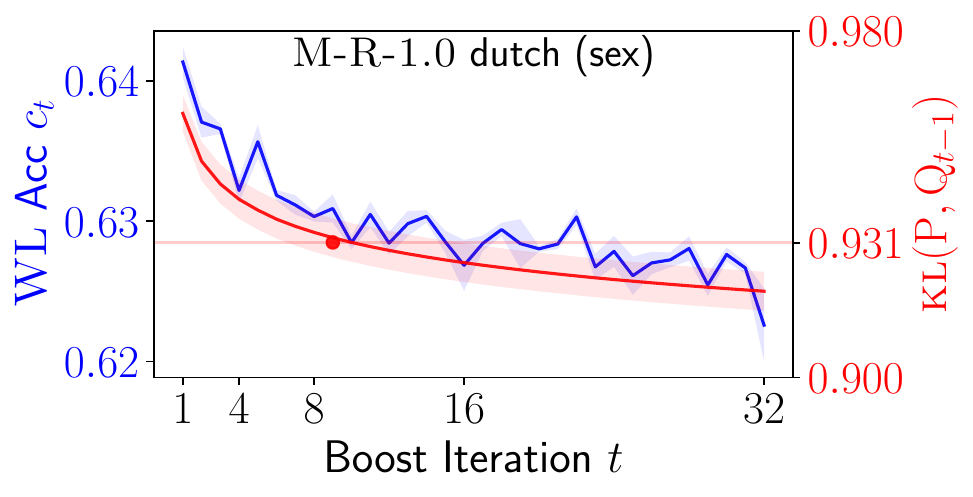}%
    \includegraphics[width=0.24\columnwidth]{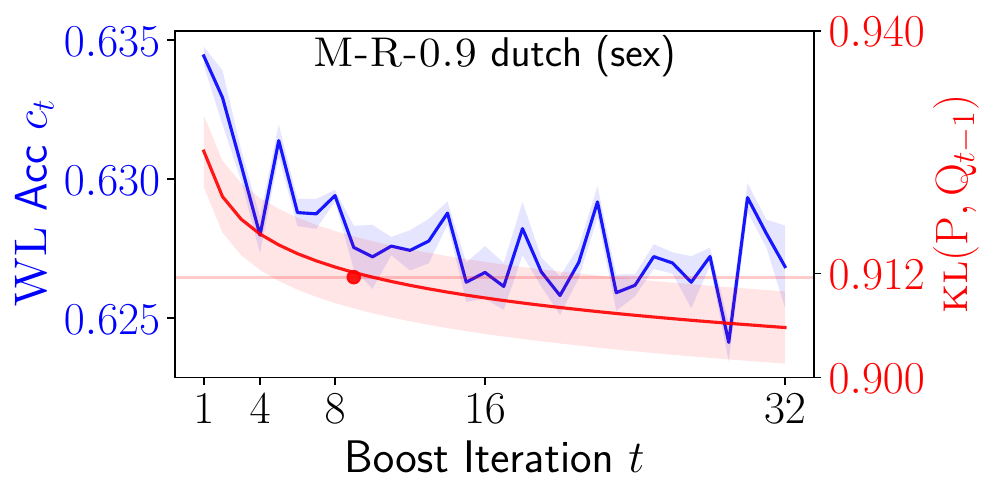}%
    \\
    \includegraphics[width=0.24\columnwidth]{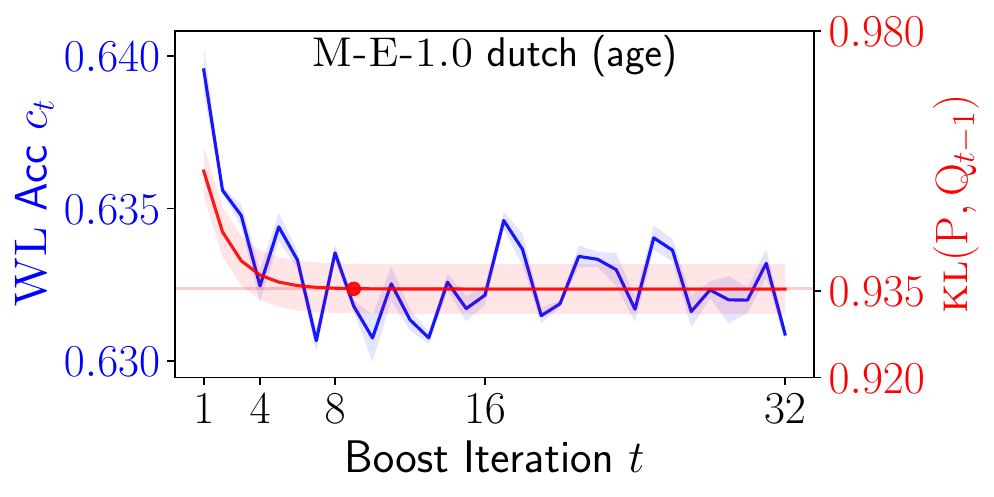}%
    \includegraphics[width=0.24\columnwidth]{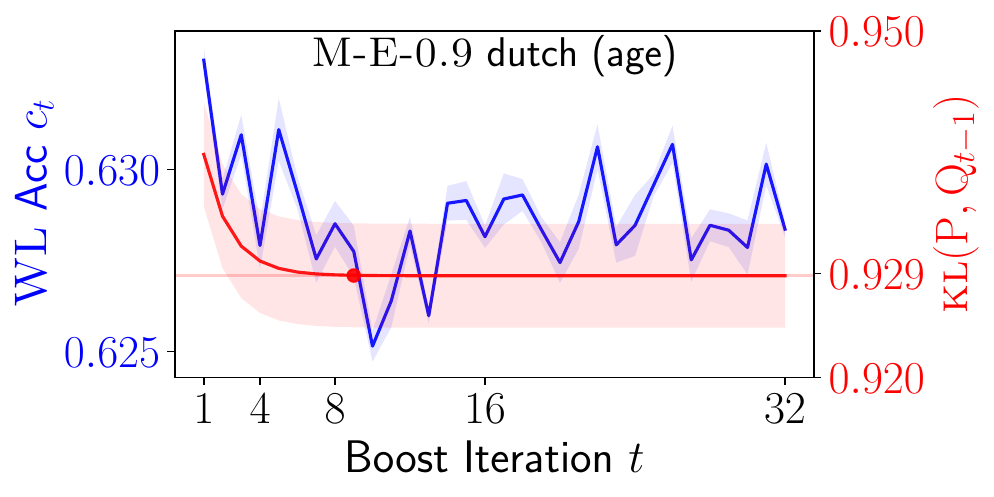}%
    \includegraphics[width=0.24\columnwidth]{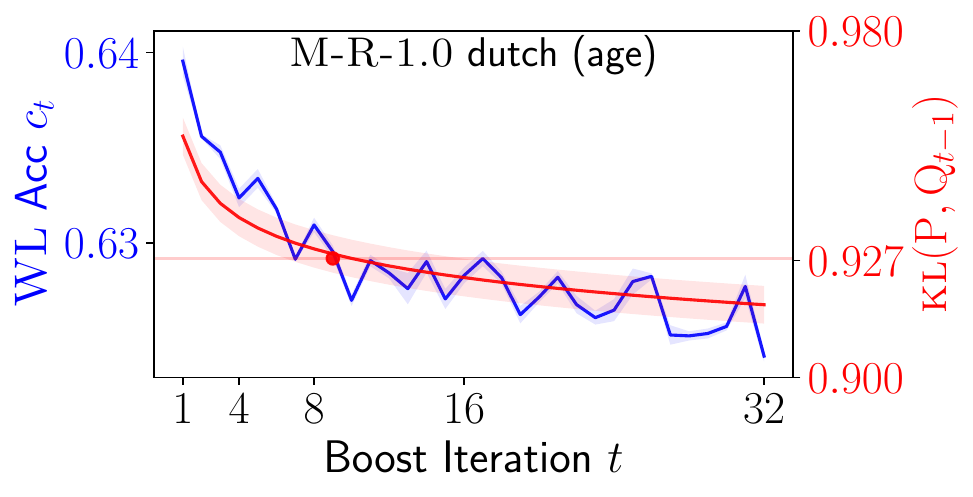}%
    \includegraphics[width=0.24\columnwidth]{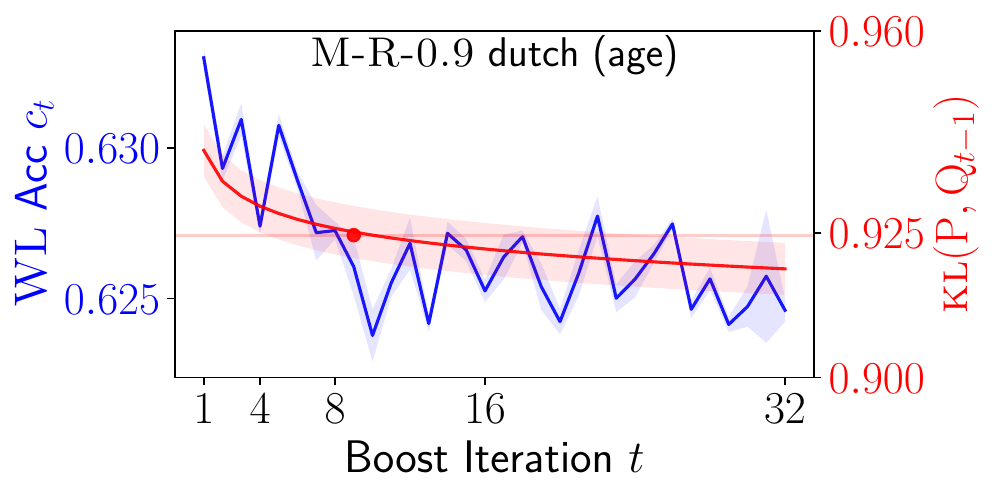}%
    \\
    \includegraphics[width=0.24\columnwidth]{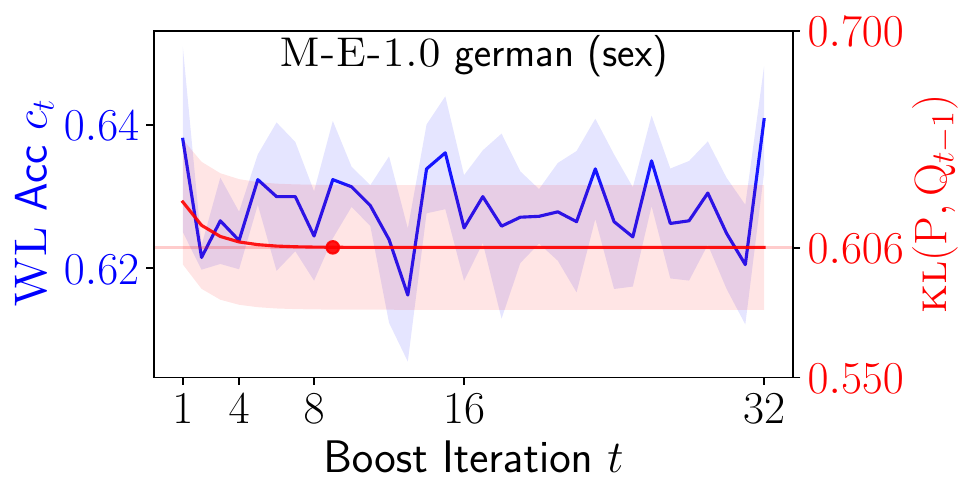}%
    \includegraphics[width=0.24\columnwidth]{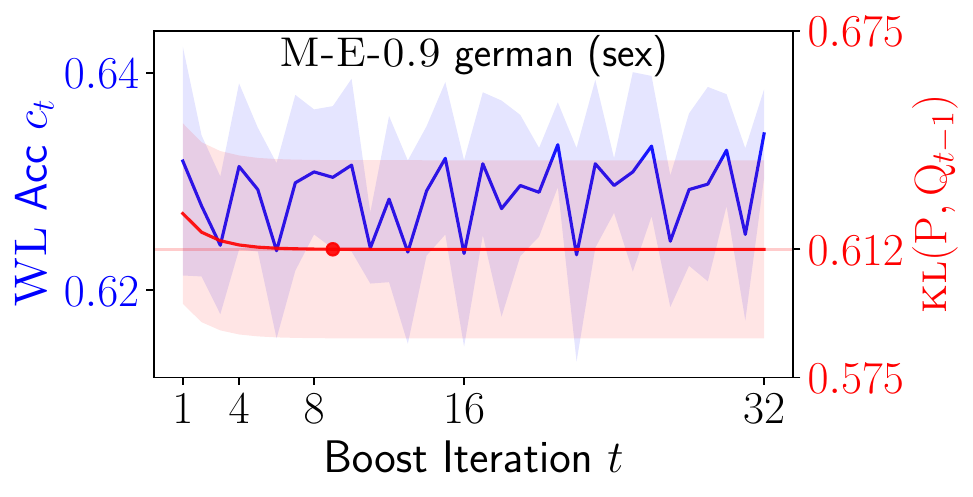}%
    \includegraphics[width=0.24\columnwidth]{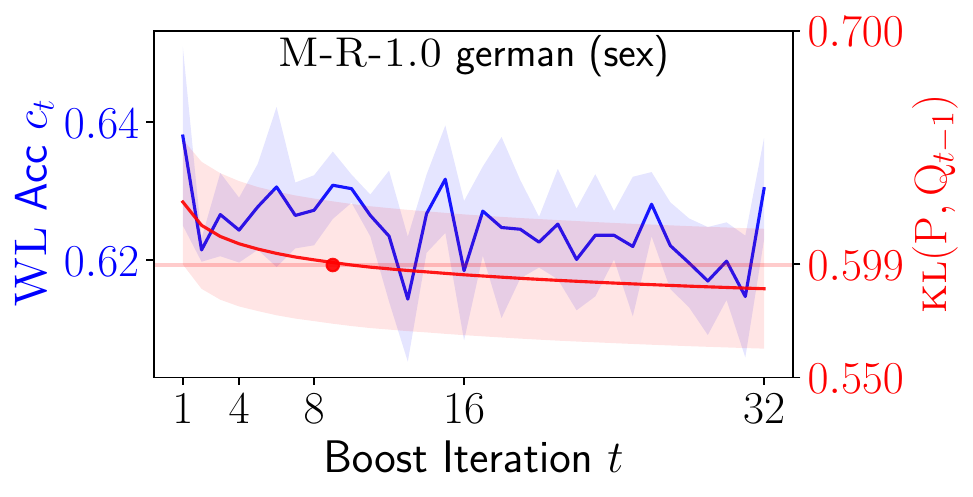}%
    \includegraphics[width=0.24\columnwidth]{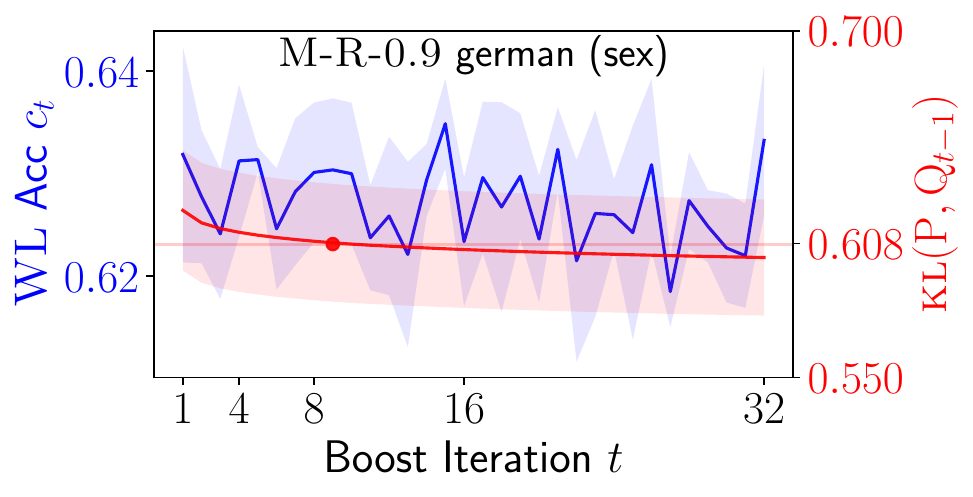}%
    \\
    \includegraphics[width=0.24\columnwidth]{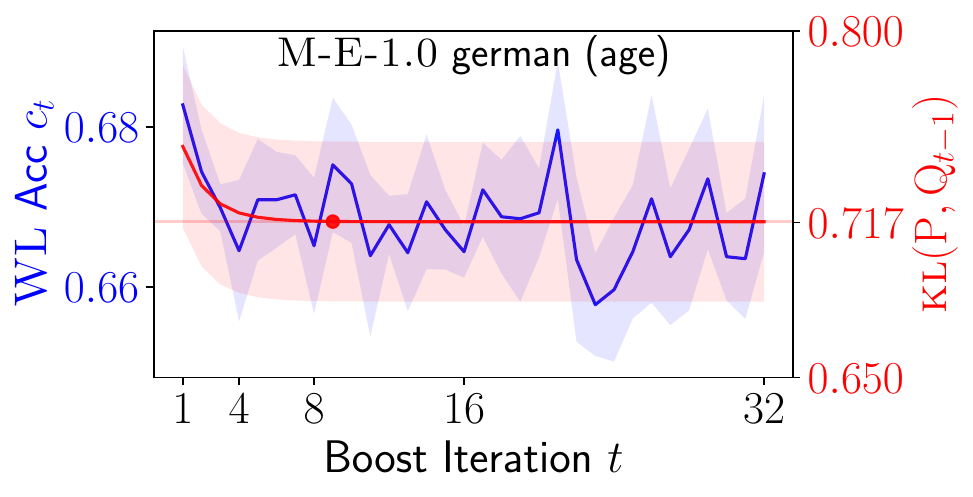}%
    \includegraphics[width=0.24\columnwidth]{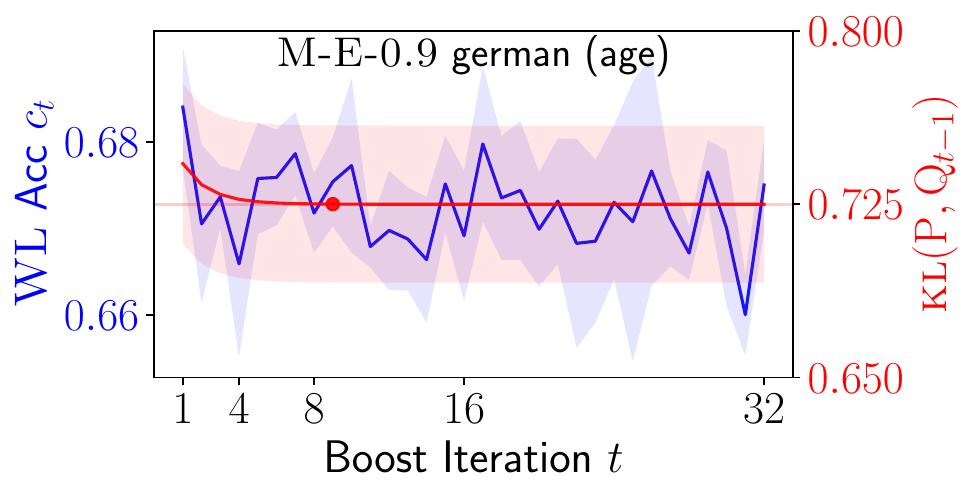}%
    \includegraphics[width=0.24\columnwidth]{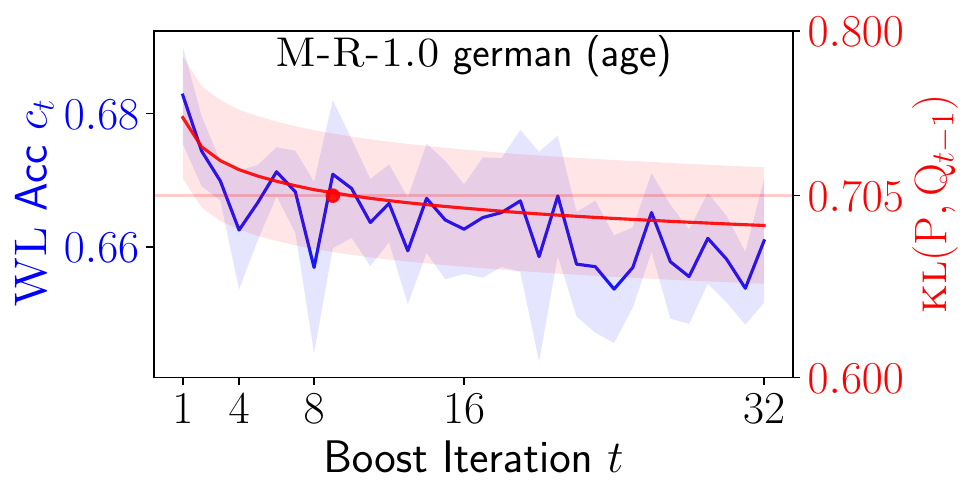}%
    \includegraphics[width=0.24\columnwidth]{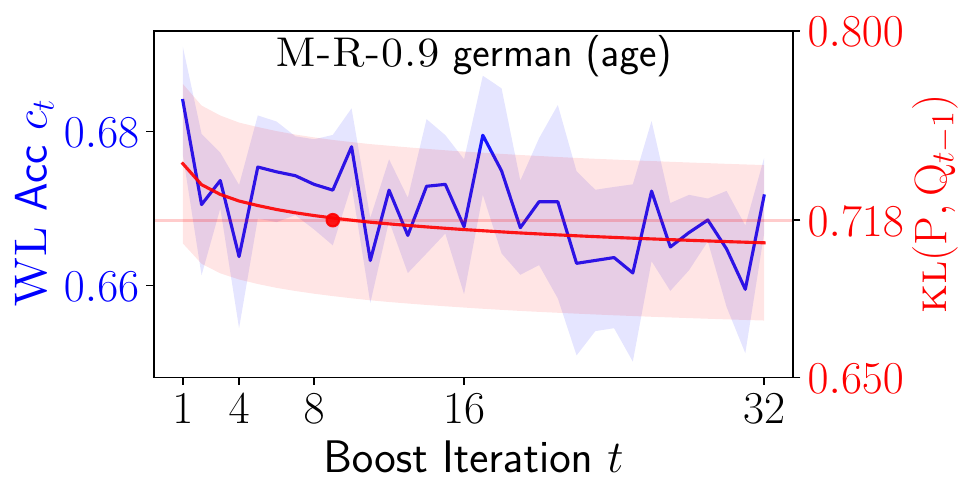}%
    \caption{All WL accuracy vs KL over boosting iterations plots for \dutch and \german. horizontal line depicts the  \( \kl(\meas{p}, \meas{q}_{8})\) value.}
    \label{fig:wl_all_plots_extra}
\end{sidewaysfigure}

\begin{sidewaysfigure}[t]
    \centering
    \includegraphics[width=0.24\columnwidth]{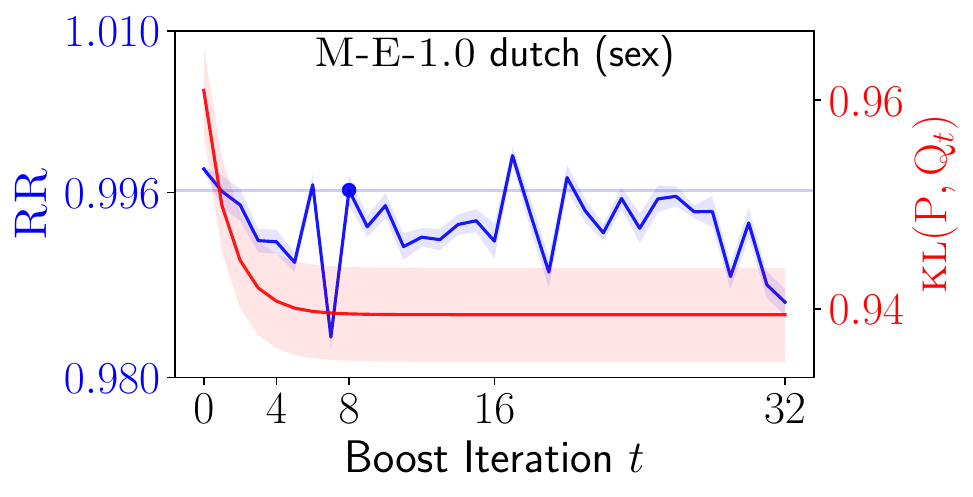}%
    \includegraphics[width=0.24\columnwidth]{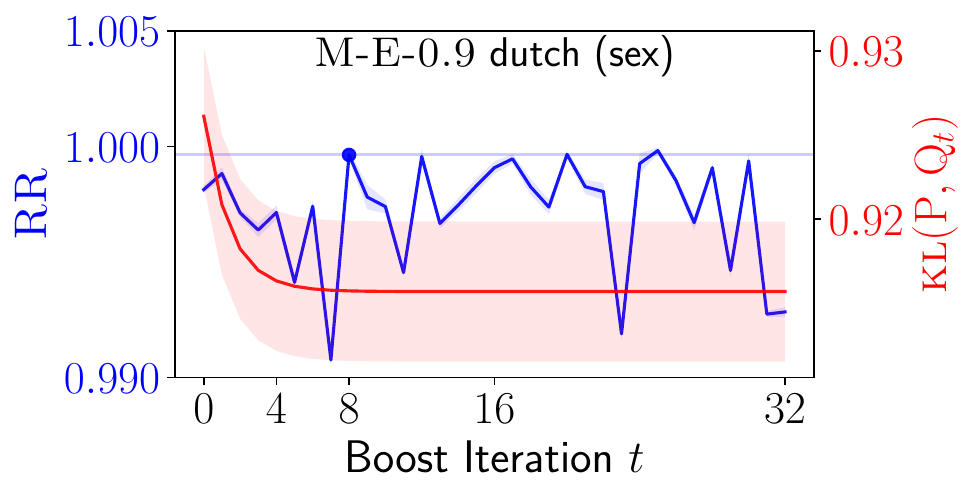}%
    \includegraphics[width=0.24\columnwidth]{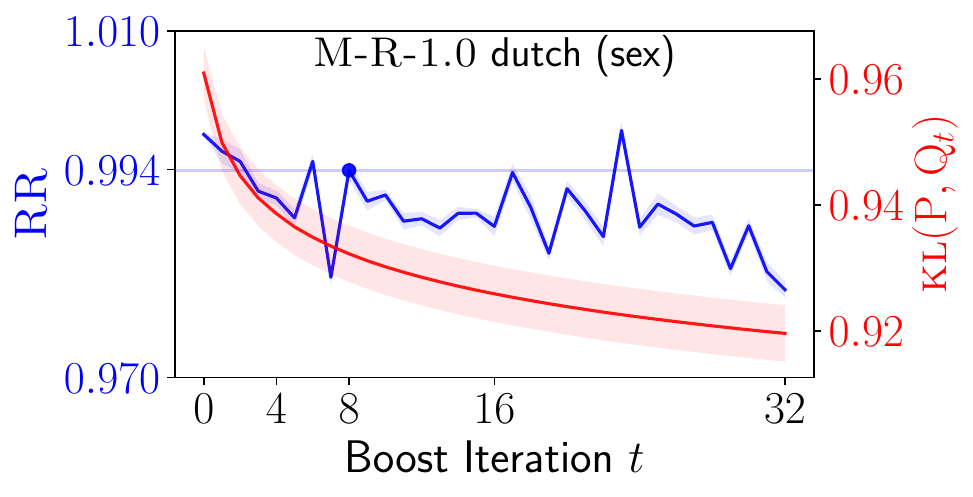}%
    \includegraphics[width=0.24\columnwidth]{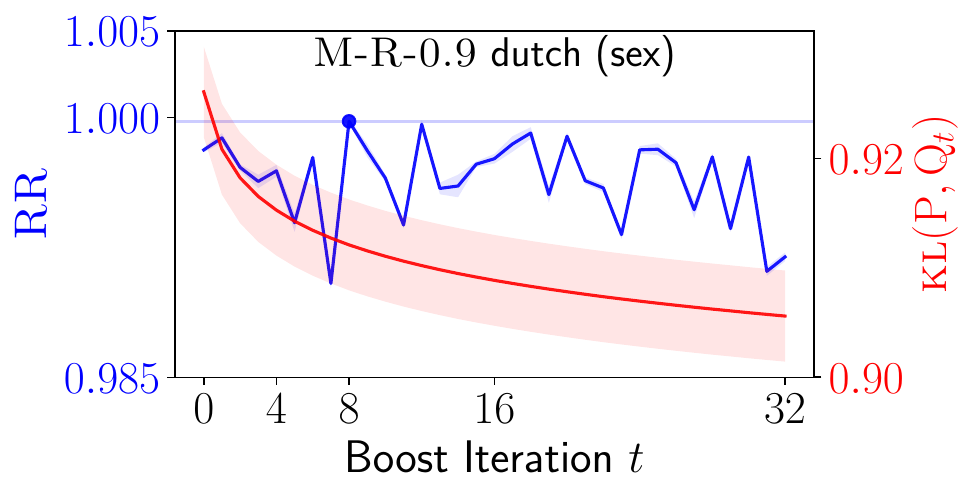}%
    \\
    \includegraphics[width=0.24\columnwidth]{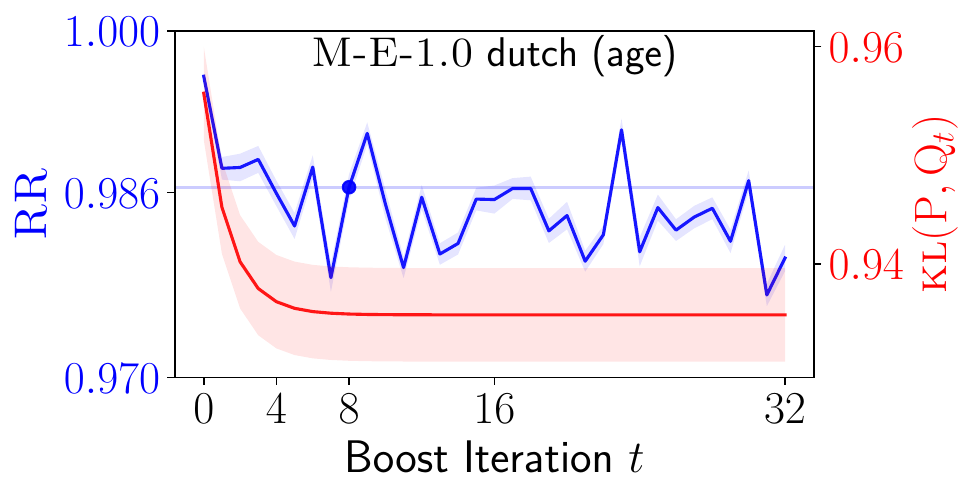}%
    \includegraphics[width=0.24\columnwidth]{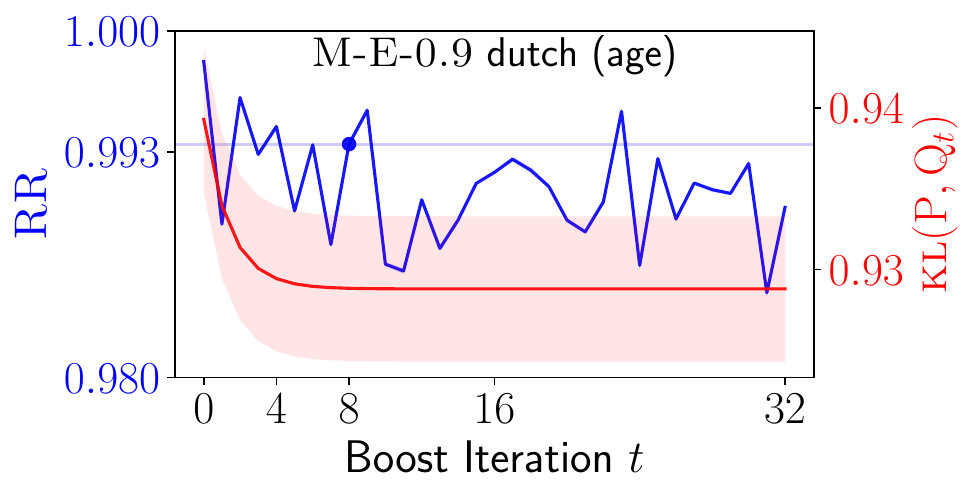}%
    \includegraphics[width=0.24\columnwidth]{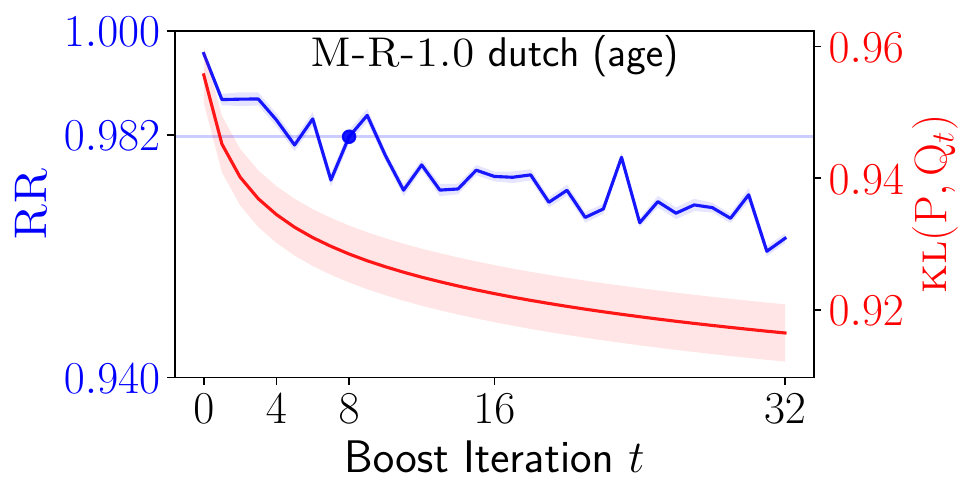}%
    \includegraphics[width=0.24\columnwidth]{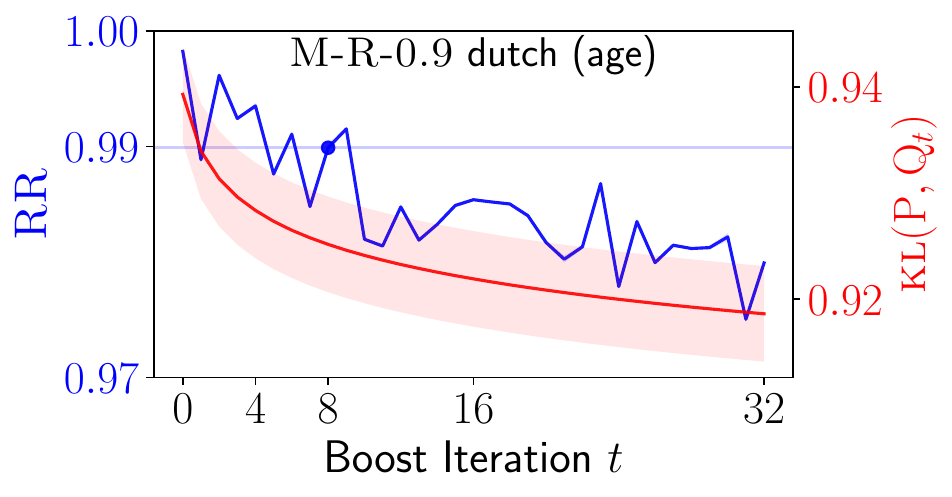}%
    \\
    \includegraphics[width=0.24\columnwidth]{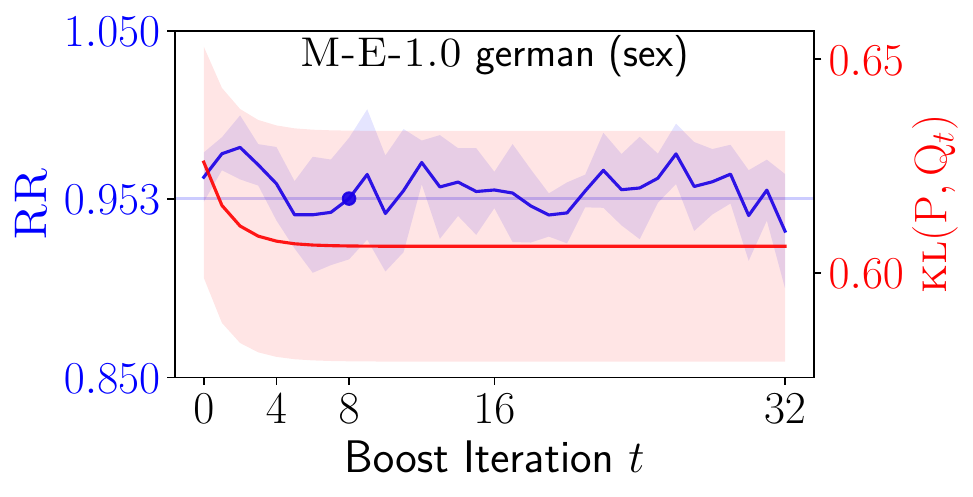}%
    \includegraphics[width=0.24\columnwidth]{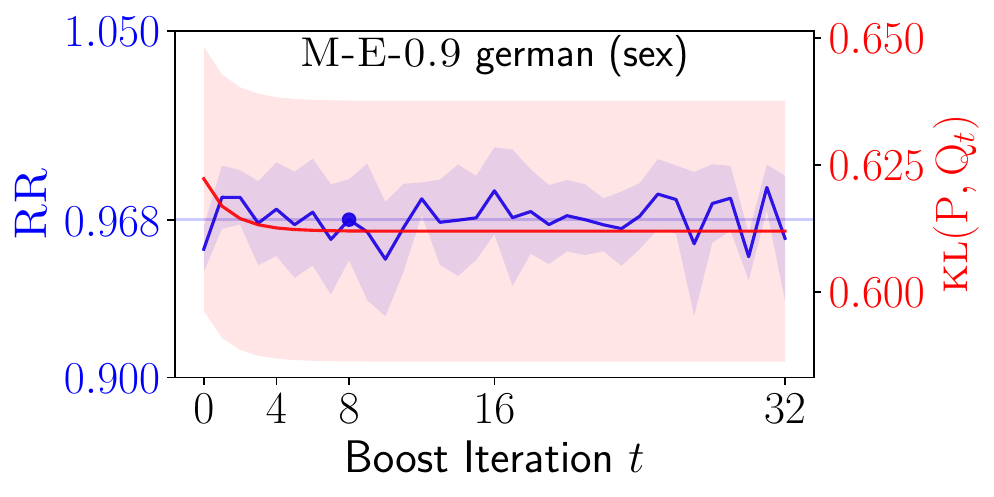}%
    \includegraphics[width=0.24\columnwidth]{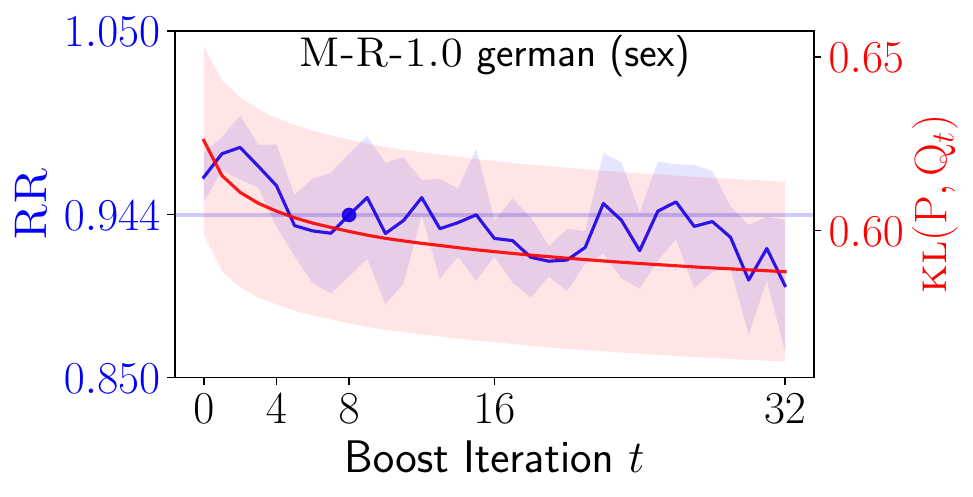}%
    \includegraphics[width=0.24\columnwidth]{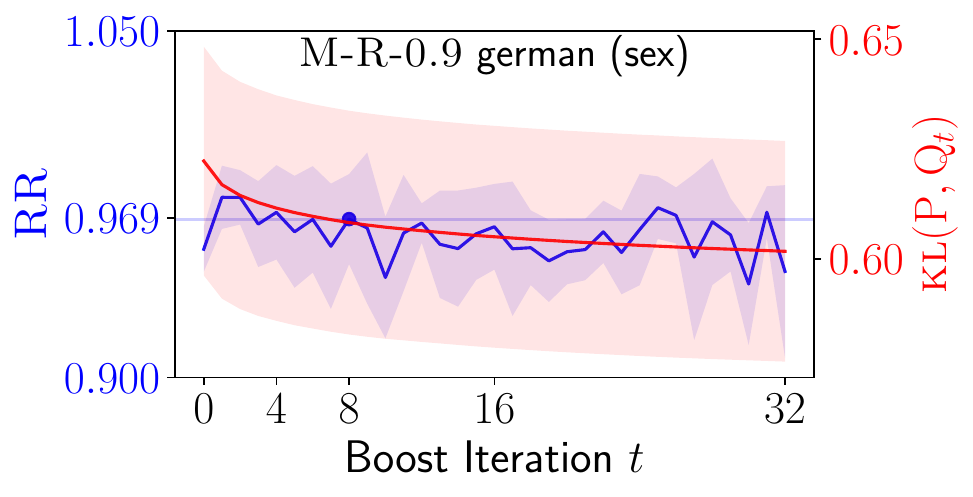}%
    \\
    \includegraphics[width=0.24\columnwidth]{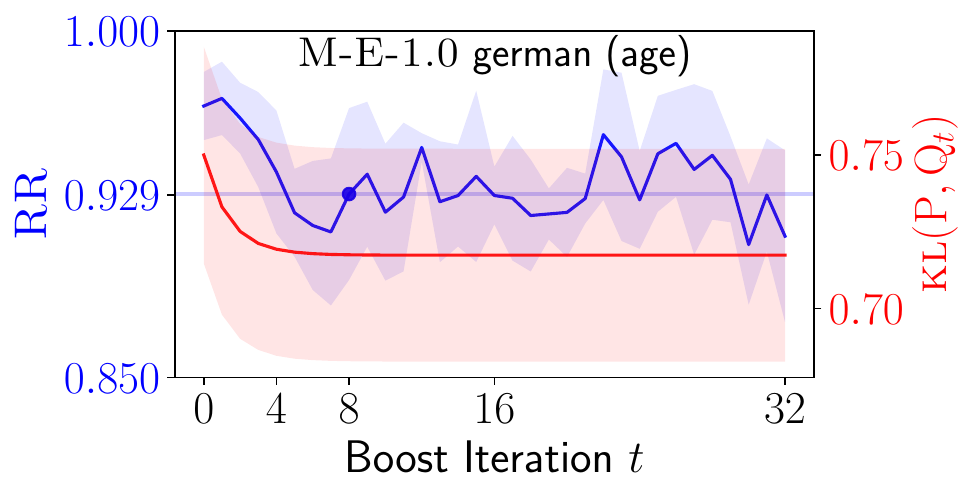}%
    \includegraphics[width=0.24\columnwidth]{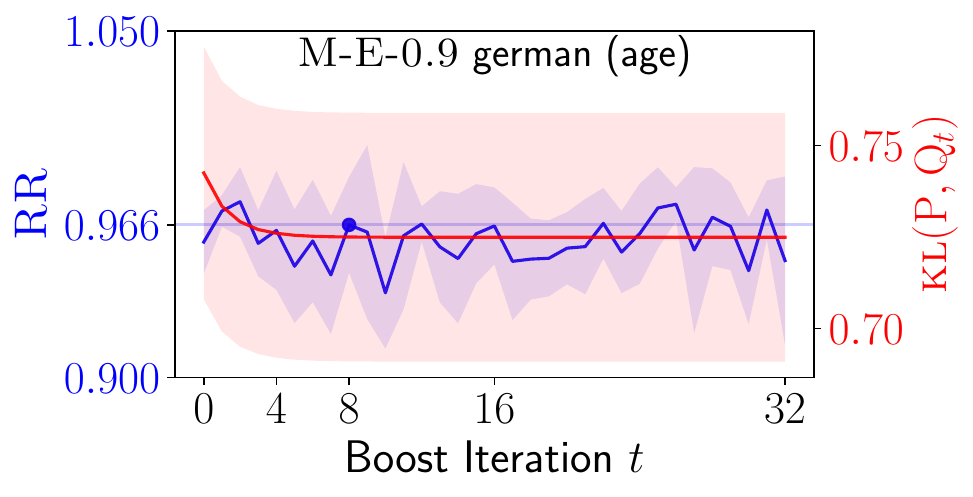}%
    \includegraphics[width=0.24\columnwidth]{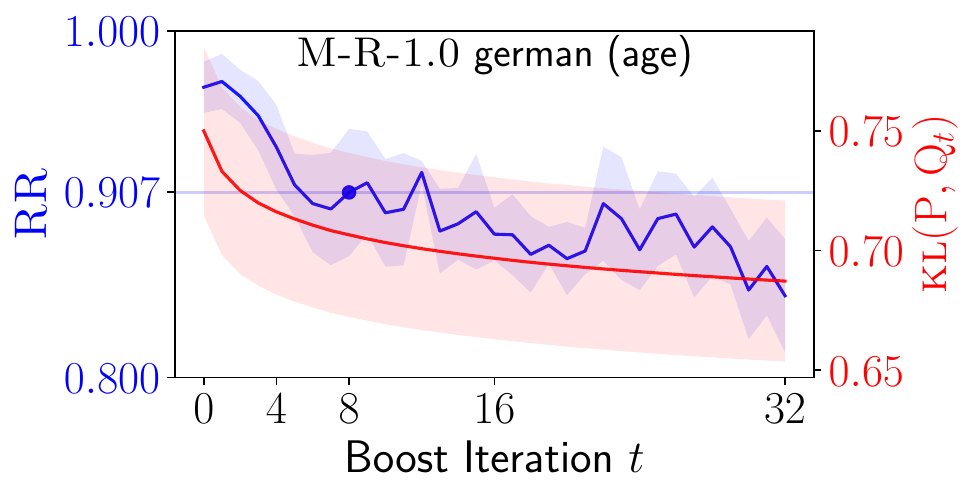}%
    \includegraphics[width=0.24\columnwidth]{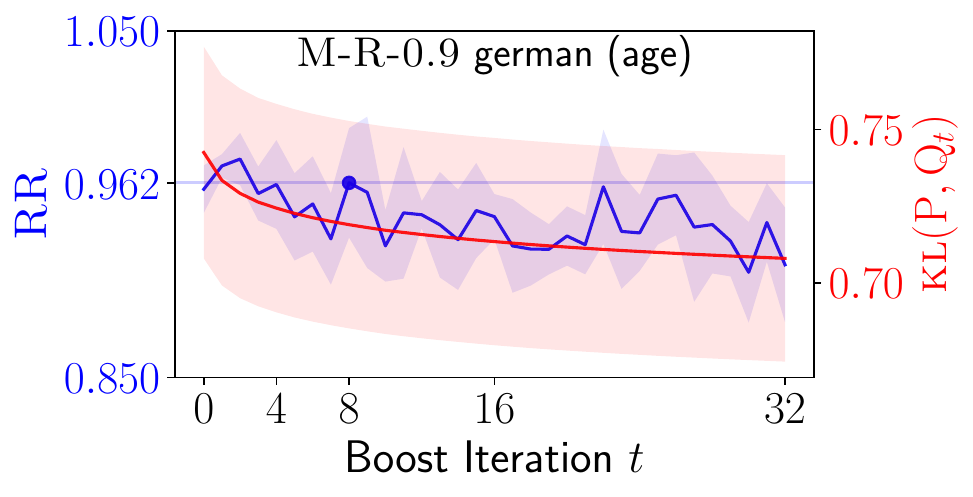}%
    \caption{All RR vs KL over boosting iterations plots for \dutch and \german. horizontal line depicts the \( t = 8 \) rr value.}
    \label{fig:rr_all_plots_extra}
\end{sidewaysfigure}

\begin{sidewaysfigure}[t]
    \centering
    \includegraphics[width=0.24\columnwidth]{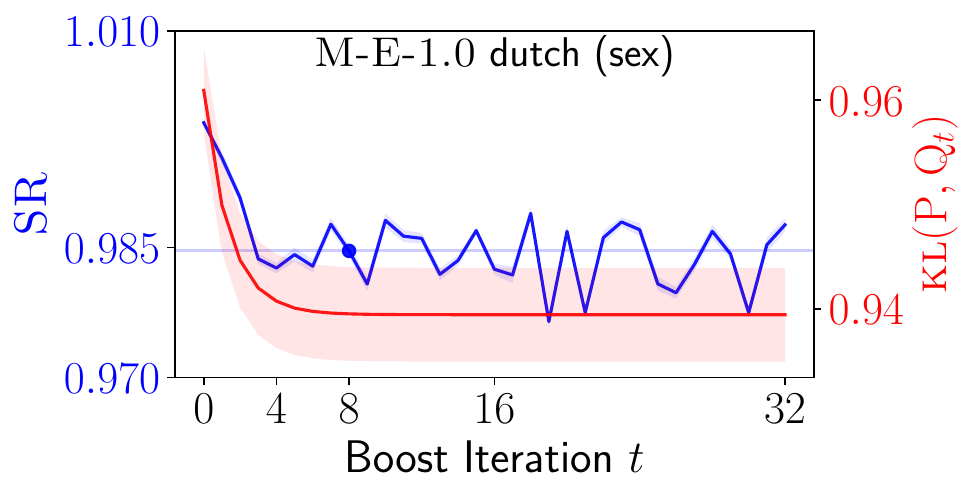}%
    \includegraphics[width=0.24\columnwidth]{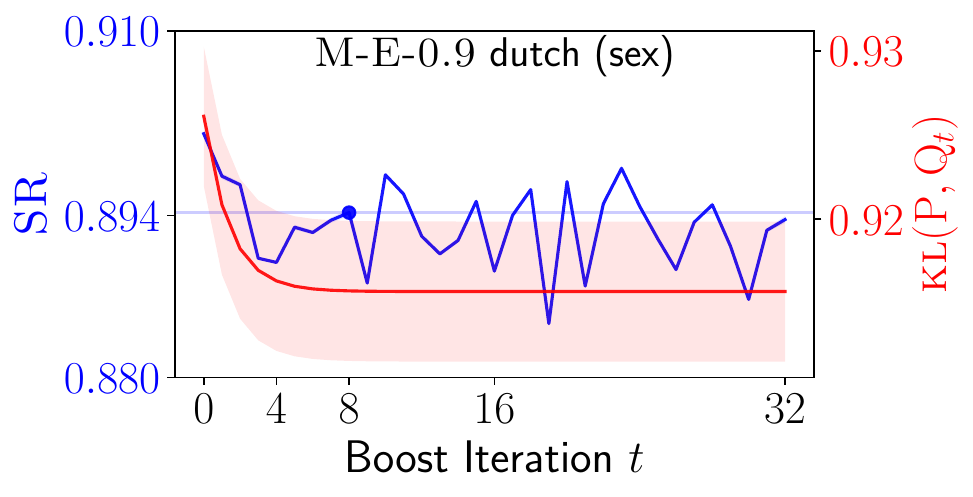}%
    \includegraphics[width=0.24\columnwidth]{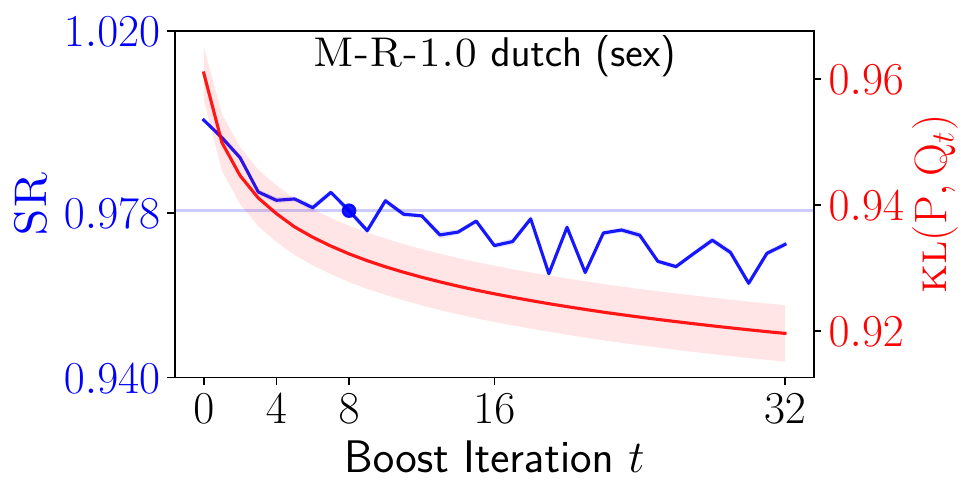}%
    \includegraphics[width=0.24\columnwidth]{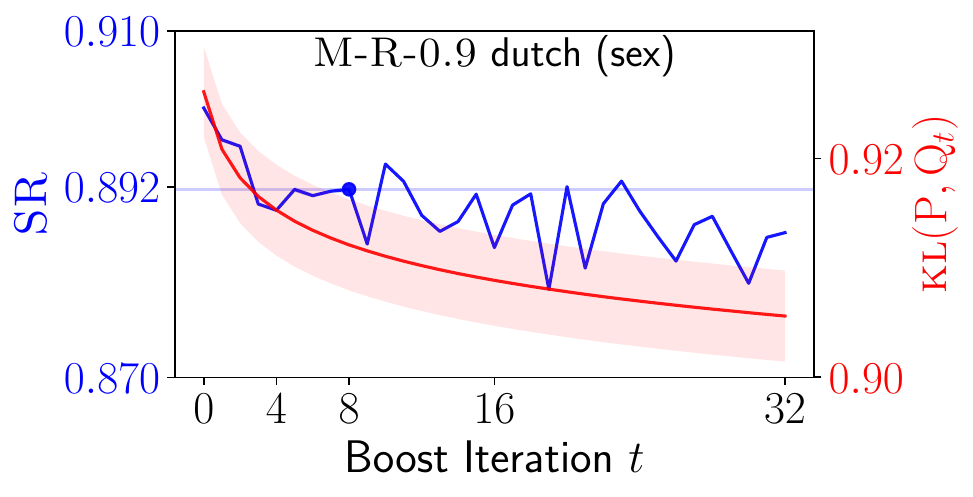}%
    \\
    \includegraphics[width=0.24\columnwidth]{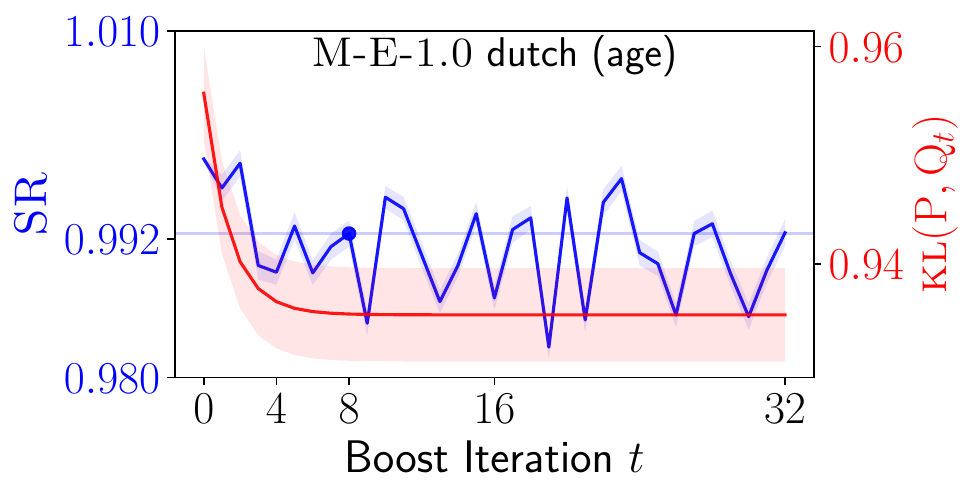}%
    \includegraphics[width=0.24\columnwidth]{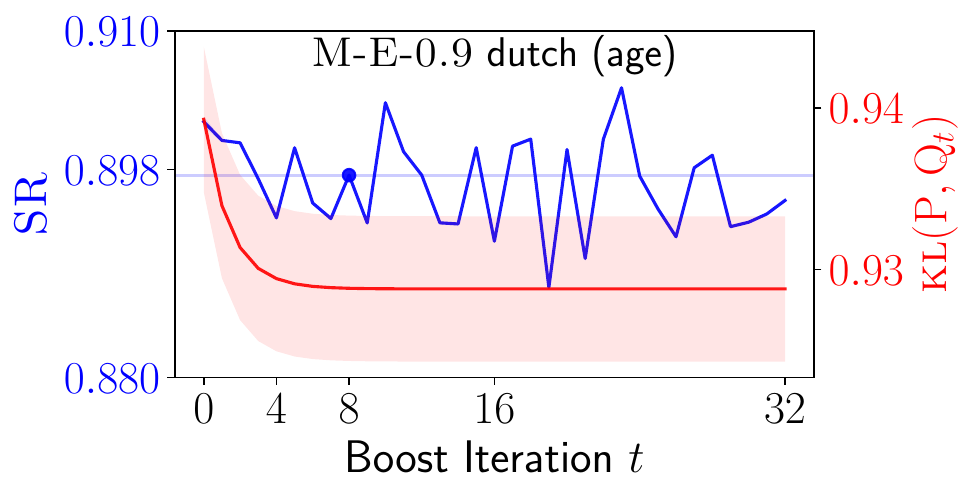}%
    \includegraphics[width=0.24\columnwidth]{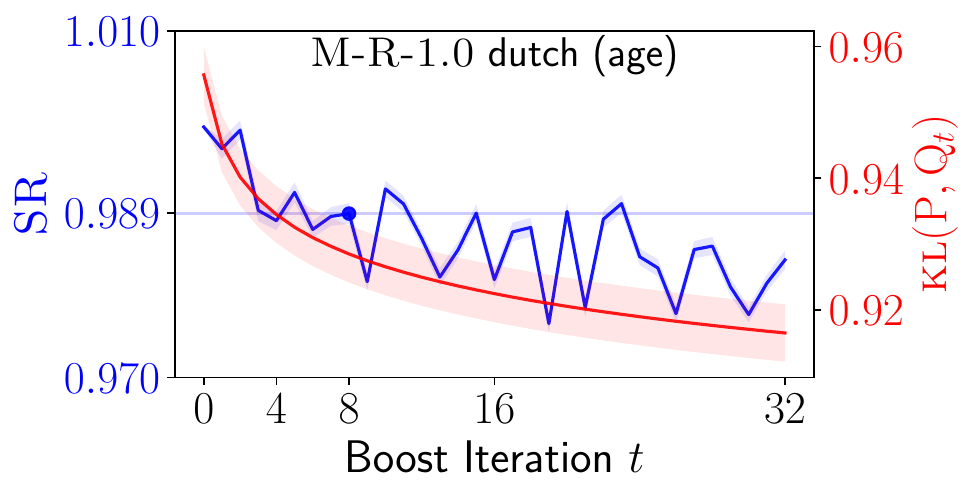}%
    \includegraphics[width=0.24\columnwidth]{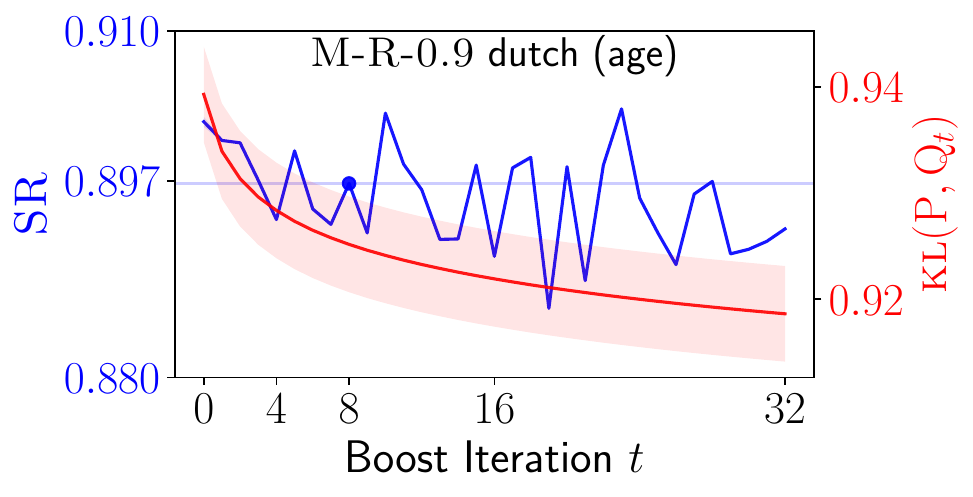}%
    \\
    \includegraphics[width=0.24\columnwidth]{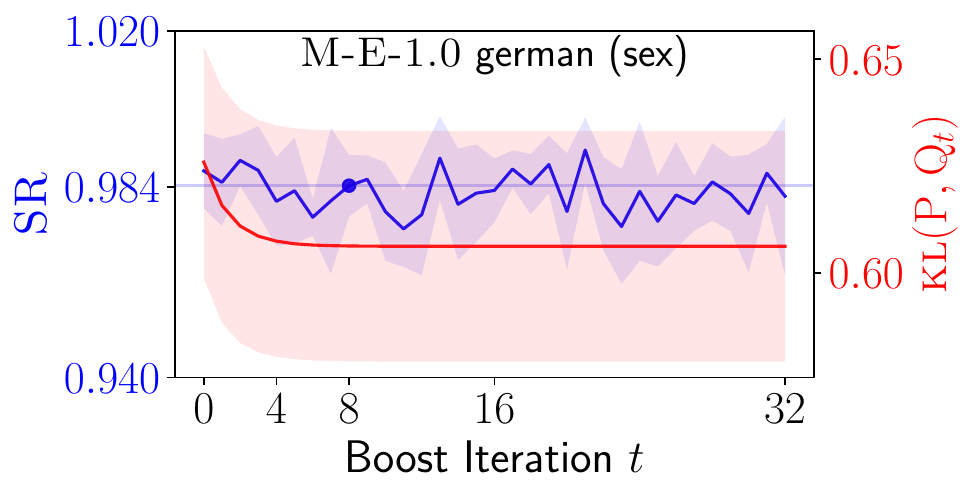}%
    \includegraphics[width=0.24\columnwidth]{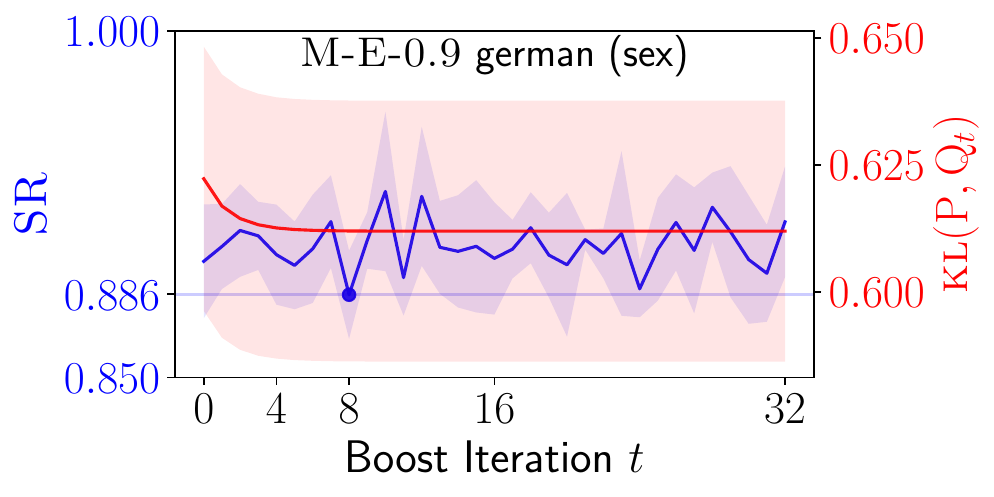}%
    \includegraphics[width=0.24\columnwidth]{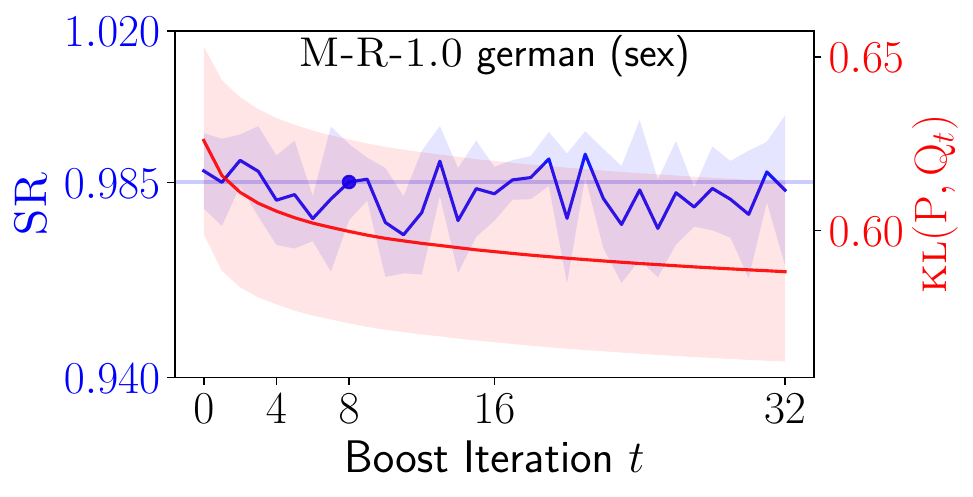}%
    \includegraphics[width=0.24\columnwidth]{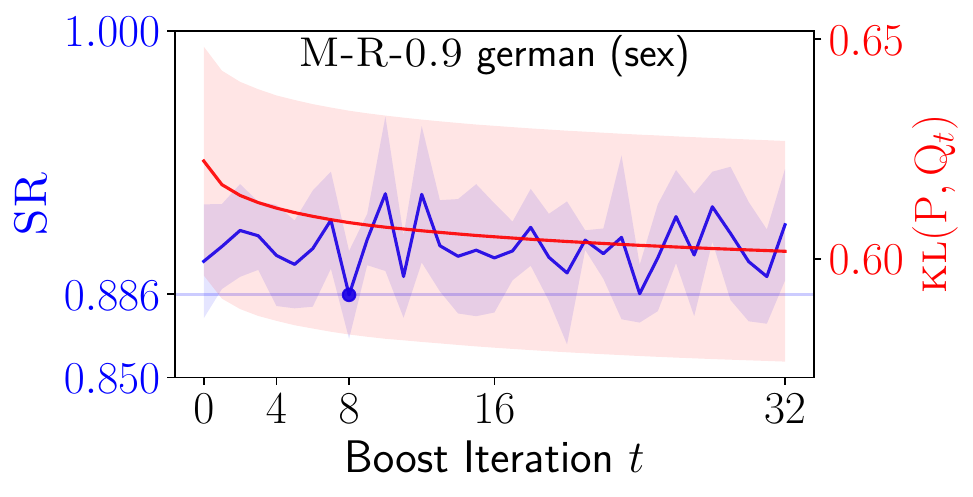}%
    \\
    \includegraphics[width=0.24\columnwidth]{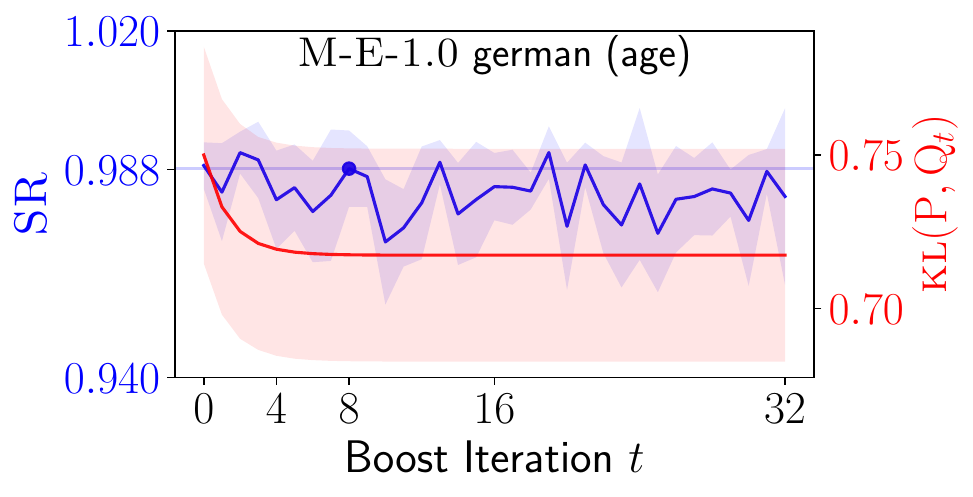}%
    \includegraphics[width=0.24\columnwidth]{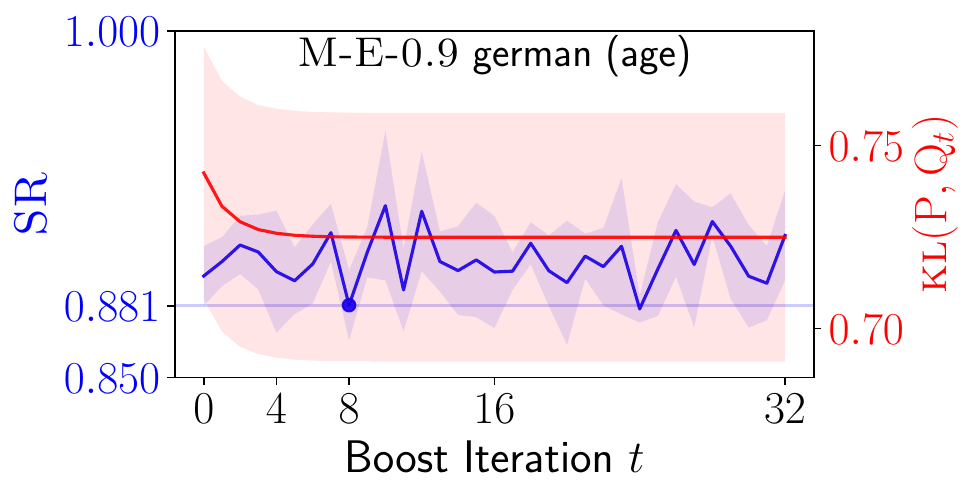}%
    \includegraphics[width=0.24\columnwidth]{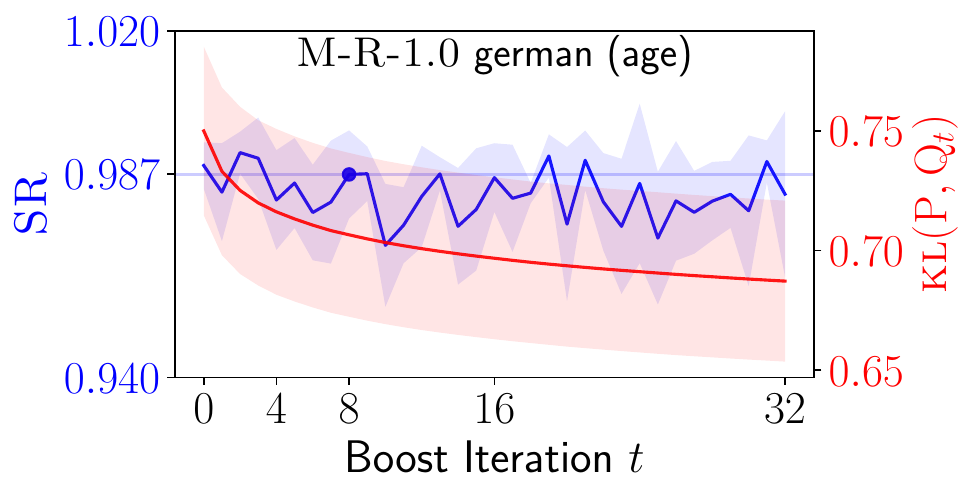}%
    \includegraphics[width=0.24\columnwidth]{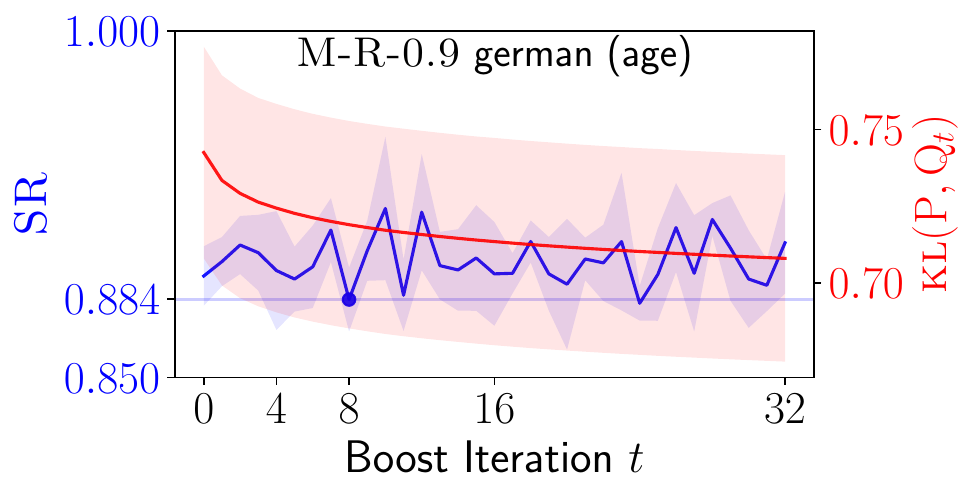}%
    \caption{All SR vs KL over boosting iterations plots for \dutch and \german. horizontal line depicts the \( t = 8 \) sr value.}
    \label{fig:sr_all_plots_extra}
\end{sidewaysfigure}

\begin{sidewaysfigure}[t]
    \centering
    \includegraphics[width=0.24\columnwidth]{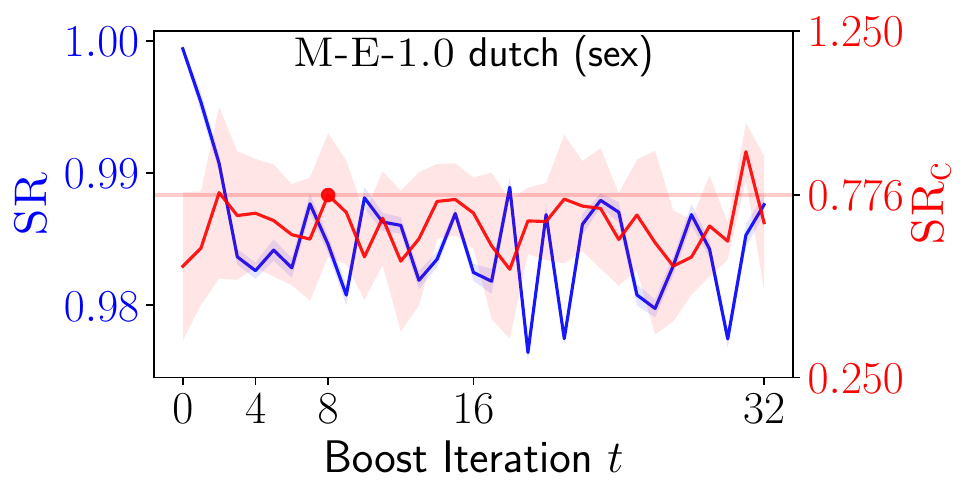}%
    \includegraphics[width=0.24\columnwidth]{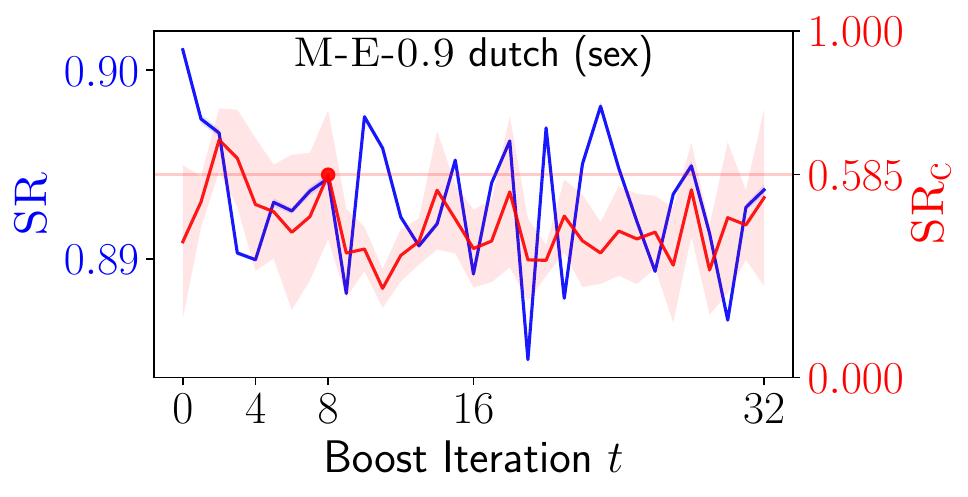}%
    \includegraphics[width=0.24\columnwidth]{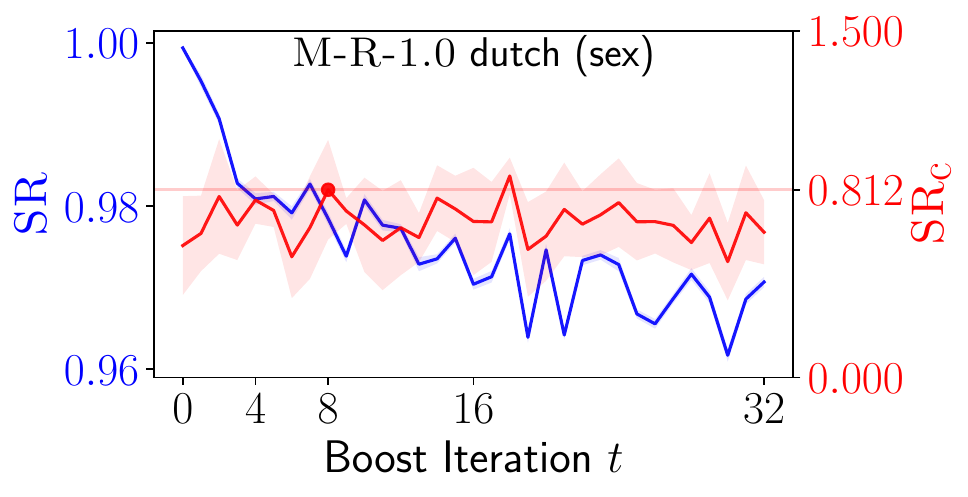}%
    \includegraphics[width=0.24\columnwidth]{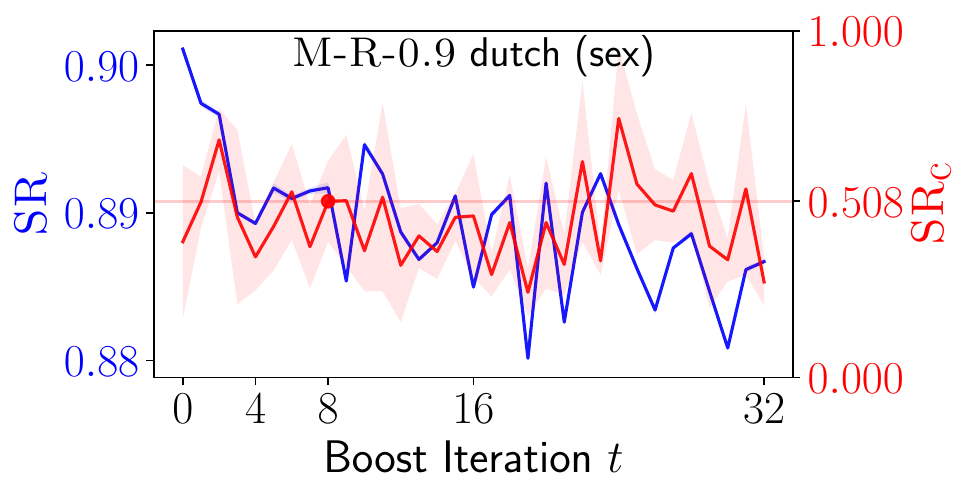}%
    \\
    \includegraphics[width=0.24\columnwidth]{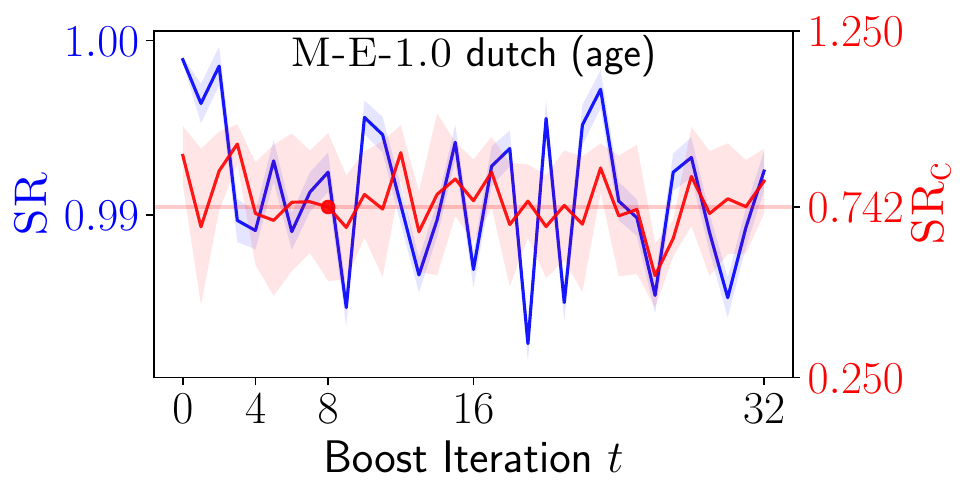}%
    \includegraphics[width=0.24\columnwidth]{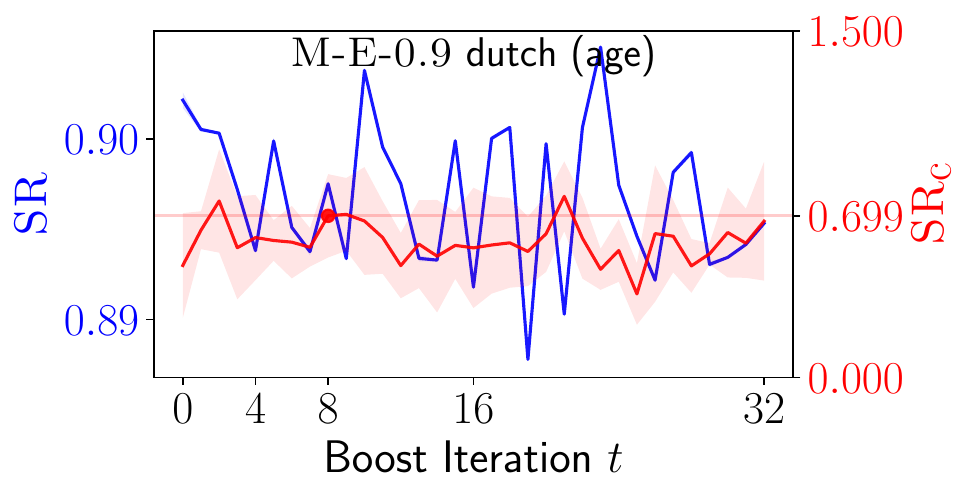}%
    \includegraphics[width=0.24\columnwidth]{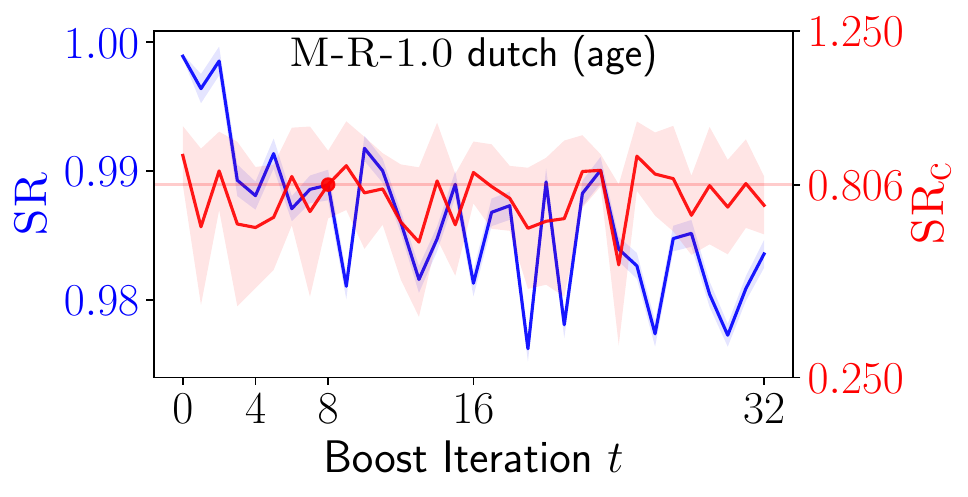}%
    \includegraphics[width=0.24\columnwidth]{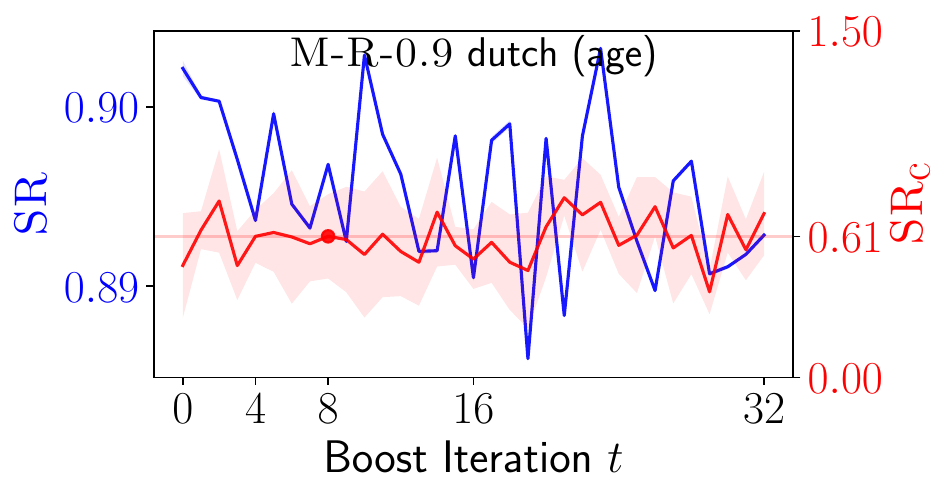}%
    \\
    \includegraphics[width=0.24\columnwidth]{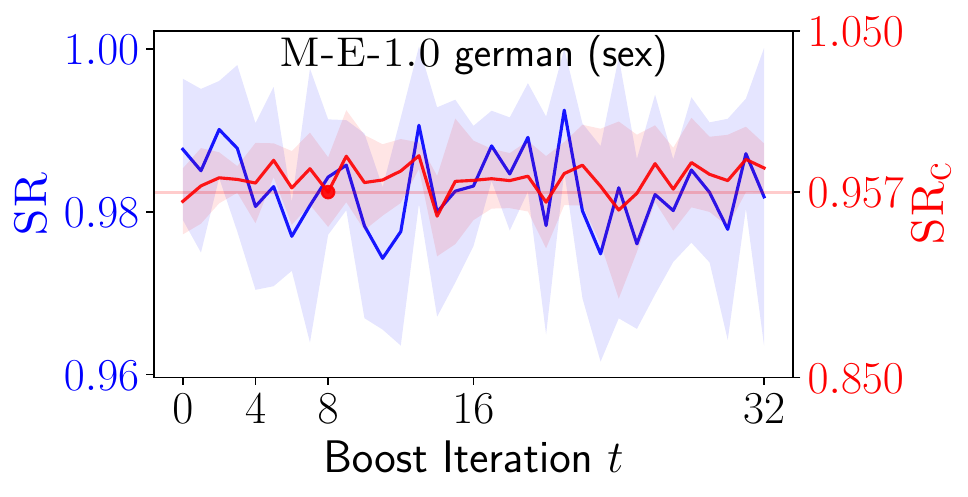}%
    \includegraphics[width=0.24\columnwidth]{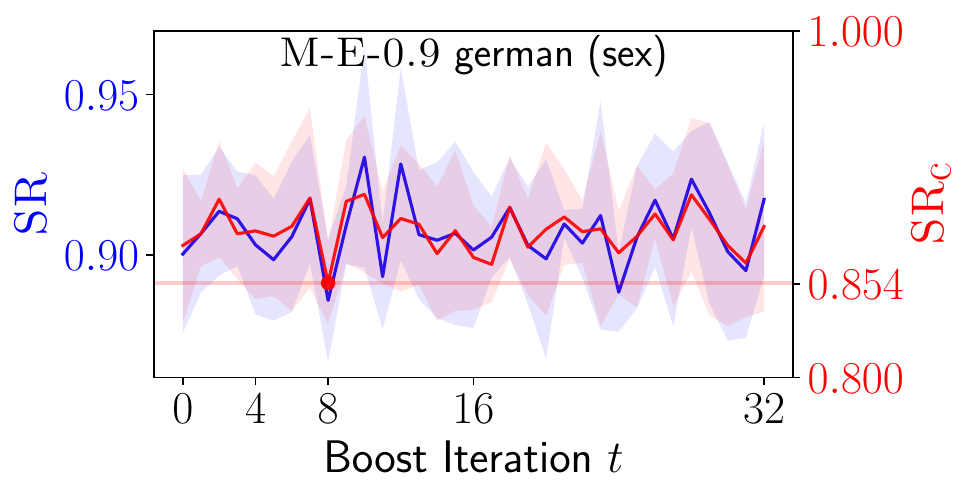}%
    \includegraphics[width=0.24\columnwidth]{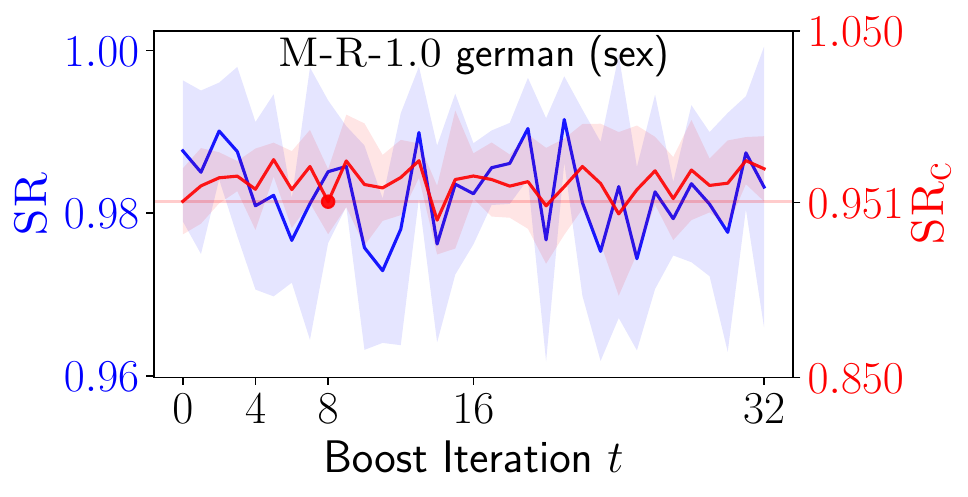}%
    \includegraphics[width=0.24\columnwidth]{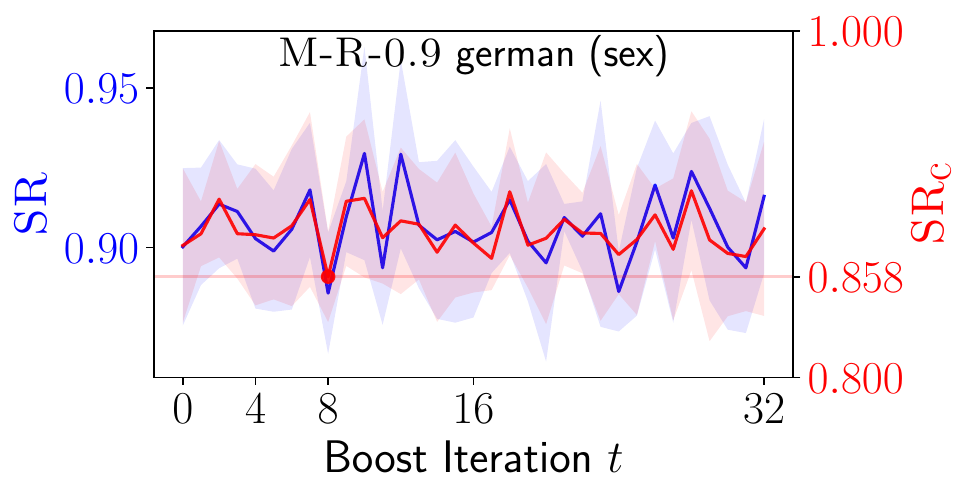}%
    \\
    \includegraphics[width=0.24\columnwidth]{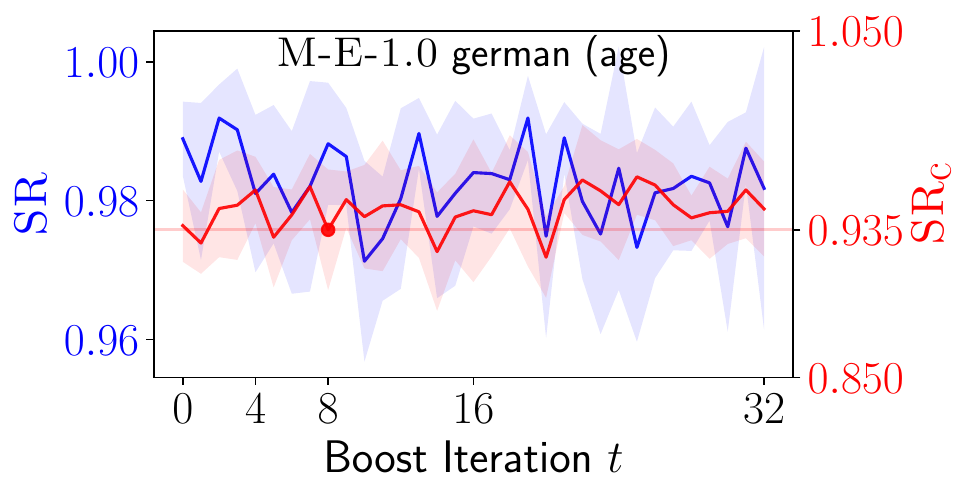}%
    \includegraphics[width=0.24\columnwidth]{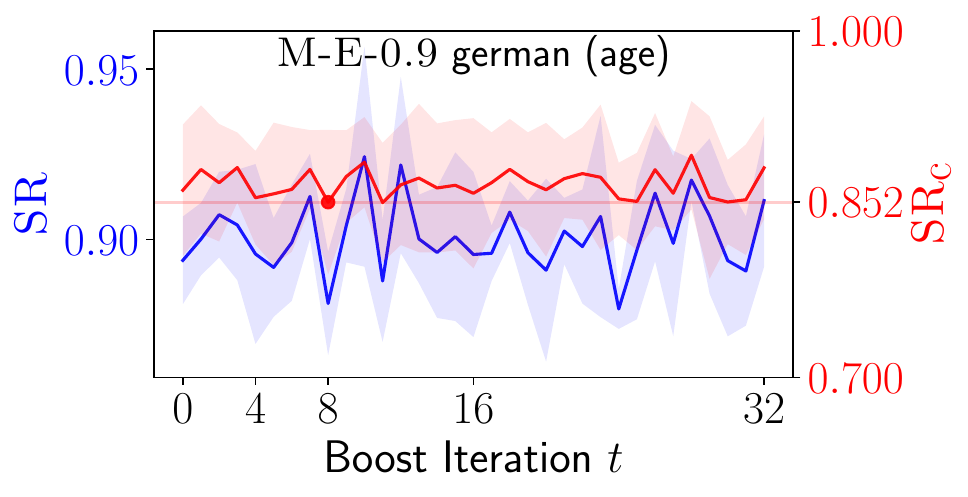}%
    \includegraphics[width=0.24\columnwidth]{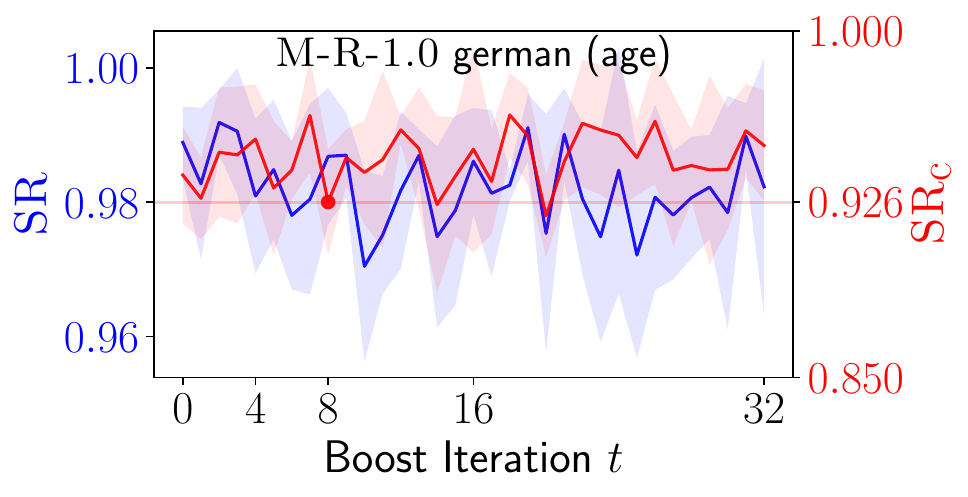}%
    \includegraphics[width=0.24\columnwidth]{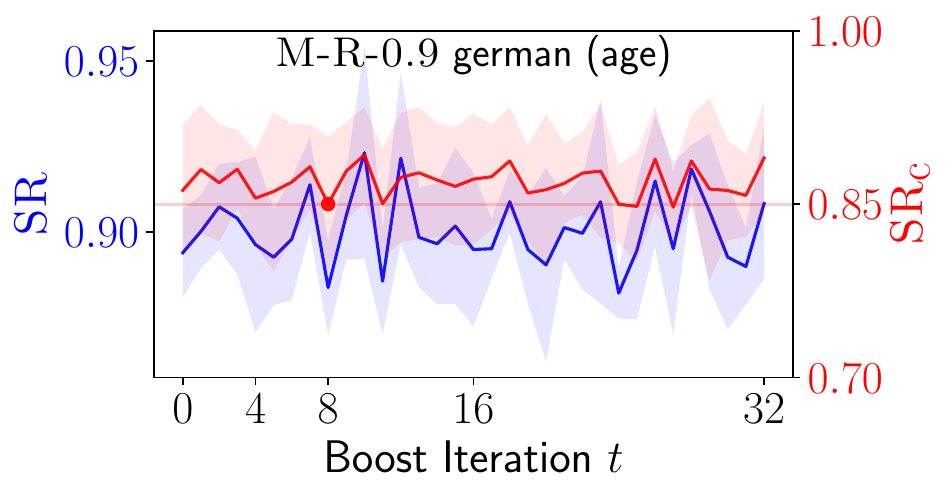}%
    \caption{All SR vs Classification SR over boosting iterations plots for \dutch and \german. Horizontal line depicts the \( T = 8 \) \( \SR_{\textrm{c}}\) value.}
    \label{fig:srclf_all_plots_extra}
\end{sidewaysfigure}

\section{\dutch and \german Without Mixing Priors}
\label{sec:dutch_german_without_mixing}

Experimentally, in \cref{sec:extra_discrete_experiments} we utilize a initial distribution mixing with \( \meas{Q}_{\prior} \) as a uniform distribution and \( \alpha = 1 / 2\) (as per \cref{sec:mixing_prior}). In this case, our initial distribution is very similar to the prior distribution utilized in \citet[Section 3.1 ``Prior distributions'']{ckvDP}. This seems to explain why \celisa performs so well in KL for \dutch and \german datasets, where \tabfairgan does not.

Indeed, \cref{tab:no_mixing_experiments_table} presents a repeat of those experiments without prior mixing. In this case, the KL divergence is significantly worse. Despite this, it should also be noted that the empirical distribution's KL (comparing the training set vs the test set) is also quite large.

\begin{table*}[t]
    \caption{\fbde~(\(T = 32\)) for \dutch and \german datasets without mixing. The table reports the mean and s.t.d.}
    \label{tab:no_mixing_experiments_table}
    \begin{center}
    \begin{small}
    \begin{sc}
    \begin{tabularx}{\textwidth}{LLPDDDDD} 
        \toprule
        && & Data & \mollifiera & \mollifierb & \mollifierc & \mollifierd \\
        \midrule

{\multirow{6}{*}{\rotatebox[origin=c]{90}{\dutch}}} &
{\multirow{3}{*}{\rotatebox[origin=c]{90}{sex}}} 
& $ \textrm{RR} $                 & 
\(.996 \pm .002\) & \(.984 \pm .000\) & \(.992 \pm .000\) & \(.977 \pm .000\) & \(.990 \pm .000\) \\
&& $ \textrm{SR} $                & 
\(.523 \pm .002\) & \(.978 \pm .000\) & \(.889 \pm .000\) & \(.952 \pm .000\) & \(.878 \pm .000\) \\
&& $ \textrm{KL} $                & 
\(1.715 \pm .021\) & \(1.797 \pm .022\) & \(1.768 \pm .022\) & \(1.788 \pm .022\) & \(1.765 \pm .022\) \\

        \cmidrule(l{5pt}){2-8}

&
{\multirow{3}{*}{\rotatebox[origin=c]{90}{age}}} 
& $ \textrm{RR} $                 &  
\(.621 \pm .001\) & \(.966 \pm .000\) & \(.985 \pm .000\) & \(.938 \pm .000\) & \(.969 \pm .000\) \\
&& $ \textrm{SR} $                &  
\(.610 \pm .003\) & \(.979 \pm .000\) & \(.889 \pm .000\) & \(.958 \pm .000\) & \(.880 \pm .000\) \\
&& $ \textrm{KL} $                &  
\(1.716 \pm .021\) & \(1.773 \pm .022\) & \(1.761 \pm .022\) & \(1.767 \pm .022\) & \(1.758 \pm .022\) \\

        \midrule

{\multirow{6}{*}{\rotatebox[origin=c]{90}{\german}}} &
{\multirow{3}{*}{\rotatebox[origin=c]{90}{sex}}} 
& $ \textrm{RR} $                 & 
\(.449 \pm .014\) & \(.933 \pm .036\) & \(.959 \pm .027\) & \(.896 \pm .039\) & \(.943 \pm .039\) \\
&& $ \textrm{SR} $                & 
\(.897 \pm .018\) & \(.983 \pm .018\) & \(.919 \pm .024\) & \(.985 \pm .014\) & \(.917 \pm .024\) \\
&& $ \textrm{KL} $                & 
\(1.178 \pm .278\) & \(1.245 \pm .286\) & \(1.246 \pm .286\) & \(1.238 \pm .285\) & \(1.242 \pm .286\) \\

        \cmidrule(l{5pt}){2-8}

&
{\multirow{3}{*}{\rotatebox[origin=c]{90}{age}}} 
& $ \textrm{RR} $                 & 
\(.235 \pm .008\) & \(.906 \pm .037\) & \(.950 \pm .037\) & \(.838 \pm .032\) & \(.908 \pm .033\) \\
&& $ \textrm{SR} $                & 
\(.794 \pm .029\) & \(.984 \pm .018\) & \(.910 \pm .020\) & \(.983 \pm .015\) & \(.905 \pm .022\) \\
&& $ \textrm{KL} $                & 
\(1.192 \pm .294\) & \(1.382 \pm .264\) & \(1.386 \pm .266\) & \(1.360 \pm .265\) & \(1.374 \pm .266\) \\

        \bottomrule
    \end{tabularx}
    \end{sc}
    \end{small}
    \end{center}
\end{table*}
\section{In-Processing Experiments}
\label{sec:in_processing_experiments}

To evaluate \fbde learned predictions against in-processing algorithms, we consider the following in-processing approaches:
\begin{itemize}
    \item \faircons: from \citet{zvggFC};
    \item \reduct: from \citet{abdlwAR};
    \item \fairglm: from \citet{do2022fair}.
\end{itemize}
We utilize \citet{do2022fair}'s implementation of these in-processing algorithms, provided in \url{www.github.com/hyungrok-do/fair-glm-cvx}. \cref{tab:inprocessing} provides a summary of these comparisons.

\begin{table*}[t]
    \caption{\fbde~(\(T = 32\)) for \compas, \adult, \dutch, and \german datasets compared against in-processing algorithms. The table reports the mean and s.t.d.}
    \label{tab:inprocessing}
    \scriptsize
    \begin{center}
    \begin{sc}
    \begin{tabularx}{\textwidth}{LLPDDDDDDDD} 
        \toprule
        && & Data & \mollifiera & \mollifierb & \mollifierc & \mollifierd & \faircons & \reduct & \fairglm  \\
        \midrule

{\multirow{6}{*}{\rotatebox[origin=c]{90}{\compas}}} &
{\multirow{3}{*}{\rotatebox[origin=c]{90}{sex}}} 
& $ \textrm{SR}_{\textrm{c}}$     &  
\(.726 \pm .025\) & \(.952 \pm .006\) & \(.874 \pm .009\) & \(.938 \pm .019\) & \(.874 \pm .016\) & \(.980 \pm .012\) & \(.874 \pm .021\) & \(.874 \pm .016\) \\
&& $ \textrm{EO} $                &  
\(.764 \pm .031\) & \(.966 \pm .021\) & \(.905 \pm .027\) & \(.963 \pm .027\) & \(.907 \pm .030\) & \(.981 \pm .012\) & \(.899 \pm .028\) & \(.899 \pm .027\) \\
&& $ \textrm{Acc} $               &  
\(.660 \pm .004\) & \(.651 \pm .008\) & \(.654 \pm .005\) & \(.657 \pm .009\) & \(.655 \pm .008\) & \(.573 \pm .027\) & \(.665 \pm .009\) & \(.667 \pm .007\) \\

        \cmidrule(l{5pt}){2-11}

&
{\multirow{3}{*}{\rotatebox[origin=c]{90}{race}}} 
& $ \textrm{SR}_{\textrm{c}}$     &  
\(.747 \pm .020\) & \(.959 \pm .025\) & \(.875 \pm .025\) & \(.945 \pm .027\) & \(.872 \pm .024\) & \(.997 \pm .001\) & \(.822 \pm .043\) & \(.797 \pm .008\) \\
&& $ \textrm{EO} $                &  
\(.781 \pm .034\) & \(.960 \pm .026\) & \(.900 \pm .041\) & \(.950 \pm .028\) & \(.895 \pm .039\) & \(.987 \pm .007\) & \(.858 \pm .033\) & \(.838 \pm .033\) \\
&& $ \textrm{Acc} $               &  
\(.660 \pm .004\) & \(.641 \pm .014\) & \(.653 \pm .012\) & \(.642 \pm .010\) & \(.656 \pm .012\) & \(.515 \pm .016\) & \(.664 \pm .009\) & \(.664 \pm .009\) \\
        \midrule

{\multirow{6}{*}{\rotatebox[origin=c]{90}{\adult}}} &
{\multirow{3}{*}{\rotatebox[origin=c]{90}{sex}}} 
& $ \textrm{SR}_{\textrm{c}}$     &  
\(.360 \pm .003\) & \(.818 \pm .010\) & \(.766 \pm .010\) & \(.793 \pm .011\) & \(.753 \pm .008\) & \(.920 \pm .013\) & \(.814 \pm .011\) & \(.857 \pm .009\) \\
&& $ \textrm{EO} $                &  
\(.471 \pm .008\) & \(.959 \pm .016\) & \(.908 \pm .018\) & \(.935 \pm .016\) & \(.895 \pm .016\) & \(.983 \pm .009\) & \(.964 \pm .023\) & \(.979 \pm .018\) \\
&& $ \textrm{Acc} $               &  
\(.803 \pm .003\) & \(.785 \pm .002\) & \(.788 \pm .002\) & \(.787 \pm .002\) & \(.788 \pm .002\) & \(.786 \pm .003\) & \(.789 \pm .002\) & \(.787 \pm .003\) \\

        \cmidrule(l{5pt}){2-11}

&
{\multirow{3}{*}{\rotatebox[origin=c]{90}{race}}} 
& $ \textrm{SR}_{\textrm{c}}$     &  
\(.600 \pm .031\) & \(.867 \pm .035\) & \(.808 \pm .031\) & \(.860 \pm .034\) & \(.803 \pm .035\) & \(.806 \pm .020\) & \(.817 \pm .018\) & \(.843 \pm .015\) \\
&& $ \textrm{EO} $                &  
\(.787 \pm .053\) & \(.933 \pm .019\) & \(.960 \pm .037\) & \(.935 \pm .017\) & \(.957 \pm .034\) & \(.964 \pm .036\) & \(.969 \pm .027\) & \(.970 \pm .029\) \\
&& $ \textrm{Acc} $               &  
\(.803 \pm .003\) & \(.800 \pm .002\) & \(.801 \pm .003\) & \(.800 \pm .002\) & \(.801 \pm .003\) & \(.803 \pm .002\) & \(.803 \pm .003\) & \(.803 \pm .002\) \\

        \midrule

{\multirow{6}{*}{\rotatebox[origin=c]{90}{\dutch}}} &
{\multirow{3}{*}{\rotatebox[origin=c]{90}{sex}}} 
& $ \textrm{SR}_{\textrm{c}}$     &  
\(.464 \pm .022\) & \(.696 \pm .193\) & \(.518 \pm .255\) & \(.628 \pm .139\) & \(.275 \pm .068\) & \(1.00 \pm .000\) & \(1.00 \pm .000\) & \(1.00 \pm .000\) \\
&& $ \textrm{EO} $                &  
\(.890 \pm .006\) & \(.973 \pm .005\) & \(.986 \pm .008\) & \(.980 \pm .007\) & \(.987 \pm .007\) & \(.821 \pm .011\) & \(.958 \pm .012\) & \(.938 \pm .008\) \\
&& $ \textrm{Acc} $               &  
\(.827 \pm .001\) & \(.763 \pm .003\) & \(.778 \pm .004\) & \(.770 \pm .003\) & \(.781 \pm .001\) & \(.771 \pm .004\) & \(.818 \pm .003\) & \(.816 \pm .003\) \\

        \cmidrule(l{5pt}){2-11}

&
{\multirow{3}{*}{\rotatebox[origin=c]{90}{age}}} 
& $ \textrm{SR}_{\textrm{c}}$     &  
\(.695 \pm .030\) & \(.816 \pm .094\) & \(.676 \pm .257\) & \(.747 \pm .084\) & \(.709 \pm .180\) & \(1.00 \pm .000\) & \(1.00 \pm .000\) & \(1.00 \pm .000\) \\
&& $ \textrm{EO} $                &  
\(.892 \pm .009\) & \(.993 \pm .005\) & \(.960 \pm .007\) & \(.987 \pm .007\) & \(.961 \pm .009\) & \(.929 \pm .005\) & \(.912 \pm .011\) & \(.912 \pm .010\) \\
&& $ \textrm{Acc} $               &  
\(.827 \pm .001\) & \(.781 \pm .004\) & \(.788 \pm .004\) & \(.785 \pm .005\) & \(.790 \pm .005\) & \(.797 \pm .003\) & \(.835 \pm .003\) & \(.827 \pm .002\) \\
        \midrule

{\multirow{6}{*}{\rotatebox[origin=c]{90}{\german}}} &
{\multirow{3}{*}{\rotatebox[origin=c]{90}{sex}}} 
& $ \textrm{SR}_{\textrm{c}}$     &  
\(.903 \pm .055\) & \(.971 \pm .014\) & \(.887 \pm .049\) & \(.970 \pm .019\) & \(.886 \pm .050\) & \(.997 \pm .001\) & \(.939 \pm .004\) & \(.936 \pm .024\) \\
&& $ \textrm{EO} $                &  
\(.928 \pm .055\) & \(.980 \pm .021\) & \(.909 \pm .046\) & \(.969 \pm .023\) & \(.907 \pm .051\) & \(.991 \pm .008\) & \(.929 \pm .002\) & \(.923 \pm .019\) \\
&& $ \textrm{Acc} $               &  
\(.690 \pm .022\) & \(.712 \pm .011\) & \(.705 \pm .011\) & \(.714 \pm .019\) & \(.706 \pm .007\) & \(.712 \pm .008\) & \(.700 \pm .015\) & \(.700 \pm .015\) \\

        \cmidrule(l{5pt}){2-11}

&
{\multirow{3}{*}{\rotatebox[origin=c]{90}{age}}} 
& $ \textrm{SR}_{\textrm{c}}$     &  
\(.802 \pm .067\) & \(.947 \pm .027\) & \(.881 \pm .045\) & \(.950 \pm .024\) & \(.890 \pm .048\) & \(.978 \pm .018\) & \(.872 \pm .057\) & \(.910 \pm .028\) \\
&& $ \textrm{EO} $                &  
\(.823 \pm .078\) & \(.940 \pm .028\) & \(.894 \pm .059\) & \(.943 \pm .027\) & \(.914 \pm .058\) & \(.963 \pm .018\) & \(.879 \pm .077\) & \(.916 \pm .047\) \\
&& $ \textrm{Acc} $               &  
\(.690 \pm .022\) & \(.700 \pm .016\) & \(.693 \pm .020\) & \(.697 \pm .013\) & \(.707 \pm .011\) & \(.698 \pm .015\) & \(.690 \pm .015\) & \(.698 \pm .017\) \\

        \bottomrule
    \end{tabularx}
    \end{sc}
    \end{center}
\end{table*}








\end{document}